\documentclass[onefignum,onetabnum]{siamart251216}

\usepackage{makecell}
\usepackage{siunitx}
\usepackage{booktabs} 
\usepackage{subcaption}
\usepackage{multirow}

\title{A Stabilized Path-Space Approach to Diffusion-Based Posterior Sampling}

\author{Evan Scope Crafts\thanks{Oden Institute for Computational Engineering and Sciences, The University of Texas at Austin, Austin, TX (\email{escopec@utexas.edu}). ESC partially completed this work during an internship at Mitsubishi Electric Research Laboratories (MERL).}
\and Umberto Villa\footnotemark[1] \thanks{Department of Biomedical Engineering, The University of Texas at Austin, Austin, TX (\email{uvilla@austin.utexas.edu}).}
\and Saviz Mowlavi\thanks{Mitsubishi Electric Research Laboratories, Cambridge, MA
  (\email{mowlavi@merl.com}, \email{yma@merl.com}, \email{mansour@merl.com}, \email{wali@merl.com}).}
\and Yanting Ma\footnotemark[3]
\and Hassan Mansour\footnotemark[3]
\and Wael H. Ali\footnotemark[3]}

\ifpdf
\hypersetup{
  pdftitle={A Stabilized Path-Space Approach to Diffusion-Based Posterior Sampling},
  pdfauthor={E. Scope Crafts, U. Villa, S. Mowlavi, Y. Ma, H. Mansour, and W. H. Ali}
}
\fi

\usepackage{xcolor}
\usepackage{booktabs}

\definecolor{todored}{RGB}{200,0,0}
\definecolor{fixorange}{RGB}{230,120,20}
\definecolor{citeblue}{RGB}{0,90,180}
\definecolor{fillpurple}{RGB}{120,0,160}
\definecolor{metagreen}{RGB}{0,130,90}

\usepackage{amssymb}

\newcommand{\bsx}{\mathbf{x}}
\newcommand{\bsy}{\mathbf{y}}

\newcommand{\bsd}{\mathbf{d}}
\newcommand{\bsu}{\mathbf{u}}
\newcommand{\bsw}{\mathbf{w}}
\newcommand{\bsv}{\mathbf{v}}
\newcommand{\bsz}{\mathbf{z}}

\newcommand{\bsA}{\mathbf{A}}

\newcommand{\bsN}{\mathbf{N}}
\newcommand{\bsB}{\mathbf{B}}

\DeclareMathOperator*{\argmax}{arg\,max}
\DeclareMathOperator*{\argmin}{arg\,min}

\newsiamremark{remark}{Remark}
\newsiamremark{hypothesis}{Hypothesis}
\crefname{hypothesis}{Hypothesis}{Hypotheses}
\newsiamthm{claim}{Claim}
\newsiamremark{fact}{Fact}
\newsiamremark{problem}{Problem}
\crefname{fact}{Fact}{Facts}

\begin{document}

\maketitle

\begin{abstract}
Diffusion models provide expressive data-driven priors for Bayesian inverse problems, but many diffusion posterior samplers rely on heuristic guidance approximations that can fail for nonlinear operators and multimodal posteriors. In this work, we develop a stabilized path-space framework for diffusion-based posterior sampling. Starting from a base diffusion process whose terminal marginal represents the prior, we define a likelihood-weighted target measure on trajectories and cast posterior sampling as learning a controlled stochastic process whose path measure matches this target. This formulation connects diffusion posterior sampling to stochastic optimal control while preserving the Bayesian structure needed for uncertainty quantification. We introduce a time reparameterization that makes the path-space control problem well posed by removing the bias induced by the unknown initial value function, without auxiliary training. We then learn the control via a trust-region path-space optimization method with log-variance objectives. The path-space perspective also unifies our learned control approach with existing guidance-based samplers, quantifies the sampling error induced by approximate controls, and yields importance sampling corrections for asymptotically exact posterior expectations. We evaluate the proposed framework on a suite of benchmark inverse problems with analytically characterized or high-quality reference posteriors, enabling principled assessment of sampling accuracy and uncertainty quantification. These experiments provide insight into the behavior of diffusion-based posterior samplers and demonstrate improved accuracy and robustness over leading approaches.
\end{abstract}

\begin{keywords}
diffusion models, posterior sampling, stochastic optimal control, inverse problems, uncertainty quantification
\end{keywords}

\begin{MSCcodes}
60J60, 93E20, 62F15, 65C30, 68T07
\end{MSCcodes}

\section{Introduction}
\label{sec:intro}
Bayesian inverse problems (BIPs) arise in a wide range of scientific and engineering applications, where the goal is to infer an unknown quantity from indirect and noisy observations. In many fields, such as fluid dynamics \cite{cotter2009bayesian}, geophysics \cite{calvetti2018inverse}, medical imaging \cite{tarvainen2013bayesian}, and computational biology \cite{ingraham2023illuminating}, the resulting posterior distribution is high-dimensional and often multi-modal. In such regimes, posterior sampling, rather than point estimation, is essential for reliable uncertainty quantification and downstream decision-making.

A central challenge in BIPs is the specification of informative prior distributions. Classical priors, such as Gaussian or sparsity-based models, are often too restrictive to capture the complex structure of realistic signals. Recent advances in deep generative models have provided powerful data-driven alternatives \cite{oliviero2025generative, bora2017compressed}, enabling the use of expressive priors trained on representative datasets. When incorporated into Bayesian inference, these models have significantly improved performance on challenging inverse problems by encoding rich prior information unavailable to traditional approaches.

Among learned priors, diffusion models \cite{ho_2020_denoising, song2019generative, song2021scorebased} have become the predominant paradigm for modeling complex, high-dimensional distributions, particularly in image and video domains. These models define a time-indexed transport process that gradually transforms a simple reference distribution (typically Gaussian) into the complex data distribution \cite{song2019generative, song2021scorebased}, and can be viewed as part of a broader measure-transport perspective on sampling \cite{marzouk2016sampling}.
While more recent variants such as flow matching \cite{lipman2023flow} and stochastic interpolants \cite{albergo2025stochastic} have emerged with different theoretical foundations \cite{peluchetti2023non}, 
we refer to all these dynamics-based generative models as diffusion-based models throughout this work, as they share the same fundamental principle of dynamical measure transport between distributions, as detailed in Section~\ref{sec:background} and highlighted in \cite{gao2025diffusion, domingoadjoint2025}. When a trained diffusion model is used to approximate the prior distribution, posterior sampling amounts to modifying the dynamics to incorporate likelihood information, giving rise to a broad class of diffusion-based posterior sampling methods.

\subsection{Diffusion-Based Posterior Sampling}

Existing diffusion-based posterior samplers can be broadly categorized into two paradigms. The first modifies the reverse diffusion dynamics directly using likelihood guidance \cite{chung2023diffusion, song2022solving, song2023pseudoinverseguided, chung2022improving, bansal2023universal, janati2025bridging}. The second class decouples the prior and data consistency steps. For instance, Decoupled Annealing Posterior Sampling (DAPS) \cite{zhang2025improving} alternates between using the base diffusion process to enforce prior consistency and using Markov Chain Monte Carlo (MCMC) algorithms to enforce data consistency. This decoupling strategy can improve stability in challenging image reconstruction problems.

Despite their widespread adoption, both coupled and decoupled heuristic methods suffer from fundamental limitations. Recent studies \cite{scopecrafts2025benchmarking, zhang2025improving} have shown that both classes of approaches often fail catastrophically in challenging regimes, such as strongly non-linear inverse problems or even linear inverse problems with multi-modal posteriors. Here the core issue is that all these methods rely on the approximation of certain conditional distributions associated with the pre-trained diffusion model (in particular, the \textit{denoising distribution}, see, e.g.,  \cite{scopecrafts2025benchmarking} for a definition) as Gaussian or Dirac distributions. When the true conditionals are complex or multi-modal, these approximations can lead to substantial posterior bias.

These limitations have motivated the development of methods that learn conditional corrections to a pretrained diffusion model rather than relying solely on sampling-time approximations. 
In this class of approaches, conditional score or drift models can be trained from paired data consisting of prior samples and corresponding measurements \cite{dhariwal2021diffusion,didi2023framework,denker2024deft}. While effective when such data are available, these approaches are typically amortized over a prescribed training distribution and may require substantial data and computation for each class of inverse problems.

\subsection{The Path-Space Formulation}
A more flexible approach, which we develop in this work for Bayesian inverse problems, formulates posterior sampling as a measure-matching problem in the \emph{path space} of the diffusion model. 
Starting from a base diffusion process whose terminal marginal represents the prior $\pi_{\mathrm{pr}}(\bsx)$, and given an observation $\bsy$ with likelihood $\pi_{\mathrm{like}}(\bsy|\bsx)$, we define a target path measure by reweighting trajectories according to the likelihood of their terminal state. Posterior sampling is then cast as the problem of finding a controlled diffusion, obtained by adding a drift control, whose path measure matches this likelihood-weighted target. When this matching holds, the terminal marginal of the controlled process is the Bayesian posterior.

This path-space formulation replaces local heuristic score corrections with a variational problem over trajectory measures. One may optimize divergences between the controlled path measure and the likelihood-weighted target, such as Kullback–-Leibler (KL) or cross-entropy (CE) divergences, or related proximal divergences over probability measures \cite{nusken2021solving, richter2024improved, holdijk2023stochastic, baptista2025proximal}. This variational perspective treats posterior sampling as a global optimization problem over trajectory measures, connects naturally to stochastic optimal control and Schrödinger bridge formulations \cite{de2021diffusion, chen2016relation, pavon2023onlocal, zhang2023mean, berner2024an}, and provides a principled foundation for incorporating importance sampling corrections that recover asymptotically exact posterior expectations even when the controlled dynamics are only approximately optimized \cite{zhang2021path}.

The path-space perspective has gained traction in recent work on diffusion-based sampling and fine-tuning. Key algorithmic advances include off-policy log-variance objectives \cite{richter2024improved, nusken2021solving}, connections to reinforcement learning through reward-weighted path measures \cite{uehara2024fine}, and the use of lean adjoint estimators for efficient gradient computation \cite{domingoadjoint2025}.
These tools have led to strong results in image diffusion fine-tuning \cite{domingoadjoint2025} and sampling from multi-particle energy functions \cite{havens2025adjoint,liu2025adjoint}. Our work specializes this path-space viewpoint to Bayesian inverse problems, where the target measure is induced by a likelihood and must have the Bayesian posterior as its terminal marginal.

A fundamental challenge in this path-space formulation is the initial value function bias problem. When the base diffusion couples its initial and terminal states, the likelihood-weighted target path measure generally has an initial marginal that is incompatible with controlled processes initialized from the original reference distribution. Previous work addresses related issues using auxiliary value-function learning \cite{uehara2024fine,tang2024fine,liu2025adjoint} or memoryless noise schedules that remove the mismatch asymptotically \cite{domingoadjoint2025}. In this work, we introduce a simple time reparameterization of the base process that exactly ensures the statistical independence between initial and terminal states. This non-intrusive modification preserves the terminal prior distribution and requires no additional neural network training, enabling posterior sampling to be formulated as a well-posed optimization problem in path space.

Beyond ensuring well-posedness, optimizing divergences between path measures remains challenging in practice. Recent work has shown that unconstrained optimization in path space can lead to unstable updates, particularly in high-dimensional or multi-modal settings \cite{blessing2025trust, guo2025proximal}. To address this, we adopt the optimization approach introduced in \cite{blessing2025trust}, which uses KL divergence-based trust regions to ensure controlled exploration during optimization. This geometric constraint is complemented by the use of log-variance objectives \cite{nusken2021solving, richter2024improved}, which provide an off-policy optimization framework that reduces gradient variance and improves numerical robustness.

While trust-region constraints and log-variance objectives address the practical challenges of optimization, a key contribution of this work is to show that the path-space formulation itself enables a broader interpretation of diffusion-based posterior sampling. In particular, methods based on local score guidance or other heuristic drift modifications can be viewed as implicitly defining approximate path measures through controlled modifications of a common reference diffusion. Within this framework, importance sampling weights arise naturally as Radon–Nikodym derivatives between trajectory measures. 
This enables asymptotically exact posterior expectations even from approximate samplers, providing a unifying correction mechanism for both learned and heuristic diffusion-based posterior sampling methods.

Finally, to complement the theoretical and algorithmic developments described above, we conduct a systematic empirical evaluation using benchmark inverse problems from \cite{scopecrafts2025benchmarking}.  
Across all problem settings, we evaluate sampling performance using a consistent set of complementary criteria, including direct comparisons of generated samples as well as statistical measures that assess each method’s ability to capture both local and global structure of the posterior distribution. The results demonstrate that the path space approach consistently provides state-of-the-art performance, significantly surpassing the performance of leading heuristic approaches.

\subsection{Contributions}

Our key contributions are summarized as follows:

\textbf{A well-posed path-space formulation for posterior sampling.}
We develop a path-space formulation of diffusion-based posterior sampling for Bayesian inverse problems. Starting from a base diffusion process whose terminal marginal represents the prior, we define a likelihood-weighted target path measure whose terminal marginal is the Bayesian posterior. Matching this target measure yields an optimal control, i.e., an additive drift correction to the base process. We identify the initial-value-function bias that can make this target incompatible with controlled processes initialized from the original reference distribution, and introduce a simple time reparameterization that restores the required separability condition while preserving the terminal distributions.

\textbf{Path-space learning and correction of posterior samplers.} To approximate the optimal control in practice, we adopt a trust-region path-space optimization strategy together with log-variance objectives. Within the path-space posterior sampling framework, learned and heuristic diffusion posterior samplers are interpreted as approximate controls. This perspective allows us to quantify the posterior sampling error induced by control error and to derive Radon--Nikodym importance weights that yield asymptotically exact posterior expectations even when the underlying sampler is approximate.

\textbf{Systematic empirical evaluation.} We conduct a comprehensive benchmark study of diffusion-based posterior samplers on a suite of linear and nonlinear inverse problems, including stylized versions of Gaussian random sensing, inpainting, x-ray tomography, and phase retrieval. Our evaluation leverages the path-space perspective to provide rigorous and interpretable comparisons across problems of varying complexity.

\subsection{Outline} The remainder of the paper is organized as follows. Section~\ref{sec:background} reviews SDE-based generative modeling and presents diffusion models, flow-based models, and stochastic interpolants in a common framework. Section~\ref{sec:prob_form} formulates diffusion-based posterior sampling as a path-space measure-matching problem. Section~\ref{sec:solution_approach} presents the proposed approach, including the time reparameterization technique for ensuring well-posedness. Section~\ref{sec:samperror_importsamp} analyzes control-induced sampling error and derives path-space importance weights for posterior expectations. Sections~\ref{sec:num_studies} and~\ref{sec:results} describe the numerical setup and report the experimental results. Section~\ref{sec:discuss_conclude} concludes the paper.

\section{Background: Generative Models as SDEs}
\label{sec:background}

In this section, we briefly describe conditions under which three commonly-used generative modeling frameworks ---diffusion models, flow models, and stochastic interpolants---can be written in the common language of stochastic differential equations (SDEs). 

The relationship between diffusion models and SDEs is well established. In \cite{song2021scorebased}, Song et al. showed that the early discrete-time variants of diffusion models introduced in \cite{song2019generative} and \cite{ho_2020_denoising} could be written as discretizations of a time-reversed SDE. The original SDE, which gradually adds noise to samples from the target data distribution, can be written as 
\begin{equation}
\label{eq:forward_diffusion_sde}
d\bsx_t = \mathbf{g}(\bsx_t, t) \, dt + h(t) \, d \tilde{\bsw}_t, \quad t \in [0, T].
\end{equation}
Here $d \tilde{\bsw}_t$ is a Wiener process moving backwards in time, $\bsx_t \in \mathbb{R}^D$ is the state variable, $\bsx_T$ is sampled from the data distribution,\footnote{Note that we are using the opposite time convention from what is typically employed in diffusion modeling, with $t=0$ here representing the noise distribution. This choice was made for the sake of consistency with the path-space and stochastic control literature.} $\mathbf{g}(\bsx, t): \mathbb{R}^D \times [0, T] \to \mathbb{R}^D$ is a time-indexed drift term, and $h(t): [0, T] \to \mathbb{R}_{+}$ controls the diffusion rate. The time-reversed SDE, which maps a fixed noise distribution to samples from the target data distribution, can be written as 
\begin{equation}
\label{eq:backward_diffusion_sde}
d\bsx_t = [\mathbf{g}(\bsx_t, t) - h^2(t) \; \nabla_{\bsx} \log p_t(\bsx_t)] \, dt +  h(t) \, d\bsw_t.
\end{equation}
Here $\nabla_{\bsx} \log p_t(\bsx_t)$ is the \emph{score} of the time-$t$ marginal distribution of $\bsx_t$, which can be learned from data \cite{vincent2011connection}.

Unlike diffusion models, flow models are inherently deterministic. However, they can be cast into the SDE framework by considering the evolution of their marginal densities. For a flow defined by
\begin{equation}
d\bsx_t = \bsv(\bsx_t, t) \, dt, \quad t \in [0, T],  \label{eq:flow_velocity}
\end{equation}
with $\bsx_0 \sim p_0$ sampled from the noise distribution, the marginal densities $p_t(\bsx_t)$ induced by the above process satisfy the Fokker-Planck equation
\begin{equation}
\frac{\partial p_t(\bsx_t)}{\partial t} = - \nabla_{\bsx} \cdot \left [ \bsv(\bsx_t, t) \, p_t(\bsx_t) \right].
\label{eq:det_fokker_plank}
\end{equation}
For some function $g_s(t): [0, T] \to \mathbb{R}$, Nelson's identity \cite{nelson1966derivation}
\begin{equation}
\frac{1}{2} g_s(t)^2 \Delta_{\bsx} p_t(\bsx_t) = \frac{1}{2} g_s(t)^2 \nabla_{\bsx} \cdot (p_t(\bsx_t) \nabla_{\bsx} \log p_t(\bsx_t))
\label{eq:nelsons_identity}
\end{equation}
can therefore be used to rewrite \eqref{eq:det_fokker_plank} as
\begin{equation*}
\frac{\partial p_t(\bsx_t)}{\partial t} = - \nabla_{\bsx} \cdot \left( (\bsv(\bsx_t, t) + \frac{1}{2} g_s(t)^2 \, \nabla_{\bsx} \log p_t(\bsx_t)) p_t(\bsx_t) \right) + \frac{1}{2} g_s(t)^2 \Delta_{\bsx} p_t(\bsx_t).
\end{equation*}
By construction this new Fokker-Planck equation has the same time-$t$ marginals as \eqref{eq:flow_velocity}, but corresponds to the SDE 
\begin{equation*}
d\bsx_t = \left(\bsv(\bsx_t, t) + \frac{1}{2} g_s(t)^2 \, \nabla_{\bsx} \log p_t(\bsx_t) \right) dt  + g_s(t) \, d\bsw_t,
\end{equation*}
where here $d \bsw_t$ is Brownian motion. Note that for many commonly used flow models, $\nabla_{\bsx} \log p_t(\bsx_t)$ can be obtained analytically given knowledge of $\bsv(\bsx_t, t)$. The above construction thus enables a pre-trained flow model to admit SDE-based solvers. Since SDE solvers may be more robust to error in the learned flow than their deterministic counterparts (see, e.g., Section 2.4 of \cite{albergo2025stochastic}), the use of SDE solvers for flow models has garnered significant research interest in recent years  \cite{singh2024stochastic}. 

Stochastic interpolants \cite{albergo2025stochastic} can be viewed as a generalization of both diffusion and flow models. They are based on the following equation:
\begin{equation*}
\bsx_t = \boldsymbol{I}(\bsx_0, \bsx_T, t) + \gamma(t) \, \bsz,
\end{equation*}
where $\bsz \sim \mathcal{N}(\mathbf{0}, \mathbf{I})$, $\gamma(0) = \gamma(T) = 0$, and the interpolation function $\boldsymbol{I}$ satisfies $\boldsymbol{I}(\bsx_0, \bsx_T, 0) = \bsx_0$ and $\boldsymbol{I}(\bsx_0, \bsx_T, T) = \bsx_T$. In the generative modeling setting, $\bsx_0$ is sampled from a fixed noise distribution and $\bsx_T$ is a sample from the data distribution. If $\gamma(t) > 0$ for all $t \in (0, T)$, then the above equation describes a process that can be modeled as an SDE with drift and diffusion coefficients obtained by taking the limit $\bsx_t - \bsx_{t + \epsilon}$ as $\epsilon \to 0$. If $\gamma(t) = 0$ (e.g., in a rectified flow model \cite{liu2022flow}, which uses the linear interpolation strategy $\boldsymbol{I}(\bsx_0, \bsx_T, t) = (1 - t) \, \bsx_0 + t \, \bsx_T$), then the interpolation process is deterministic over part or all of the trajectory space. However, an SDE can still be obtained using Nelson's identity as described above.

\section{The Path-Space Formulation of Diffusion-Based Posterior Sampling}
\label{sec:prob_form}

In this section, we introduce the diffusion-based posterior sampling problem considered in this work. We then show how this problem can be reformulated as a search for a particular measure over the path space of the diffusion model. 

\subsection{Problem Formulation}

We assume we have access to a \textit{diffusion-type} generative model that approximately samples from the prior distribution $\pi_{\mathrm{pr}}(\bsx)$ of a Bayesian inference problem with unknown parameter $\bsx \in \mathbb{R}^D$. This generative model can be written as the following SDE:
\begin{equation}
\label{eq:base_process}
d\bsx_t = \bsA_b(\bsx_t, t) \, dt + \bsB_b(t) \, d\bsw_t, \quad \bsx_0 \sim \mathcal{N}(\mathbf{0}, \mathbf{I}_D), \quad t \in [0, T]. 
\end{equation}
In the above expression $\bsx_t \in \mathbb{R}^D$ is the SDE state, $\bsA_b(\cdot, \cdot): \mathbb{R}^D \times [0, T] \to \mathbb{R}^D$ is the drift term, $\bsB_b(\cdot): [0, T] \to \mathbb{R}^{D \times D}$ is the diffusion term, and $d \bsw_t$ is standard Brownian motion. We assume that $\bsB_b(t)$ is an invertible matrix for all $t \in [0, T]$, and that $\bsA_b(\bsx_t, t)$ satisfies the ``standard'' Lipschitz and linear growth conditions required for \eqref{eq:base_process} to have a $t$-continuous strong solution (see, e.g., Theorem 5.2.1 of \cite{oksendal2003stochastic}).\footnote{Flow models correspond to $\bsB_b(t) \equiv \mathbf{0}$ for all $t$ and thus violate this assumption. However, as detailed in Section \ref{sec:background}, any flow model can be converted into a non-trivial SDE using Nelson's identity.} Further, if the generative model is trained exactly, we have that $p_{T}(\cdot) = \pi_{\mathrm{pr}}(\cdot)$, where $p_T$ is the time-$T$ marginal distribution of the above diffusion process. In what follows, we refer to this process as the \textit{base process}.

Our goal, given the base diffusion process and a measurement $\bsy$ from a known likelihood function $\pi_{\mathrm{like}}(\bsy \mid \bsx)$, is to obtain samples from the posterior distribution $\pi_{\mathrm{post}}(\bsx \mid \bsy)$, which can be written via Bayes' Theorem as 
\begin{equation}
    \pi_{\mathrm{post}}(\bsx \mid \bsy) \propto \pi_{\mathrm{like}}(\bsy \mid \bsx) \, \pi_{\mathrm{pr}}(\bsx). 
\end{equation}
In particular, we formulate the problem as seeking a measurable function $\bsu_b(\cdot, \cdot): \mathbb{R}^D \times [0, T) \to \mathbb{R}^D$ such that the modified process
\begin{equation}
\label{eq:controlled_process_noreparam}
d\bsx_t^{\bsu_b} =[ \bsA_b(\bsx^{\bsu_b}_t, t) + \bsB_b(t) \, {\bsu_b}\left(\bsx_t^{\bsu_b}, t\right)] \, dt + \bsB_b(t) \; d\bsw_t, \quad \bsx_0^{{\bsu_b}} \sim \mathcal{N}(\mathbf{0}, \mathbf{I}_D), \quad t \in [0, T],
\end{equation}
has the same time-$T$ marginal distribution $p_T^{{\bsu_b}}$ as the posterior distribution, i.e., $\bsx_T^{{\bsu_b}} \sim \pi_{\mathrm{post}}(\bsx \mid \bsy)$. Note that the $\bsB_b(t)$ term in front of the control can be dropped without loss of generality, as we have assumed $\bsB_b(t)$ is invertible; however it is standard to incorporate it as it simplifies the resulting mathematics. Also note that here we assume ${\bsu_b} \in \mathcal{U}([0, T))$, where $\mathcal{U}(\cdot)$ is the space of measurable functions over the given interval satisfying the same growth conditions as required for $\bsA_b$. Note that ${\bsu_b}$ can be interpreted as the control term in a stochastic optimal control problem. In what follows, we therefore refer to ${\bsu_b}$ as the control, and \eqref{eq:controlled_process_noreparam} as the \textit{controlled process}. 

The above problem is formalized in the following problem statement.  

\begin{problem} [Diffusion Posterior Sampling]
\label{problem:posterior_samp}
Consider a Bayesian inverse problem with prior distribution $\pi_{\mathrm{pr}}(\bsx)$, $\bsx \in \mathbb{R}^D$ and likelihood function $\pi_{\mathrm{like}}(\bsy \mid \bsx)$, $\bsy \in \mathbb{R}^K$. Given the likelihood, a diffusion-type model of the form \eqref{eq:base_process} that samples from the prior distribution, and a measurement $\bsy$, find a feedback control ${\bsu_b} \in \mathcal{U}([0, T))$ such that $\bsx_T^{{\bsu_b}} \sim \pi_{\mathrm{post}}(\bsx \mid \bsy)$.
\end{problem}

\subsection{Reformulation in Path Space}

 Problem \ref{problem:posterior_samp} above is not well-posed in the sense that there is an infinite number of controls that solve the problem. For example, if $\bsA \equiv 0$, it is straightforward to see that there are an infinite number of time-dependent scalings of ${\bsu_b}$ that leave $p_{T}^{\bsu_b}$ unchanged. In this work, we overcome this ill-posedness by reformulating the posterior sampling problem in \emph{path space}. 
 
Formally, we define the path space as
 \begin{equation}
 \label{eq:path_space_deff}
 \mathcal{C}(s) \triangleq \{ \bsx(t): [s, T] \to \mathbb{R}^D \mid \bsx(t) \text{ continuous} \},
 \end{equation}
 where here $s < T$, and equip it with the supremum norm and the corresponding Borel-$\sigma$-algebra generated by the open sets under this norm. This leads to the following problem formulation. 

 \begin{problem}[Path-Space Posterior Sampling]
 \label{problem:path_space_sampling}
 Let $\mathcal{C}(s)$ be the path space defined in \eqref{eq:path_space_deff} and let $P$ be the measure induced by a diffusion-type process of the form given in \eqref{eq:base_process} over the space. The \emph{target path measure} $Q^*$ is defined as the path measure over $\mathcal{C}(s)$ whose Radon-Nikodym derivative with respect to $P$ (see Appendix \ref{sec:appendix_pathmeasures} for a definition) is proportional to the likelihood function, i.e., 
 \begin{equation}
 \frac{d Q^*}{dP}(\bsx_{0:T}) \triangleq \frac{\pi_{\mathrm{like}}(\bsy \mid \bsx_T)}{Z}, \quad Z \triangleq \mathbb{E}_{P} \left [ \pi_{\mathrm{like}}(\bsy \mid \bsx_T) \right]. 
 \label{eq:Qstar_def}
\end{equation}
Note that this definition ensures that the marginal distribution of $Q^*$ at time $T$ is the posterior distribution under the assumption that $p_T = \pi_{\mathrm{pr}}$. We seek a control ${\bsu_b} \in \mathcal{U}([s, T))$ with corresponding path measure $P^{{\bsu_b}}$ such that $ dP^{{\bsu_b}} / dQ^*$ is constant, i.e., $P^{{\bsu_b}} = Q^*$.
 \end{problem}

 The above problem formulation resolves the ill-posedness of Problem \ref{problem:posterior_samp} in the sense that if there is a ${\bsu_b}$ that solves Problem \ref{problem:path_space_sampling}, it is necessarily unique.\footnote{This follows from Girsanov's theorem \cite{oksendal2003stochastic}: since any $\bsu'_b \in \mathcal{U}$ can be written as $\bsu'_b= \bsu_b + \boldsymbol{\delta}$, if $\boldsymbol{\delta}$ is non-zero on a measurable set, this change in drift leads to a non-trivial change in the corresponding path measure.} However, in general Problem \ref{problem:path_space_sampling} does not have a solution. The reason for this is that $Q^*$ will not have the same Gaussian initial distribution as $P^{{\bsu_b}}$. In particular, as demonstrated in \cite{domingoadjoint2025, liu2025adjoint}, we obtain from \eqref{eq:Qstar_def} that the joint distribution of $(\bsx_0,\bsx_T)$ under $Q^*$ is
\begin{equation}
    q^*_{0,T}(\bsx_0,\bsx_T)
    =
    p_{0,T}(\bsx_0,\bsx_T)
    \frac{\pi_{\mathrm{like}}(\bsy \mid \bsx_T)}{Z},
    \label{eq:qstar_joint_initial_terminal}
\end{equation}
where $p_{0,T}(\bsx_0,\bsx_T)$ denotes the corresponding joint distribution under the base process $P$ given by \eqref{eq:base_process}. Marginalizing \eqref{eq:qstar_joint_initial_terminal} over $\bsx_T$ gives the initial marginal of the target path measure:
\begin{equation}
    q^*_0(\bsx_0) = p_0(\bsx_0) \frac{ \mathbb{E}_P \left[ \pi_{\mathrm{like}}(\bsy \mid \bsx_T) \mid \bsx_0 \right]}{Z}.
    \label{eq:qstar_initial_marginal}
\end{equation}
Equivalently, defining the initial value function
\begin{equation}
    V(\bsx) \triangleq - \log \mathbb{E}_P \left[ \pi_{\mathrm{like}}(\bsy \mid \bsx_T) \mid \bsx_0 = \bsx \right],
\end{equation}
we may write
\begin{equation}
    q^*_0(\bsx_0) = p_0(\bsx_0) \frac{\exp(-V(\bsx_0))}{Z}.
    \label{eq:qstar_initial_value_function}
\end{equation}

Equation \eqref{eq:qstar_initial_value_function} shows that, in general, the initial marginal of $Q^*$ differs from the initial marginal $p_0$ of the base process. The two coincide only when $ V(\bsx)$ is constant. A sufficient condition for this to hold is the separability of the initial and terminal states under the base process, $p_{0,T}(\bsx_0,\bsx_T) = p_0(\bsx_0) p_T(\bsx_T)$.
In that case,
\begin{equation}
    V(\bsx) = - \log \mathbb{E}_P \left[ \pi_{\mathrm{like}}(\bsy \mid \bsx_T) \mid \bsx_0 = \bsx \right] = - \log \mathbb{E}_P \left[ \pi_{\mathrm{like}}(\bsy \mid \bsx_T) \right] = - \log Z,
\end{equation}
and $ q^*_0(\bsx_0) = p_0(\bsx_0)$.

However, many diffusion-type generative models have non-trivial coupling between the time-$0$ and time-$T$ states. In these settings, Problem \ref{problem:path_space_sampling} does not have an exact solution.

The lack of exact solutions to Problem \ref{problem:path_space_sampling} when $p_{0, T}(\bsx_0, \bsx_T)$ is not separable is referred to as the \textit{initial value function bias} problem in the literature \cite{domingoadjoint2025}, and two main approaches have been proposed to address it. In \cite{uehara2024fine} and \cite{tang2024fine}, the authors use an auxiliary neural network to learn $V(\bsx)$ and the corresponding initial distribution $q_0^*$. In \cite{domingoadjoint2025}, Domingo-Enrich et al. address the problem by using Nelson's identity, as described in Section \ref{sec:background}, to inject more stochasticity into the base process. The resulting base process solves the bias problem since the correlation between $\bsx_0$ and $\bsx_T$ decreases exponentially as $T \to \infty$; however, $p_{0, T}(\bsx_0, \bsx_T)$ is never exactly separable.

\section{Solution Approach}
\label{sec:solution_approach}

In this section, we propose a novel technique, referred to here as the \textit{time reparameterization trick} (TRT), that requires only a minor modification to the base process but ensures the separability condition $p_{0, T}(\bsx_0, \bsx_T) = p_0(\bsx_0) \, p_T(\bsx_T)$ introduced in the previous section is exactly satisfied. We then describe the approach used to solve the time-reparameterized problem.

\subsection{The Time Reparameterization Trick}

The proposed time reparameterization trick is based on the observation that both the base process defined in \eqref{eq:base_process} and the controlled process defined in \eqref{eq:controlled_process_noreparam} can be trivially extended to one with a deterministic initial condition. In particular, we consider the following extended version of the base process:
\begin{equation}
\label{eq:uncontrolled_reparam_dynamics}
d\bsx_{t} = \bsA(\bsx_{t}, t) \, dt + \bsB(t) \, d\bsw_{t}, \quad \bsx_{-1} = \mathbf{0}, \quad t \in [-1, T],
\end{equation}
where here $\bsA$ and $\bsB$ are defined in terms of the original base process as 
\begin{equation}
\bsA(\bsx_t, t) \triangleq \begin{cases}
    \bsA_b(\bsx_t, t) \quad \mathrm{if} \; t \geq 0, \\
    \mathbf{0} \hspace{1.48cm} \mathrm{if} \; t < 0,
\end{cases} \quad \bsB(t) \triangleq \begin{cases}
    \bsB_b(t) \quad \mathrm{if} \; t \geq 0, \\
    \mathbf{I}_D \hspace{.8cm} \mathrm{if} \; t < 0.
\end{cases}
\end{equation}
Note that the drift and diffusion coefficients above trivially satisfy the standard regularity conditions required for \eqref{eq:uncontrolled_reparam_dynamics} to admit $t$-continuous strong solutions, and therefore the SDE induces a measure over the path space $\mathcal{C}(s)$ defined in \eqref{eq:path_space_deff}, where here $s = -1$. 
Further, the above process satisfies $\bsx_0 \sim \mathcal{N}(\mathbf{0}, \mathbf{I}_D)$, and therefore the time-$t$ marginals of the above process are the same as the time-$t$ marginals of the base process in \eqref{eq:base_process} for $t \geq 0$. In particular, we have that $\bsx_{T} \sim \pi_{\mathrm{pr}}(\bsx)$ if the base generative model is well-trained.

As with the original base process, we define the corresponding controlled version of \eqref{eq:uncontrolled_reparam_dynamics} as 
\begin{equation}
\label{eq:control_reparam_dynamics}
d\bsx_{t}^{\bsu} =[ \bsA(\bsx_{t}^{{\bsu}}, t) + \bsB(t) \, {\bsu}(\bsx_{t}^{\bsu}, t)] \, dt + \bsB(t) \, d\bsw_{t}, \quad \bsx_{-1}^{\bsu} = \mathbf{0}, \quad t \in [-1, T],
\end{equation}
where here $\bsu(\cdot, \cdot): \mathbb{R}^D \times [-1, T) \to \mathbb{R}^D$ satisfies $\bsu \in \mathcal{U}([-1, T))$. Note that the modified base and controlled processes both have time discontinuities in their drift and diffusion coefficients at $t=0$. However, this poses no issue in our problem context, as shown by the following theorem.

\begin{theorem}
Consider the path-space posterior sampling problem given in Problem \eqref{problem:path_space_sampling} with base path measure $P$ corresponding to the SDE in \eqref{eq:uncontrolled_reparam_dynamics}. Assume that the likelihood $\pi_{\mathrm{like}}(\bsy \mid \bsx_T)$ is strictly positive and bounded, with a score function $\nabla_{\bsx} \log \pi_{\mathrm{like}}(\bsy \mid \bsx_T)$ that is globally Lipschitz. Then there exists a control $\bsu \in \mathcal{U}([-1, T))$ that solves Problem \eqref{problem:path_space_sampling}, i.e., the corresponding controlled path measure $P^{\bsu}$ induced by \eqref{eq:control_reparam_dynamics} coincides with the target posterior measure $Q^*$. 
\label{theorem:sol_existence}
\end{theorem}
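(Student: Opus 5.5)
The plan is to construct the control explicitly through the Doob $h$-transform and then identify $P^{\bsu}$ with $Q^*$ via Girsanov's theorem.

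\textbf{Step 1: the $h$-function.} Define the space--time function
\begin{equation*}
h(\bsx,t) \triangleq \mathbb{E}_P\left[\pi_{\mathrm{like}}(\bsy \mid \bsx_T) \mid \bsx_t = \bsx\right], \quad t \in [-1,T].
\end{equation*}
Since the likelihood is strictly positive and bounded, $h$ is strictly positive and bounded. The process $M_t \triangleq h(\bsx_t,t)/Z$ is a positive $P$-martingale with $M_{-1} = h(\mathbf{0},-1)/Z = 1$, because $\bsx_{-1}=\mathbf{0}$ is deterministic and hence $h(\mathbf{0},-1)=Z$. Moreover $M_T = \pi_{\mathrm{like}}(\bsy\mid\bsx_T)/Z = dQ^*/dP$.

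This is exactly where the time reparameterization enters. With a deterministic initial state, the initial value function is trivially constant. The $Q^*$-law of $\bsx_{-1}$ is therefore the Dirac mass at $\mathbf{0}$, which matches the controlled process, so no initial reweighting is needed.

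\textbf{Step 2: the candidate control.} Set
\begin{equation*}
\bsu^*(\bsx,t) \triangleq \bsB(t)^\top \nabla_{\bsx} \log h(\bsx,t).
\end{equation*}
The plan is to show that $h$ is $C^{2,1}$ on each of $[-1,0)$ and $[0,T)$, and that it solves the backward Kolmogorov equation $\partial_t h + \mathcal{L}_t h = 0$ there. On $[-1,0)$ the dynamics are pure Brownian motion, so $h(\cdot,t)$ is the heat-semigroup convolution of $h(\cdot,0)$, which is smooth. On $[0,T)$ we use the regularity of the base transition semigroup. Applying It\^o's formula to $\log M_t$ then gives
\begin{equation*}
M_t = \exp\left(\int_{-1}^t \bsu^{*\top} d\bsw_s - \tfrac12 \int_{-1}^t |\bsu^*|^2 ds\right).
\end{equation*}
By Girsanov, under $Q^*$ the process $\bsw_t - \int_{-1}^t \bsu^*\, ds$ is a Brownian motion. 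Hence $\bsx$ under $Q^*$ solves \eqref{eq:control_reparam_dynamics} with $\bsu=\bsu^*$ and the same initial condition. Uniqueness in law for this SDE then yields $P^{\bsu^*} = Q^*$. The jump of the coefficients at $t=0$ is harmless, since It\^o's formula and Girsanov can be applied piecewise on $[-1,0]$ and $[0,T]$ and the pieces concatenated using the Markov property.

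\textbf{Step 3 (main obstacle): $\bsu^* \in \mathcal{U}([-1,T))$.} We need $\bsu^*$ to satisfy the Lipschitz and linear growth conditions. This is also what gives strong existence and uniqueness for the controlled SDE, and it is the step I expect to be hardest. The approach is to use the gradient representation
\begin{equation*}
\nabla_{\bsx}\log h(\bsx,t) = \frac{\mathbb{E}\left[(\partial \bsx_T/\partial \bsx_t)^\top \nabla \log\pi_{\mathrm{like}}(\bsy\mid\bsx_T)\, \pi_{\mathrm{like}}(\bsy\mid\bsx_T) \mid \bsx_t=\bsx\right]}{h(\bsx,t)},
\end{equation*}
obtained by differentiating the stochastic flow. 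The first variation $\partial\bsx_T/\partial\bsx_t$ is bounded because $\bsA$ is Lipschitz. Global Lipschitzness of the likelihood score then gives linear growth of $\nabla\log h$. Local Lipschitz continuity should follow by differentiating once more, using the second variation of the flow together with the lower bound $h>0$.

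If global Lipschitzness proves out of reach, the fallback is twofold. First, check whether local Lipschitz continuity plus linear growth suffice for the class $\mathcal{U}$. Second, obtain the identification $P^{\bsu^*}=Q^*$ directly from Girsanov applied to the martingale $M_t$; this requires only that $\int |\bsu^*|^2\,dt<\infty$ almost surely, which follows from linear growth.

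Near $t=T$, the Lipschitz score assumption keeps $\nabla \log h(\cdot,t)$ continuous up to $T$, so no singularity arises at the terminal time.
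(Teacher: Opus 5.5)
Your Steps 1--2 follow the paper's proof, in slightly tidier form. Your $h/Z$ is the paper's density process $g(\bsx_t,t)=\mathbb{E}_P[dQ^*/dP\mid\bsx_t]$. Both arguments identify $\bsu^*=\bsB^\top\nabla_{\bsx}\log g$ via It\^o's formula and close with Girsanov. You drop the martingale representation step, which is redundant once It\^o is applied to $g$. Running Girsanov from the bounded true martingale $M_t$ and then invoking uniqueness in law also avoids any Novikov-type condition on $\bsu^*$. These steps are fine, given the regularity from Step 3.

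The gap is in Step 3, which is also where the paper is weakest. There are two separate issues.

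\emph{Linear growth needs one more estimate.} A bounded first variation and a Lipschitz score only give $|\nabla_{\bsx}\log h(\bsx,t)|\le C\,\mathbb{E}_{Q^*}[1+|\bsx_T|\mid\bsx_t=\bsx]$. This tilted moment is not controlled by $\sup\pi_{\mathrm{like}}\cdot\mathbb{E}_P[1+|\bsx_T|\mid\bsx_t=\bsx]/h(\bsx,t)$. The reason is that $h(\cdot,t)$ is positive but not uniformly bounded below; for a Gaussian likelihood it decays like a Gaussian in the observed directions. The missing estimate goes as follows.
\begin{itemize}
\item With additive noise, $\bsx_T$ deviates from the deterministic flow started at $\bsx$ with Gaussian tails that do not depend on $\bsx$.
\item The Lipschitz score gives $\pi_{\mathrm{like}}(\bsy\mid\bsz)\ge e^{-C(1+|\bsz|)^2}$, hence $h(\bsx,t)\ge c\,e^{-C(1+|\bsx|)^2}$.
\item The tilted contribution from deviations larger than $K(1+|\bsx|)$ is then suppressed by roughly $e^{-(cK^2-C)(1+|\bsx|)^2}$, and linear growth follows for $K$ large.
\end{itemize}

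\emph{Global Lipschitz is false in general, so do not try to upgrade to it.} Take $D=1$ and $\bsA_b\equiv0$, $\bsB_b\equiv1$. Then $h(\cdot,t)=\pi_{\mathrm{like}}*\phi_{T-t}$, with $\phi_s$ the $\mathcal{N}(0,s)$ density. Let $-\log\pi_{\mathrm{like}}=V\ge0$ be a $C^1$ piecewise-quadratic function with $V''=\pm L$. Build $V$ from wells at level $0$ separated by bumps of half-width $d_k\to\infty$ and height $Ld_k^2/4$. Fix $s=T-t>1/L$ and let $x_k$ be the top of the $k$-th bump. The tilted law $\propto\pi_{\mathrm{like}}(z)\phi_s(z-x_k)$ splits into two symmetric lumps at distance $d_kLs/(1+Ls)$ from $x_k$. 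Therefore $\partial_x^2\log h(x_k,t)=-1/s+\mathrm{Var}_{\mathrm{tilt}}(z)/s^2\to\infty$.

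So the conclusion $\bsu\in\mathcal{U}$, with the global Lipschitz condition built into the paper's definition, cannot be proved in this generality. The paper's closing heat-kernel sentence asserts it without justification, and the example shows it fails.

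Your fallback is the correct endpoint. Local Lipschitz plus linear growth still yields a unique non-explosive strong solution of the controlled SDE, and your Girsanov argument then gives $P^{\bsu^*}=Q^*$. For the local Lipschitz property, use interior parabolic regularity of the backward equation, which you need for $C^{2,1}$ anyway. The second-variation route requires $\bsA\in C^2$, which is not assumed.
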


\begin{proof} 
We first note that by definition, $Q^*$ is absolutely continuous with respect to $P$, and its Radon-Nikodym derivative corresponds to the density process
\begin{equation*}
z_t \triangleq \mathbb{E}_P \left[ \frac{dQ^*}{dP} \mid \mathcal{M}_t \right], \quad t \in [-1, T], \end{equation*}
where $\mathcal{M}_t$ is the filtration induced by the Brownian motion in \eqref{eq:uncontrolled_reparam_dynamics}. By the tower property of conditional expectations, $z_t$ is a martingale with respect to the filtration $\mathcal{M}_t$. Since $P$ and $Q^*$ share a fixed initial condition at $t=-1$, we have that $z_{-1} = \mathbb{E}_P[ dQ^* / dP ] = 1$, and since by assumption $\pi_{\mathrm{like}}(\bsy \mid \bsx_T) > 0$, $z_t$ is a strictly positive martingale.

Now by the martingale representation theorem (MRT) \cite{oksendal2003stochastic}, the fact that $z_t$ is a martingale implies that there exists an $\mathcal{M}_t$ adapted, square-integrable process $\boldsymbol{\phi}_t$ such that the dynamics of $z_t$ are given by:
\begin{equation*}
d z_t = \langle \boldsymbol{\phi}_t, d\bsw_t \rangle.
\end{equation*}
Further, since $z_t$ is strictly positive, the transformation $\bsd_t = \boldsymbol{\phi}_t / z_t$ is well-defined and we can write
\begin{equation}
d z_t = z_t \, \langle \bsd_t, d\bsw_t \rangle,\label{eq:mrt_dynamics}
\end{equation}
where $\bsd_t$ is also $\mathcal{M}_t$ adapted and square integrable. By applying It\^o’s lemma \cite{oksendal2003stochastic} to $\log z_t$, we obtain that the solution to this SDE is the exponential
\begin{align*}
z_t &= z_{-1} \exp \left( \int_{-1}^t \langle \bsd_s, d\bsw_s \rangle - \frac{1}{2} \int_{-1}^t \left \| \bsd_s \right \|_2^2 \, ds \right) \\
&= \exp \left( \int_{-1}^t \langle \bsd_s, d\bsw_s \rangle - \frac{1}{2} \int_{-1}^t \left \| \bsd_s \right \|_2^2 \, ds \right),
\end{align*}
where the second equality follows from $z_{-1} = 1$ due to the deterministic initial condition. 

Next, note that since the base SDE \eqref{eq:uncontrolled_reparam_dynamics} is Markovian and $d Q^* / d P$ is a functional only of the terminal state $\bsx_T$, the density process can be represented as $z_t = g(\bsx_t, t)$, where $g(\bsx_t, t) \triangleq \mathbb{E}_P[d Q^* / d P \mid \bsx_t ]$. This Markovian structure implies that the integrand $\bsd_t$ from the martingale representation theorem must be a function of the current state and time. Specifically, by applying It\^{o}'s lemma to $g(\bsx_t, t)$ and using the fact that $g$ is a martingale and therefore must have a corresponding SDE with zero drift term, we obtain
$$
dz_t = \left \langle \bsB(t)^T \nabla_{\bsx} \, g(\bsx_t, t), d\bsw_t \right \rangle, 
$$
and therefore 
$$
dz_t = g(\bsx_t, t) \left \langle \bsB(t)^T \nabla_{\bsx} \log g(\bsx_t, t), d\bsw_t \right \rangle.
$$
Comparing the above equation with \eqref{eq:mrt_dynamics}, we can therefore identify the integrand as $\bsd_t = \bsB(t)^T \nabla_{\bsx} \log g(\bsx_t, t)$.

Now consider the path measure $P^{\bsu}$ induced by the controlled SDE \eqref{eq:control_reparam_dynamics} with a given control $\bsu$. By Girsanov’s theorem \cite{oksendal2003stochastic}, the Radon-Nikodym derivative of $P^{\bsu}$ with respect to $P$ restricted to the filtration $\mathcal{M}_t$, denoted $z_t^{\bsu}$, is
\begin{equation}
z_t^{\bsu} \triangleq \exp \left( \int_{-1}^t \langle \bsu(\bsx_s, s), d\bsw_s \rangle - \frac{1}{2} \int_{-1}^t \left \| \bsu(\bsx_s, s) \right \|_2^2 \, ds \right).\label{eq:girsanov_density}
\end{equation}
By setting $\bsu(\bsx_t, t) = \bsd_t = \bsB(t)^T \nabla_{\bsx} \log g(\bsx_t, t)$, the density process $z_T^{\bsu}$ matches $z_T = d Q^* / dP$ exactly. Thus, $P^{\bsu}$ and $Q^*$ are identical almost everywhere with respect to $P$.

What remains to show is that $\nabla_{\bsx} \log g(\bsx_t, t)$, $t \in [-1, T)$, is well-defined and that $\bsu = \bsB(t)^T \nabla_{\bsx} \log g(\bsx_t, t)$ is in $\mathcal{U}([-1, T))$. Here we first note that by the Kolmogorov backward equation \cite{oksendal2003stochastic}, $g$ satisfies a parabolic PDE. This, together with the assumption that $\pi_{\text{like}}(\bsy \mid \bsx_T)$ is strictly positive and bounded, ensures that the conditional expectation $g(t, \bsx)$ is bounded away from zero and smooth in $\bsx$ for all $t < T$, despite the time discontinuity in the base process coefficients. Consequently, $\nabla_{\bsx} \log g(\bsx_t, t)$ is well-defined. Further, since we have assumed that $\nabla_{\bsx} \log \pi_{\text{like}}$ is globally Lipschitz, the spatial regularity of the terminal condition is propagated backward by the heat kernel of the base SDE. Thus, both the gradient $\nabla_{\bsx} \log g(t, \bsx)$ and the corresponding control $\bsu$ inherit the Lipschitz and linear growth properties required for $\bsu \in \mathcal{U}$, completing the proof.
\end{proof}

\begin{remark}
    The proof techniques (MRT, It\^{o}'s Lemma, and Girsanov's Theorem) and general strategy used in the above proof are well-established; see Appendix \ref{sec:appendix_pathmeasures} for a review. However, the probabilistic approach used to form the density process corresponding to $dQ^* / dP$, which avoids explicit construction of a corresponding stochastic optimal control problem or invocation of the Hamilton-Jacobi-Bellman equation (see, e.g., the proof of Theorem 2.2 in \cite{nusken2021solving}), is somewhat non-standard.
\end{remark}

\begin{remark}
    The above proof actually gives the optimal control
    \begin{equation}
    \bsu^*(\bsx_t, t) =  \bsB(t)^T \, \nabla_{\bsx} \log g(\bsx_t, t) = \bsB(t)^T \, \nabla_{\bsx} \log \mathbb{E}_P[d Q^* / d P \mid \bsx_t], \label{eq:analytical_optimal_control}
    \end{equation} a formula which is well known in the diffusion posterior sampling literature (see, e.g.,  \cite{scopecrafts2025benchmarking}). However,  $d Q^* / d P$ is defined in terms of $\pi_{\mathrm{like}}(\bsy \mid \bsx_T)$, with $\bsx_T$ obtained from the prior-sampling base process. The conditional expectation in the above formula can therefore have very high variance if the KL divergence between the posterior and prior is large, making its direct use for posterior sampling impractical. Further, the computation of the gradient in the above expression requires differentiating through the sampling trajectory, which can be expensive to compute. In practice, many leading diffusion posterior sampling approaches replace the conditional distribution underlying the optimal control with a tractable approximation. Although these approximations lead to computationally efficient guidance rules, they collapse the full conditional structure of the terminal state and can therefore introduce significant bias; see Section \ref{sec:samperror_importsamp} for an extended discussion.
    \end{remark}

\begin{figure}
    \centering
    \includegraphics[width=\linewidth]{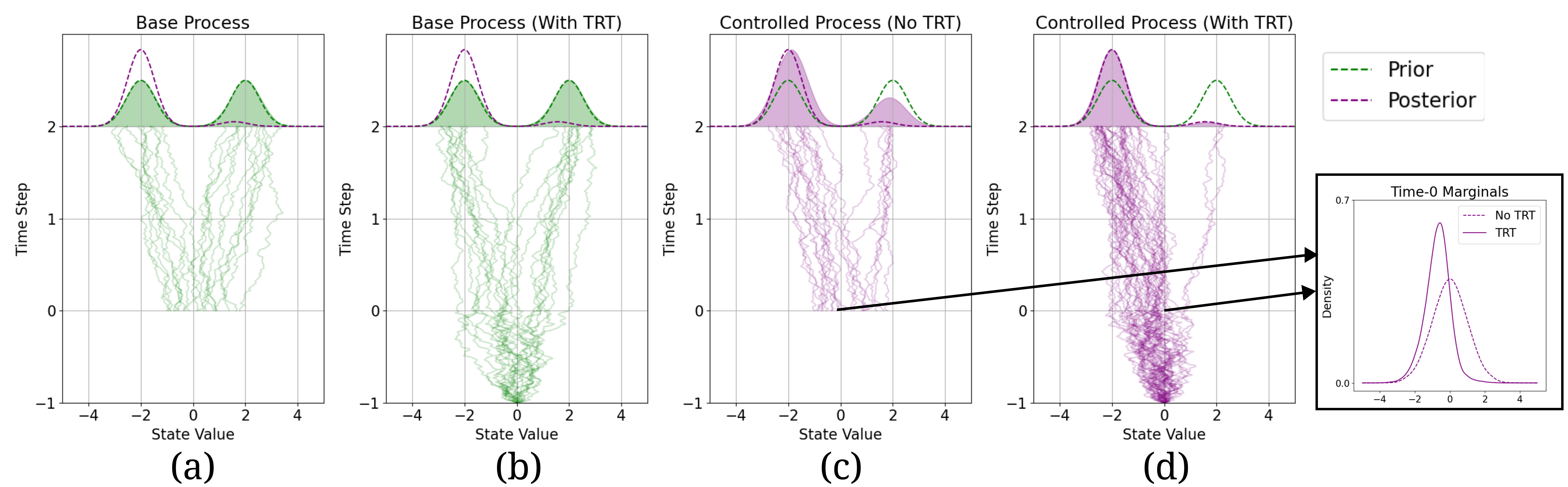}
    \caption{Illustration of Theorem \ref{theorem:sol_existence}. (a-b) Trajectories from the base process with and without the time reparameterization trick (TRT). Note that both base processes sample from the prior distribution of the given inverse problem. (c-d) Trajectories from the corresponding controlled process, which are intended to sample from the posterior. A comparison of the time-zero marginal distributions of the two processes is also shown. As can be seen, the controlled process with TRT has non-Gaussian time-zero marginals and correct posterior samples, while using the path-sampling approach without the TRT leads to significant sampling error.}
    \label{fig:trt_example}
\end{figure}

We demonstrate the use of the time-reparameterization trick (TRT) on a toy example problem with one-dimensional Gaussian mixture prior and linear-Gaussian likelihood. In particular, we constructed an analytically defined flow model, with stochasticity injected using Nelson's identity as in Section \ref{sec:background}. We then solved Problem \ref{problem:path_space_sampling} analytically for both the modified and unmodified processes using the control given by \eqref{eq:analytical_optimal_control}; see Appendix \ref{sec:appendix_flow} for details regarding the construction of the base process and computation of the optimal control. Figure \ref{fig:trt_example} shows the base and controlled processes before and after applying the TRT. As can be seen, the TRT resolves the value function bias problem, with the controlled process with TRT correctly sampling from the posterior distribution of the example problem. 

\subsection{Solving the Reparameterized Problem}
\label{subsec:solving_approach}

The time reparameterization trick ensures that the path-space posterior sampling problem is well-posed. To solve the reparameterized problem, we consider an iterative diffusion optimization (IDO) approach \cite{nusken2021solving, domingo2024stochastic}. In IDO approaches, iterative gradient-based algorithms are used to minimize objectives of the form
\begin{equation}
L(\bsu) \triangleq \mathcal{D}(P^{\bsu}, Q^*),
\label{eq:abstract_prob_form}
\end{equation}
where $\mathcal{D}$ is a statistical divergence. Here the computation of the divergence relies on the Radon-Nikodym derivative of $Q^*$ with respect to $P^{\bsu}$ evaluated along trajectories $\bsx_{-1:T}$:
\begin{align}
\frac{dQ^*}{dP^{\bsu}}(\bsx_{-1:T}) &= \frac{dQ^*}{dP}(\bsx_{-1:T}) \frac{dP}{dP^{\bsu}}(\bsx_{-1:T}) \nonumber \\
&= \frac{\pi_{\mathrm{like}}(\bsy \mid \bsx_T)}{Z} \exp \left( - \int_{-1}^T \langle \bsu(\bsx_s, s), d\bsw_s^{\bsu} \rangle - \frac{1}{2} \int_{-1}^T \left \| \bsu(\bsx_s, s) \right \|_2^2 \, ds \right), \label{eq:radon_nikodym_optimal}
\end{align}
where here have used the definition of $Q^*$ in \eqref{eq:Qstar_def}, the Girsanov density from \eqref{eq:girsanov_density}, and the identity $d\bsw_s = d\bsw_s^{\bsu} + \bsu(\bsx_s, s) \, ds$, where $d\bsw_s^{\bsu}$ is Brownian motion under $P^{\bsu}$ (see Appendix \ref{sec:appendix_pathmeasures} for details).

In this work we use the trust-region IDO approach introduced in \cite{blessing2025trust}. This method employs the KL divergence $\mathcal{D}_{\mathrm{KL}}$ and enforces a trust region constraint for stability; see Appendix \ref{app:ablation} for an ablation study regarding the impact of the trust region on optimization stability. 

Specifically, the trust region approach solves the sequence of sub-problems 
\begin{equation}
\label{eq:trust_region_subproblem}
\bsu_{i+1} = \argmin_{\bsu \in \mathcal{U}}  \mathcal{D}_{\mathrm{KL}}(P^\bsu \, \Vert \, Q^*)  \quad \text{subject to} \quad \mathcal{D}_{\mathrm{KL}}(P^{\bsu} \, \Vert \, P^{\bsu_i}) \leq \epsilon,
\end{equation}
where here $i = 0, \dots, I-1$, $\epsilon > 0$ is the size of the trust region, $\bsu_0$ is the control initialization, and  $\mathcal{D}_{\mathrm{KL}}(P^{\bsu} \, \Vert \, P^{\bsu_i})$ denotes the KL divergence between $P^{\bsu}$ and $P^{\bsu_i}$, which by Girsanov's theorem (see Appendix \ref{sec:appendix_pathmeasures}) can be written as a quadratic penalty, i.e.,
\begin{equation}
\mathcal{D}_{\mathrm{KL}}(P^{\bsu} \, \Vert \, P^{\bsu_i}) = \mathbb{E}_{P^{\bsu}} \left[ \frac{1}{2} \int_{-1}^T \| \bsu(\bsx_t, t) - \bsu_i(\bsx_t, t) \|_2^2 \, dt \right].
\label{eq:kl_trust_region}
\end{equation}
The subproblem in \eqref{eq:trust_region_subproblem} is solved by first forming the corresponding Lagrangian
$$
\mathcal{L}_i(\bsu, \lambda) \triangleq \mathcal{D}_{\mathrm{KL}}(P^\bsu \, \Vert \, Q^*) + \lambda (\mathcal{D}_{\mathrm{KL}}(P^{\bsu} \, \Vert \, P^{\bsu_i}) - \epsilon),
$$
where $\lambda \geq 0$ is the Lagrange multiplier. A series of saddle point problems 
$$\max_{\lambda \geq 0} \min_{\bsu \in \mathcal{U}} \mathcal{L}_i(\bsu, \lambda) 
$$
is then solved. Since $\mathcal{L}_i$ is convex in $\bsu$ and concave in $\lambda$ (see Section 2 in \cite{blessing2025trust}), this problem admits unique solutions in both variables, denoted by  $\bsu_{i+1}$ and $\lambda_{i}$, respectively.

A key insight of \cite{blessing2025trust}
 is that, for fixed $\lambda$, the inner minimization admits a dual representation that depends only on expectations under $P^{\bsu_i}$. In particular, the dual objective can be written as 
\begin{equation}
\mathcal{L}_i^{\mathrm{dual}}(\lambda)
=
-(1+\lambda)
\log \mathbb{E}_{P^{\bsu_i}}
\left[
\left(
\frac{dQ^*}{dP^{\bsu_i}}(\bsx_{-1:T})
\right)^{\frac{1}{1+\lambda}}
\right]
-
\lambda \epsilon.
\label{eq:dual_objective}
\end{equation}
 We compute $\lambda_i = \argmax_{\lambda \ge 0} \mathcal{L}_i^{\mathrm{dual}}(\lambda)$ with a one-dimensional numerical optimization routine, where the expectation is approximated using samples from $P^{\bsu_i}$.

Having determined $\lambda_i$, the next step is to characterize the updated path measure. By Proposition 2.2 in \cite{blessing2025trust},
\begin{equation}
\frac{dP^{\bsu_{i+1}}}{dP^{\bsu_i}} \propto \left(\frac{dQ^*}{dP^{\bsu_i}} \right)^{\frac{1}{1 + \lambda_{i}}},
\label{eq:uiplus1_measure_exp}
\end{equation}
which, together with \eqref{eq:radon_nikodym_optimal} and knowledge of $\lambda_i$, gives an expression for the trajectory measure $P^{\bsu_{i+1}}$ but does not give direct access to $\bsu_{i+1}$. What remains is therefore to estimate $\bsu_{i+1}$ given this expression for the trajectory measure, together with the set of samples from $P^{\bsu_i}$ previously used to estimate $\lambda_i$. This can be done by solving 
\begin{equation}
\argmin_{\bsu \in \mathcal{U}} \mathcal{D}(P^{\bsu}, P^{\bsu_{i+1}}),
\label{eq:ido_subproblem}
\end{equation}
where $\mathcal{D}$ is any statistical divergence measure; the sub-problem of computing $\bsu_{i+1}$ therefore has the same form as the original problem in \eqref{eq:abstract_prob_form}. 

In this work, we use the log-variance divergence measure introduced in \cite{nusken2021solving} as the divergence measure in \eqref{eq:ido_subproblem}. This divergence has a number of appealing theoretical properties, including robustness to Monte-Carlo gradient errors when $\bsu$ is close to the solution and stability in high-dimensional settings \cite{nusken2021solving}. For some $\bsv \in \mathcal{U}([-1, T))$, the log-variance divergence $\mathcal{D}_{\mathrm{logvar}}^{\bsv}$ between $P^{\bsu}$ and $P^{\bsu_{i+1}}$ is defined as follows:
\begin{equation}
\label{eq:logvar_diverge}
\mathcal{D}_{\mathrm{logvar}}^{\bsv}(P^{\bsu}, P^{\bsu_{i+1}}) \triangleq \mathrm{Var}_{P^{\bsv}} \left [ \log \frac{dP^{\bsu_{i+1}}}{dP^{\bsu}} \right] = \mathrm{Var}_{P^{\bsv}} \left [ \log \frac{dP^{\bsu_{i+1}}}{dP^{\bsu_i}} \frac{dP^{\bsu_{i}}}{dP^{\bsu}} \right].
\end{equation}
Here $\mathrm{Var}_{P^{\bsv}} \left [ \cdot \right]$ denotes the variance of the given quantity under the measure $P^{\bsv}$, $dP^{\bsu_i} / dP^{\bsu}$ can be computed using Girsanov's theorem, and $dP^{\bsu_{i+1}} / dP^{\bsu_i}$ is given by \eqref{eq:uiplus1_measure_exp}. While in principle $\bsv$ can be any control, we set $\bsv = \bsu_i$ to straightforwardly take advantage of the set of samples from $P^{\bsu_i}$ used to compute $\lambda_i$. This leads to a finite sample approximation of \eqref{eq:logvar_diverge}. Finally, $\bsu$ is approximated as a neural network, and gradient descent algorithms are used to solve the finite-sample approximation of \eqref{eq:logvar_diverge}. 

To summarize, the trust region approach introduced in \cite{blessing2025trust} induces a series of subproblems, as given in \eqref{eq:trust_region_subproblem}. Solving the sub-problem involves three stages. First, a set of trajectory samples corresponding to the previous iterate $\bsu_i$ is computed. Then, the multiplier $\lambda_i$ is computed from \eqref{eq:dual_objective} using a non-linear solver. Finally, the set of trajectory samples, together with $\lambda_i$, are used to estimate $\bsu_{i+1}$ in an IDO framework using the log-variance objective \eqref{eq:logvar_diverge}. This procedure is closely related to the Proximal Diffusion Neural Sampler (PDNS) of \cite{guo2025proximal}, which also replaces a one-shot path-space optimization with a sequence of local KL-regularized updates. The distinction is that PDNS formulates locality through an explicit proximal penalty and develops proximal relative-entropy and cross-entropy formulations, whereas our approach based on \cite{blessing2025trust} uses a KL trust-region constraint and recovers the intermediate control with a log-variance objective.

\section{Error Bounds and Importance Sampling}
\label{sec:samperror_importsamp}

Problem \ref{problem:path_space_sampling} formulates posterior sampling as the search for an optimal control in an infinite-dimensional function space. In practice, it is necessary to consider finite-dimensional approximations, which introduces error in the learned control. Additional error can also arise if the resulting optimization problem is not solved exactly. 

This section characterizes how control approximation error affects posterior sampling error and shows how the resulting bias can be corrected at the level of posterior expectations using path-space importance weights. Throughout, we assume that the base process \eqref{eq:base_process} samples exactly from the prior, i.e., $\bsx_T\sim\pi_{\mathrm{pr}}$.

\subsection{Sampling Error Bounds}

The following theorem characterizes the impact of error in the path-space control on the error in the corresponding posterior samples. Here, the path-space control may be obtained either by learning (as in the trust-region approach) or via guidance-based constructions such as DPS and $\Pi$GDM. The proof follows directly from Girsanov's theorem and the data-processing inequality. This proof strategy is well established in the literature (see, e.g., \cite{chen2024probabilistic}). 

\begin{theorem} \label{theorem:girsanov_bound}
    Let $\bsu^*$ be the solution to Problem \ref{problem:path_space_sampling}, i.e., $\bsu^*$ samples trajectories from the target measure $Q^*$ through the SDE
    $$
d\bsx_{t}^{\bsu^*} =\left[ \bsA(\bsx_{t}^{{\bsu^*}}, t) + \bsB(t) \, {\bsu}(\bsx_{t}^{\bsu^*}, t)\right] \, dt + \bsB(t) \, d\bsw_{t}, \quad \bsx_{-1}^{\bsu^*} = \mathbf{0}, \quad t \in [-1, T]. 
    $$
    Let $\bsu \in \mathcal{U}$ be a suboptimal control for the same process. Then the KL divergence between their time-$T$ distributions satisfies 
    $$
    \mathcal{D}_{\mathrm{KL}}(p_T^{\bsu} \, \Vert \, p_T^{\bsu^*} ) =     \mathcal{D}_{\mathrm{KL}}(p_T^{\bsu} \, \Vert \, \pi_{\mathrm{post}}) \leq  \mathbb{E}_{P^{\bsu}} \left [\int_{-1}^{T} \frac{1}{2} \left \|  \bsu(\bsx_{t}^{\bsu}, t) - \bsu^*(\bsx_{t}^{\bsu}, t) \right \|_2^2 dt \right].
    $$
\end{theorem}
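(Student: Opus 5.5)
The plan is to prove the equality and the inequality separately. The equality is immediate from Theorem~\ref{theorem:sol_existence}: $\bsu^*$ solves Problem~\ref{problem:path_space_sampling}, so $P^{\bsu^*} = Q^*$. By \eqref{eq:Qstar_def} and the standing assumption $p_T = \pi_{\mathrm{pr}}$, the time-$T$ marginal of $Q^*$ is $\pi_{\mathrm{post}}$, hence $p_T^{\bsu^*} = \pi_{\mathrm{post}}$.

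For the inequality, I would first lift to path space using the data-processing inequality. The evaluation map $\bsx_{-1:T} \mapsto \bsx_T$ is measurable on $\mathcal{C}(-1)$, and it pushes $P^{\bsu}$ and $P^{\bsu^*}$ forward to $p_T^{\bsu}$ and $p_T^{\bsu^*}$. Therefore
\[
\mathcal{D}_{\mathrm{KL}}(p_T^{\bsu}\,\Vert\,p_T^{\bsu^*}) \le \mathcal{D}_{\mathrm{KL}}(P^{\bsu}\,\Vert\,P^{\bsu^*}).
\]
This reduction works for the reparameterized process because both measures start from the same deterministic point $\bsx_{-1}=\mathbf{0}$. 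Without the TRT there would be an extra initial-marginal term, and it would be nonzero because $q_0^*\neq p_0$.

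Next I would compute the path-space KL with Girsanov's theorem. Both $P^{\bsu}$ and $P^{\bsu^*}$ are absolutely continuous with respect to the base measure $P$, with densities of the form \eqref{eq:girsanov_density}. Taking the ratio gives, along trajectories of $P^{\bsu}$,
\[
\log\frac{dP^{\bsu}}{dP^{\bsu^*}} = \int_{-1}^T \langle \bsu-\bsu^*, d\bsw_s\rangle - \frac12\int_{-1}^T \bigl(\|\bsu\|_2^2-\|\bsu^*\|_2^2\bigr)\,ds,
\]
where $\bsw$ is the base Brownian motion and the invertibility of $\bsB(t)$ lets the drift difference be expressed through the controls. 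Substituting $d\bsw_s = d\bsw_s^{\bsu} + \bsu\,ds$, with $\bsw^{\bsu}$ a Brownian motion under $P^{\bsu}$, the deterministic part collapses to $\frac12\|\bsu-\bsu^*\|_2^2\,ds$. What remains is a stochastic integral against $d\bsw^{\bsu}$. Taking the $P^{\bsu}$-expectation then yields exactly $\mathbb{E}_{P^{\bsu}}\bigl[\tfrac12\int_{-1}^T\|\bsu-\bsu^*\|_2^2\,dt\bigr]$. This is the same identity as \eqref{eq:kl_trust_region}, and I would invoke that formula from Appendix~\ref{sec:appendix_pathmeasures} directly.

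The main obstacle is justifying that the stochastic integral has zero mean under $P^{\bsu}$ and that Girsanov applies at all. Doing so needs Novikov-type integrability, or at least that $\int (\bsu-\bsu^*)\,d\bsw^{\bsu}$ is a true martingale. I would handle this with a localization argument. Stop at $\tau_n = \inf\{t : \int_{-1}^t\|\bsu-\bsu^*\|_2^2\,ds \ge n\}$, so that on $[-1,\tau_n]$ the stochastic integral is a square-integrable martingale with zero mean. This gives the identity for the stopped measures. Then let $n\to\infty$, using monotone convergence on the right-hand side and lower semicontinuity of KL on the left. If the right-hand side is infinite the bound is trivial. Otherwise the limit gives the inequality, which is all that is claimed, so the argument does not require exact equality. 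The linear-growth and Lipschitz conditions on $\mathcal{U}$, which Theorem~\ref{theorem:sol_existence} verified for $\bsu^*$, guarantee that both SDEs have strong solutions, so the measures are well defined.
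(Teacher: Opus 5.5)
Your proposal is correct and follows the paper's proof. Girsanov's theorem identifies $\mathcal{D}_{\mathrm{KL}}(P^{\bsu}\,\Vert\,P^{\bsu^*})$ with the expected quadratic control mismatch once the It\^{o} integral against $d\bsw^{\bsu}$ is shown to have zero mean, and the data-processing inequality under the evaluation map $\bsx_{-1:T}\mapsto\bsx_T$ transfers this to the time-$T$ marginals; applying these two steps in the opposite order, as you do, changes nothing. Your localization via $\tau_n$ and your explicit derivation of $p_T^{\bsu^*}=\pi_{\mathrm{post}}$ from $P^{\bsu^*}=Q^*$ and $p_T=\pi_{\mathrm{pr}}$ make rigorous what the paper only asserts or leaves implicit. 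Under the linear-growth condition defining $\mathcal{U}$ you already have $\mathbb{E}_{P^{\bsu}}\int_{-1}^T\|\bsu-\bsu^*\|_2^2\,dt<\infty$, so the stochastic integral is an $L^2$ martingale and the localization is a safeguard rather than a necessity.
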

\begin{proof}
The Girsanov theorem implies that the Radon-Nikodym derivative of $P^{\bsu}$ with respect to $P^{\bsu^*}$ is given by 
$$
\frac{dP^{\bsu}}{dP^{\bsu^*}} = \mathrm{\exp}\left( \int_{-1}^{T} \langle \bsu(\bsx_t, t) - \bsu^*(\bsx_{t}, t), d\bsw_t^{\bsu} \rangle + \frac{1}{2} \left \|  \bsu(\bsx_{t}, t) - \bsu^*(\bsx_{t}, t)  \right \|_2^2 dt  \right),
$$
where $d\bsw_t^{\bsu} = d\bsw_t - \bsu(\bsx_t, t) dt$ is Brownian motion under $P^{\bsu}$. 
So it follows that 
\begin{align*}
\mathcal{D}_{\mathrm{KL}}(P^{\bsu} \, \Vert \,  P^{\bsu^*}) &= \mathbb{E}_{P^{\bsu}} \Big [ \int_{-1}^{T} \langle \bsu(\bsx_t^{\bsu}, t) - \bsu^*(\bsx_t^{\bsu}, t), d\bsw_t^{\bsu} \rangle \\
& \hspace{3cm} + \int_{-1}^T \frac{1}{2} \left \|  \bsu(\bsx_t^{\bsu}, t) - \bsu^*(\bsx_t^{\bsu}, t)  \right \|_2^2 dt \Big ] \\
&=  \mathbb{E}_{P^{\bsu}} \left [\int_{-1}^{T} \frac{1}{2} \left \|  \bsu(\bsx_t^{\bsu}, t) - \bsu^*(\bsx_t^{\bsu}, t) \right \|_2^2 dt \right],
\end{align*}
where we have used the fact that$\int_{-1}^t \langle \bsu - \bsu^*, d\bsw_s^{\bsu} \rangle$ is an It\^{o} integral and therefore has expectation zero. An application of the data processing inequality then gives the desired result. 
\end{proof}

\subsection{Posterior Expectations via Path-Space Importance Sampling}

Finite dimensional control parametrizations and inexact optimization lead to errors in the samples obtained via the controlled SDE in \eqref{eq:control_reparam_dynamics}. If left uncorrected, these errors can cause bias in downstream Monte-Carlo estimators of posterior expectations. The path-space formulation yields a natural importance reweighting strategy that corrects this bias. In particular, the following theorem shows that posterior expectations can be expressed as weighted expectations under the controlled path measure $P^{\bsu}$, with weights given by the corresponding Radon--Nikodym derivative.

\begin{theorem}
\label{theorem:reweighting}
Let $\bsu$ be a suboptimal control for the controlled SDE given by \eqref{eq:control_reparam_dynamics}. Define the path-space importance weight
\begin{equation}
\label{eq:path_space_weight}
w(\bsx_{-1:T}^{\bsu})
\triangleq
\pi_{\mathrm{like}}(\bsy \mid \bsx_T^{\bsu})
\exp\!\left(
- \int_{-1}^T \langle \bsu(\bsx_t^{\bsu}, t), d\bsw_t^{\bsu} \rangle
- \frac{1}{2} \int_{-1}^T \| \bsu(\bsx_t^{\bsu}, t) \|_2^2 \, dt
\right),
\end{equation}
where $d\bsw_t^{\bsu} = d\bsw_t - \bsu(\bsx_t^{\bsu}, t)\,dt$ is Brownian motion under $P^{\bsu}$. Then for any measurable function $h$, it holds that
\begin{equation}
\label{eq:reweighting_formula}
\mathbb{E}_{\pi_{\mathrm{post}}(\bsx \mid \bsy)}[h(\bsx)]
=
Z^{-1}\,
\mathbb{E}_{P^{\bsu}}
\big[
h(\bsx_T^{\bsu})\, w(\bsx_{-1:T}^{\bsu})
\big],
\end{equation}
where
\begin{align}
\label{eq:reweighting_formula_normalizing}
Z
&\triangleq
\mathbb{E}_{P}\!\left[\pi_{\mathrm{like}}(\bsy \mid \bsx_T)\right]
=
\mathbb{E}_{\bsx \sim \pi_{\mathrm{pr}}}\!\left[\pi_{\mathrm{like}}(\bsy \mid \bsx)\right] = 
\mathbb{E}_{P^{\bsu}}
\big[
w(\bsx_{-1:T}^{\bsu})
\big].
\end{align}
\end{theorem}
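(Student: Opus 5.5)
The argument is a change of measure on path space. The key observation is that the weight $w$ in \eqref{eq:path_space_weight} equals $Z\,\frac{dQ^*}{dP^{\bsu}}$, which is exactly the product identity \eqref{eq:radon_nikodym_optimal}. I would prove this in three steps.

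\textbf{Step 1: identify $w$ as a Radon--Nikodym derivative.} By Girsanov's theorem (Appendix~\ref{sec:appendix_pathmeasures}), for $\bsu\in\mathcal{U}$ the measures $P^{\bsu}$ and $P$ are mutually absolutely continuous on $\mathcal{C}(-1)$. The reverse density is
$$
\frac{dP}{dP^{\bsu}}(\bsx_{-1:T})=\exp\!\Big(-\int_{-1}^T\langle \bsu,d\bsw^{\bsu}_t\rangle-\tfrac12\int_{-1}^T\|\bsu\|_2^2\,dt\Big).
$$
To see this, rewrite \eqref{eq:girsanov_density} using $d\bsw_t=d\bsw_t^{\bsu}+\bsu\,dt$ and invert it. Combining with \eqref{eq:Qstar_def} through the chain rule gives
$$
w=\pi_{\mathrm{like}}(\bsy\mid\bsx_T)\,\frac{dP}{dP^{\bsu}}=Z\,\frac{dQ^*}{dP^{\bsu}}.
$$

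\textbf{Step 2: change measure.} For measurable $h$ with $\mathbb{E}_{\pi_{\mathrm{post}}}[|h|]<\infty$, I would write
$$
\mathbb{E}_{P^{\bsu}}[h(\bsx_T)\,w]=Z\,\mathbb{E}_{P^{\bsu}}\Big[h(\bsx_T)\frac{dQ^*}{dP^{\bsu}}\Big]=Z\,\mathbb{E}_{Q^*}[h(\bsx_T)].
$$
Since $dQ^*/dP$ depends only on $\bsx_T$, the time-$T$ marginal of $Q^*$ is $p_T(\bsx)\pi_{\mathrm{like}}(\bsy\mid\bsx)/Z$. Because $p_T=\pi_{\mathrm{pr}}$ (the standing assumption, also preserved under the TRT), this marginal is exactly $\pi_{\mathrm{post}}$. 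This yields \eqref{eq:reweighting_formula}. Strictly speaking, this step only needs the simpler identity $\mathbb{E}_{P^{\bsu}}[F\,dP/dP^{\bsu}]=\mathbb{E}_P[F]$ with $F=h(\bsx_T)\pi_{\mathrm{like}}(\bsy\mid\bsx_T)$.

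\textbf{Step 3: normalizing constant.} Taking $h\equiv1$ gives $\mathbb{E}_{P^{\bsu}}[w]=Z$. The equality $\mathbb{E}_P[\pi_{\mathrm{like}}(\bsy\mid\bsx_T)]=\mathbb{E}_{\pi_{\mathrm{pr}}}[\pi_{\mathrm{like}}(\bsy\mid\bsx)]$ again follows from $p_T=\pi_{\mathrm{pr}}$, which establishes \eqref{eq:reweighting_formula_normalizing}.

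\textbf{Main obstacle.} The delicate point is justifying Girsanov's theorem itself. One needs the exponential local martingale to be a true martingale, so that $dP/dP^{\bsu}$ is a genuine density with mean one. This is where the growth and Lipschitz conditions on $\mathcal{U}$ enter. I would argue it by localization, using stopping times $\tau_n$ at which $\int\|\bsu\|^2$ exceeds $n$, and then appeal to the linear-growth bound together with a Beneš-type criterion to pass to the limit.

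A second point is that "any measurable $h$" must be read as $h$ being $\pi_{\mathrm{post}}$-integrable, or nonnegative, so that both sides of \eqref{eq:reweighting_formula} make sense. With that reading, the identity follows by approximation from bounded $h$ via monotone convergence applied to $h^\pm$.
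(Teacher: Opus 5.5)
Your proposal is correct and follows the paper's proof. Both arguments use $p_T=\pi_{\mathrm{pr}}$ to write the posterior expectation as $Z^{-1}\mathbb{E}_P[h(\bsx_T)\pi_{\mathrm{like}}(\bsy\mid\bsx_T)]$, then apply Girsanov's theorem to change measure from $P$ to $P^{\bsu}$, and set $h\equiv1$ to recover $Z$; your detour through $Q^*$ is just the chain-rule identity \eqref{eq:radon_nikodym_optimal}, as you note yourself, and your remarks on the true-martingale property and on requiring $h$ to be integrable or nonnegative add rigor the paper leaves implicit.
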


\begin{proof}
By definition of the posterior,
\[
\mathbb{E}_{\pi_{\mathrm{post}}(\bsx \mid \bsy)}[h(\bsx)]
=
Z^{-1}
\mathbb{E}_{\pi_{\mathrm{pr}}(\bsx)}
\big[
h(\bsx)\pi_{\mathrm{like}}(\bsy \mid \bsx)
\big],
\]
where
\[
Z
=
\mathbb{E}_{\bsx \sim \pi_{\mathrm{pr}}}
\big[
\pi_{\mathrm{like}}(\bsy \mid \bsx)
\big]
\]
is the posterior normalizing constant. Since $\bsx_T \sim \pi_{\mathrm{pr}}(\cdot)$ under the base process, this can be rewritten as
\[
\mathbb{E}_{\pi_{\mathrm{post}}(\bsx \mid \bsy)}[h(\bsx)]
=
Z^{-1}
\mathbb{E}_{P}
\big[
h(\bsx_T)\pi_{\mathrm{like}}(\bsy \mid \bsx_T)
\big].
\]
Applying Girsanov's theorem to change measure from $P$ to $P^{\bsu}$ yields
\[
\mathbb{E}_{\pi_{\mathrm{post}}(\bsx \mid \bsy)}[h(\bsx)]
=
Z^{-1}
\mathbb{E}_{P^{\bsu}}
\big[
h(\bsx_T^{\bsu})\, w(\bsx_{-1:T}^{\bsu})
\big],
\]
which proves \eqref{eq:reweighting_formula}. Setting $h \equiv 1$ gives
\[
Z
=
\mathbb{E}_{P^{\bsu}}
\big[
w(\bsx_{-1:T}^{\bsu})
\big],
\]
completing the proof.
\end{proof}

\begin{remark}
    The above theorem implies that asymptotically exact posterior expectations can be computed using the trivial control $\bsu \equiv 0$. However, the importance weights, which in this setting are given by $\pi_{\mathrm{like}}(\bsy \mid \bsx_T^{\bsu})$, can have incredibly high variance. This makes this approach impractical for many problems and illustrates the practical necessity of using a control that approximates the true optimal control. 
\end{remark}

A practical advantage of the optimization strategy adopted in this work is that it promotes stable path-space importance weights. The log-variance objective directly targets the variability of the Radon--Nikodym weights, while the KL trust-region constraint regularizes the change of measure between successive controlled processes \cite{blessing2025trust}. In our setting, this combination serves both purposes: it provides a stable mechanism for learning approximate posterior controls and yields path-space importance weights that remain sufficiently well behaved for reweighting posterior expectations.

Theorem~\ref{theorem:reweighting} also applies beyond samplers explicitly derived from path-space optimization. Any diffusion posterior sampler that can be written as a controlled SDE of the form \eqref{eq:control_reparam_dynamics} admits the corresponding path-space weights. This includes common guidance-based methods, which can be interpreted as using approximate controls obtained by simplifying the intractable optimal-control formula in \eqref{eq:analytical_optimal_control}. For example, the Diffusion Posterior Sampling (DPS) algorithm \cite{chung2023diffusion} uses the following control to obtain posterior samples:
\begin{equation}
\label{eq:dps_control}
\bsu(\bsx_t, t) = \mathbf{B}(t)^T \nabla_{\bsx} \log \pi_{\mathrm{like}}(\bsy \mid \bar{\bsx}_T(\bsx_t)), 
\end{equation}
with $\bar{\bsx}_T(\bsx_t) \triangleq \mathbb{E}_{P}[\bsx_T \mid \bsx_t]$ computed using Tweedie’s formula \cite{efron2011tweedie}. This corresponds to replacing $p_{T\mid t}(\bsx_T\mid\bsx_t)$ in \eqref{eq:analytical_optimal_control} by a Dirac mass at its conditional mean. Similarly, $\Pi$GDM \cite{song2023pseudoinverseguided} uses a Gaussian approximation with white-noise covariance for linear-Gaussian likelihoods. Although this approximation often improves over the DPS control for linear inverse problems, it remains unimodal and can therefore be inaccurate when the prior-induced conditional distribution is multimodal, particularly for $t \ll T$. From the path-space viewpoint, both DPS and $\Pi$GDM define approximate controls, and Theorem~\ref{theorem:reweighting} provides a principled mechanism for correcting the resulting bias in posterior expectations.

\section{Numerical Studies}
\label{sec:num_studies}

We evaluate the proposed path-space posterior sampling framework on a suite of diagnostic BIPs with reliable reference posteriors. In contrast to much of the existing literature on diffusion-based posterior sampling, where evaluation often relies on perceptual metrics or qualitative sample diversity, our experiments enable principled quantitative validation against exact posterior distributions in the linear--Gaussian settings and against high-quality long-run MCMC reference samples in the nonlinear settings.

We follow the benchmark design of \cite{scopecrafts2025benchmarking} and compare our framework against three leading diffusion-based posterior sampling methods: diffusion posterior sampling (DPS) \cite{chung2023diffusion}, pseudo-inverse guided diffusion models ($\Pi$GDM) \cite{song2023pseudoinverseguided}, and decoupled annealing posterior sampling (DAPS) \cite{zhang2025improving}. Note that throughout all experiments, we deliberately construct the prior-sampling base SDE so that the initial value function bias problem is minimal. The focus of this section is therefore restricted to rigorously assessing whether path-space control learning can outperform leading approaches that don't require additional training beyond what is required to obtain the base SDE.  

\subsection{Benchmark Inverse Problems}\label{sec:benchmarks}
All benchmarks infer an unknown parameter $\bsx \in \mathbb{R}^D$ from observations $\bsy \in \mathbb{R}^K$ generated by a forward model and noise process. Throughout, we use a \emph{canonical} Gaussian mixture prior
\begin{equation}
	\label{eq:gmm_prior}
	\pi_{\mathrm{pr}}(\bsx) \;=\; \sum_{i=1}^{N_m} w_i \, \mathcal{N}(\bsx;\boldsymbol{\mu}_i,\boldsymbol{\Sigma}_i),
	\qquad
	w_i \ge 0,\ \sum_{i=1}^{N_m} w_i = 1,
\end{equation}
with $N_m=3$. The mixture parameters $\{w_i,\boldsymbol{\mu}_i,\boldsymbol{\Sigma}_i\}_{i=1}^{N_m}$ are chosen to induce multi-modal posteriors.

We consider four benchmark inverse problems spanning linear and nonlinear forward models, partial observations, and ill-posed measurement operators. 

\paragraph{Forming the prior-sampling SDE}

In our formulation, we assume access to a pre-trained SDE that samples from the prior distribution. For the Gaussian mixture priors used in our benchmarks, this SDE can be specified in closed form: under common Gaussian transition kernels, the Gaussian-mixture structure is preserved, so the time-dependent scores $\nabla_{\bsx}\log p_t(\bsx_t)$ in \eqref{eq:backward_diffusion_sde} admit exact evaluation. As a result, these experiments isolate algorithmic error in posterior sampling from approximation error in prior learning. Explicit formulas for the noisy prior marginals and their scores are given in Appendix~\ref{app:closed_form_scores}.

Because DPS, $\Pi$GDM, and DAPS do not use the TRT introduced in this work, we choose benchmark priors for which the initial value function bias is minimal. In particular, we use long time horizons so that $p_{0,T}(\bsx_0,\bsx_T)\approx p_0(\bsx_0)\,p_T(\bsx_T)$, making the base process approximately memoryless and enabling a fair comparison between path-space control learning and existing diffusion-based posterior samplers. Implementation details regarding the SDEs are given in Appendix~\ref{app:benchmark_details}.

\paragraph{Reference posteriors and optimal controls}
For the two linear--Gaussian benchmarks (Sections~\ref{prob:random-sensing}--\ref{prob:inpainting}), the posterior $\pi_{\mathrm{post}}(\bsx\mid\bsy)$ is available in closed form as a Gaussian mixture, and we additionally derive the optimal path-space control in closed form. These expressions provide exact reference solutions for validating both sampled posteriors and learned controls; details are deferred to Appendix~\ref{app:linear_posteriors_controls}. For the nonlinear benchmarks (Sections~\ref{prob:xray}--\ref{prob:phase}), we use the high-quality MCMC reference samples released with \cite{scopecrafts2025benchmarking}.

We now describe the four benchmark inverse problems considered in this work. Their main features are summarized in Table~\ref{tab:benchmarks}, and full parameter specifications are given in Appendix~\ref{app:benchmark_details}.

\begin{table}[t]
	\centering
	\caption{Summary of benchmark inverse problems. Complete parameter choices are provided in Appendix~\ref{app:benchmark_details}.}
	\label{tab:benchmarks}
	\begin{tabular}{lccc}
		\hline
		Problem & $(D,K)$ & Forward model $\mathcal{F}(\bsx)$ & Likelihood \\ \hline
		Random linear sensing & $(20,20)$ & $\mathbf{H}\bsx$ & $\mathcal{N}(\mathbf{H}\bsx,\boldsymbol{\Gamma})$ \\
		Inpainting & $(10,8)$ & $\mathbf{M}\bsx$ & $\mathcal{N}(\mathbf{M}\bsx,\sigma^2\mathbf{I})$ \\
		X-ray tomography & $(10,15)$ & $I_0\exp(-\mathbf{C}\bsx)$ & $\mathrm{Poi}(I_0\exp(-\mathbf{C}\bsx))$ \\
		Phase retrieval & $(10,5)$ & $(\mathbf{N}\bsx)^{\odot 2}$ & $\mathcal{N}((\mathbf{N}\bsx)^{\odot 2},\sigma^2\mathbf{I})$ \\
		\hline
	\end{tabular}
\end{table}

\subsubsection{Random linear sensing problem} \label{prob:random-sensing}
We consider the linear-Gaussian observation model
\begin{equation}
	\bsy \;=\; \mathbf{H}\bsx + \boldsymbol{\eta}, 
	\qquad 
	\boldsymbol{\eta} \sim \mathcal{N}(\mathbf{0},\boldsymbol{\Gamma}),
\end{equation}
where $\mathbf{H}\in\mathbb{R}^{K\times D}$ is a fixed realization with i.i.d.\ $\mathcal{N}(0,1)$ entries and $\boldsymbol{\Gamma}\in\mathbb{R}^{K\times K}$ is a diagonal (heteroscedastic) noise covariance. We set $D=K=20$. 

With the Gaussian mixture prior \eqref{eq:gmm_prior}, the posterior remains a Gaussian mixture, so both exact reference samples and the optimal path-space control are available in closed form; see Appendices~\ref{app:closed_form_scores} and~\ref{app:linear_posteriors_controls}.

\subsubsection{Inpainting problem} \label{prob:inpainting}
This benchmark models partial observations with additive Gaussian noise,
\begin{equation}
	\bsy \;=\; \mathbf{M}\bsx + \boldsymbol{\eta},
	\qquad
	\boldsymbol{\eta} \sim \mathcal{N}(\mathbf{0},\sigma^2\mathbf{I}_K),
\end{equation}
where $\mathbf{M}\in\{0,1\}^{K\times D}$ is a selection matrix that extracts a subset of coordinates of $\bsx$.
We set $D=10$ and observe $K=8$ coordinates; the observed index set and the noise level $\sigma$ are specified in Appendix~\ref{app:benchmark_details}. As in the random linear sensing problem, the posterior and optimal control are available in closed form; see Appendix~\ref{app:linear_posteriors_controls}. In contrast to the random sensing benchmark, this problem probes the ability of a sampler to recover posterior uncertainty along unobserved directions of the parameter space.

\subsubsection{X-ray tomography problem}\label{prob:xray}
We next consider a nonlinear transmission tomography model with Poisson noise. Following \cite{scopecrafts2025benchmarking}, the forward intensity map is
\begin{equation}
	f(\bsx) \;=\; I_0 \exp(-\mathbf{C}\bsx),
\end{equation}
with exponential applied componentwise, and the likelihood is
\begin{equation}
	\pi_{\mathrm{like}}(\bsy\mid\bsx) \;=\; \mathrm{Poi}(f(\bsx)).
\end{equation}
Here $\mathbf{C}\in\mathbb{R}^{K\times D}$ and $I_0$ are fixed problem parameters (Appendix~\ref{app:benchmark_details}). We set $(D, K) = (10,15)$ and use the same Gaussian mixture prior as in the inpainting benchmark. Since the resulting posterior is non-Gaussian and analytically intractable, we validate against the high-quality MCMC reference samples from \cite{scopecrafts2025benchmarking}, generated using MT-DREAM(ZS) \cite{laloy2012high}.

\subsubsection{Phase retrieval problem}\label{prob:phase}
Finally, we consider a nonlinear and underdetermined phase retrieval model with additive Gaussian noise,
\begin{equation}
	\bsy \;=\; (\bsN\bsx)^{\odot 2} + \boldsymbol{\eta},
	\qquad
	\boldsymbol{\eta} \sim \mathcal{N}(\mathbf{0},\sigma^2\mathbf{I}_K),
\end{equation}
where $(\cdot)^{\odot 2}$ denotes elementwise squaring and $\bsN\in\mathbb{R}^{K\times D}$ has i.i.d.\ $\mathcal{N}(0,1)$ entries. We set $(D,K)=(10,5)$ with a noise level $\sigma$ chosen to yield a challenging low-SNR regime (Appendix~\ref{app:benchmark_details}). As in the tomography case, we evaluate against the reference MCMC posterior samples from \cite{scopecrafts2025benchmarking}.

\subsection{Algorithm Implementations} In this subsection, we provide details regarding the implementation of the trust-region approach. The implementation of the DPS,  $\Pi$GDM, and DAPS algorithms used for comparison is detailed in Appendix \ref{app:comp_alg_imp}. 

We implement the trust-region optimization (TR) method described in Section~\ref{sec:solution_approach}, with the log-variance divergence $\mathcal{D}_{\mathrm{logvar}}$ in \eqref{eq:logvar_diverge} together with the Radon--Nikodym representation \eqref{eq:radon_nikodym_optimal} used to solve the sub-problem given by \eqref{eq:ido_subproblem}. 

The control $\bsu \in \mathcal U([-1,T))$ is parameterized using a time-dependent function $\bsu_\theta(\bsx,t)$ represented by a feedforward neural network. The network takes as input the state $\bsx \in \mathbb R^D$ and time $t$. Time is encoded using a fixed sinusoidal embedding of dimension $D$ followed by a learned linear projection \cite{ho_2020_denoising}. The encoded timestep is concatenated with $\bsx$ and mapped to a hidden dimension of $D_{\mathrm{width}} = 100$, with a Softplus activation function, denoted $\phi(\cdot)$, applied. This is followed by four linear layers with Softplus activations. We then apply an intermediate mixing layer that concatenates the network state with the time embedding, uses a linear layer to map back to the hidden dimension, and applies a gated nonlinearity of the form $\phi(h)\odot h$, which is structurally similar to self-gated activations like Swish \cite{ramachandran2017searching}. Next, four additional linear layers are applied, with each one again followed by a Softplus activation. Finally, the network state is again concatenated with the time embedding and mapped back to $\mathbb R^D$ using another linear layer. 

At each outer iteration, a replay buffer is constructed containing $N_{\mathrm{buffer}}$ full trajectories, Brownian noise increments, the control values used during sampling, and the corresponding importance-weight functionals $w$ appearing in \eqref{eq:path_space_weight}. All buffer quantities are detached from the computational graph, enabling off-policy optimization. For a fixed trust-region multiplier $\lambda_i$, the control parameters $\theta$ are updated by minimizing a discrete-time Monte Carlo approximation of the trust-region log-variance objective in \eqref{eq:logvar_diverge}.

The trust-region multiplier $\lambda_i$ is updated at each outer iteration by solving the associated one-dimensional dual problem \eqref{eq:dual_objective} for the KL constraint using scalar line search. We consider two trust-region radii, $\epsilon \in \{0.1, 0.01\}$, in order to compare more aggressive and more conservative control updates; an interesting direction for future work is the use of adaptive trust-region strategies \cite{wright1999numerical}. The maximum number of outer-loop iterations in \eqref{eq:trust_region_subproblem} is set to $I = 300$. Optimization is terminated early when $\lambda_i < 0.1$. In the experiments below, we record the number of completed outer iterations and report the frequency of early termination in Appendix~\ref{app:optimization_details}.

The network parameters are optimized using Adam \cite{kingma2014adam} with learning rate $5\times10^{-4}$ and gradient clipping with a value of $1$. Training is performed for a maximum of $1.2\times 10^5$ gradient steps, using mini-batches of size $2000$ drawn from the replay buffer. The size of the replay buffer is set as $N_{\mathrm{buffer}} = 50,000$.

\subsection{Evaluation Methods}
\label{sec:evaluation_methods}

We evaluate posterior sampling accuracy and uncertainty quantification across the benchmark inverse problems described above. We consider the trust-region (TR) path-space sampler alongside DPS, $\Pi$GDM, and DAPS. Since in general $\Pi$GDM is applicable only to linear inverse problems, we report it only for the random linear sensing and inpainting benchmarks. For the linear--Gaussian benchmarks, reference samples are drawn from the exact posterior distribution, and the closed-form optimal control is available, enabling direct assessment of both posterior accuracy and control approximation error. For the nonlinear benchmarks, we use the high-quality MCMC reference samples released in \cite{scopecrafts2025benchmarking}. Computational efficiency is analyzed separately in Section~\ref{sec:computational_efficiency}.

Each experiment consists of $10$ independent trials. In the $i$th trial, the measurement $\bsy_i$ is sampled independently from the measurement distribution, and each method generates $10{,}000$ samples from the corresponding posterior $\pi_{\mathrm{post}}(\bsx\mid\bsy_i)$. Metrics are computed per trial and reported as mean $\pm$ standard deviation across trials.

We assess posterior accuracy using posterior mean error, covariance error, maximum mean discrepancy (MMD), and central moment discrepancy (CMD). For the linear--Gaussian benchmarks, we also report control approximation error, comparing each method's induced control with the analytically available optimal control. This diagnostic is motivated by Theorem~\ref{theorem:girsanov_bound}, which relates posterior sampling error to path-integrated control mismatch. In our experiments, all controls are evaluated on common reference trajectories generated under the optimal control. Detailed metric definitions are given in Appendix~\ref{app:evaluation_metrics}.

For samplers with tractable path-space change-of-measure representations, we additionally report importance-weighted diagnostics: normalized effective sample size (NESS), self-normalized importance-sampled posterior mean error, and the percentage reduction in posterior mean error due to reweighting. These diagnostics are reported for DPS, $\Pi$GDM, and the proposed trust-region method, but not for DAPS, which does not define a single controlled diffusion path measure with tractable Radon--Nikodym weights of the form in Theorem~\ref{theorem:reweighting}.

Finally, we include qualitative diagnostics based on two-dimensional projections of generated and reference samples onto the eigenvectors of the reference posterior covariance associated with its smallest and largest eigenvalues. These projections visualize posterior geometry along the most concentrated and most diffuse directions.

\subsection{Computational efficiency analysis}
\label{sec:computational_efficiency}
To analyze the computational efficiency of the proposed approach, we focus on two dominant costs: the evaluation of the likelihood function and its score, and the evaluation of the base diffusion drift. Here the distinct training and inference stages of the proposed approach naturally suggest an amortized analysis framework. We therefore report the number of evaluations of the likelihood function and/or its score, and the number of evaluations of the drift term of the base diffusion model, in three different regimes: training, per-sample inference, and the total cost required to obtain $10^4$ posterior samples. 

We compare all implemented posterior samplers using these metrics. Only the proposed method requires additional neural-network training beyond the base SDE, so the training cost of DPS, $\Pi$GDM, and DAPS is zero. The reported costs are implementation-dependent; for example, the number of drift evaluations in DPS and $\Pi$GDM depends on the SDE discretization. For DAPS, we report the more computationally efficient variant used for the linear inverse problems.

Table \ref{table:comp_effec} displays the results of the computational efficiency analysis. In contrast to the posterior sampling results reported in Section~\ref{sec:results}, these quantities are not specific to any one benchmark problem, but instead reflect the algorithmic and implementation-level costs of the methods considered in this work. 

The proposed method incurs a substantial upfront training cost and, even amortized over $10^4$ posterior samples, requires substantially more diffusion drift evaluations than DPS, $\Pi$GDM, and DAPS. However, its amortized number of likelihood evaluations is smaller than that of DAPS. This cost profile is most favorable when likelihood evaluations dominate the computational budget or when a very large number (e.g., $> 10^6$) of posterior samples is required.

\begin{table}[ht]
    \centering
    \small 
    \setlength{\tabcolsep}{3pt} 
    \begin{tabular*}{\textwidth}{@{\extracolsep{\fill}} c ccc ccc @{}}
        \toprule
        & \multicolumn{3}{c}{\textbf{Diffusion Drift Evaluations}} & \multicolumn{3}{c}{\textbf{Likelihood Evaluations}} \\
        \cmidrule(lr){2-4} \cmidrule(lr){5-7} 
        \textbf{Method} & Train & Per-Sample & Total\textsuperscript{*} & Train & Per-Sample & Total\textsuperscript{*}  \\
        \midrule
        DAPS & $0$ & $1000$ & $10^7$ & $0$ & $2 \times 10^4$ & $2 \times 10^8$\\
        DPS & $0$ & $100$ & $10^6$ & $0$ & $100$ & $10^6$ \\
        $\Pi$GDM & $0$ & $100$ & $10^6$ & $0$ & $100$ & $10^6$  \\
         TR & $1.5 \times 10^9$ & $100$ & $\approx 1.5 \times 10^9$  & $1.5 \times 10^7$ & $0$ & $1.5 \times 10^7$ \\
        \bottomrule
    \end{tabular*}
    \begin{flushleft}
        \footnotesize \textsuperscript{*}`Total' $=$ total training and sampling costs to obtain $10^4$ posterior samples.
    \end{flushleft}
            \caption{The number of evaluations of the likelihood function $\pi_{\mathrm{like}}(\bsy \mid \bsx)$ and drift term $\bsA$ of the base SDE for different diffusion posterior sampling algorithms. Note that results are specific to the particular implementation of the base process and algorithms used in this paper. }
    \label{table:comp_effec}
\end{table}

\section{Results}
\label{sec:results}

We report quantitative and qualitative results across the four benchmark inverse problems. We also examine the role of the trust-region formulation in stabilizing optimization of the path-space objective; the corresponding ablation study is reported in Appendix~\ref{app:ablation}.

\subsection{Random Linear Sensing Problem}

\begin{table}[h]
\centering
\small
\resizebox{\textwidth}{!}{%
\begin{tabular}{llcccc}
\toprule
Problem & Method 
& {\makecell{Mean\\Error $\downarrow$}} 
& {\makecell{Cov\\Error $\downarrow$}} 
& {\makecell{MMD $\downarrow$}} 
& {\makecell{CMD $\downarrow$}} \\
\midrule
\multirow{5}{*}{\makecell[l]{Random\\linear\\sensing}}
& DPS 
& $11.9 \pm 1.8$ 
& $4.74 \pm 0.55$ 
& $0.525 \pm 0.094$ 
& $1.57 \pm 0.23$ \\

& $\Pi$GDM 
& $2.57 \pm 1.35$ 
& $1.55 \pm 0.92$ 
& $0.016 \pm 0.008$ 
& $0.694 \pm 0.252$ \\

& DAPS 
& $5.01 \pm 1.49$ 
& $4.96 \pm 0.44$ 
& $0.127 \pm 0.040$ 
& $1.42 \pm 0.22$ \\

& TR ($\epsilon=0.01$) 
& $0.248 \pm 0.110$ 
& $0.647 \pm 0.050$ 
& $0.0037 \pm 0.0007$ 
& $0.074 \pm 0.025$ \\

& TR ($\epsilon=0.1$) 
& $\mathbf{0.239 \pm 0.091}$ 
& $\mathbf{0.644 \pm 0.050}$ 
& $\mathbf{0.0037 \pm 0.0006}$ 
& $\mathbf{0.074 \pm 0.021}$ \\

\midrule

\multirow{5}{*}{Inpainting}
& DPS 
& $2.77 \pm 0.72$ 
& $1.71 \pm 0.95$ 
& $0.266 \pm 0.058$ 
& $0.463 \pm 0.159$ \\

& $\Pi$GDM 
& $2.51 \pm 1.88$ 
& $1.35 \pm 1.09$ 
& $0.055 \pm 0.038$ 
& $0.708 \pm 0.320$ \\

& DAPS 
& $7.77 \pm 5.18$ 
& $3.95 \pm 1.01$ 
& $0.415 \pm 0.188$ 
& $1.94 \pm 0.61$ \\

& TR ($\epsilon=0.01$) 
& $0.585 \pm 0.381$ 
& $0.661 \pm 0.684$ 
& $0.0078 \pm 0.0028$ 
& $0.121 \pm 0.064$ \\

& TR ($\epsilon=0.1$) 
& $\mathbf{0.513 \pm 0.431}$ 
& $\mathbf{0.436 \pm 0.067}$ 
& $\mathbf{0.0070 \pm 0.0036}$ 
& $\mathbf{0.091 \pm 0.061}$ \\
\bottomrule
\end{tabular}
}
\caption{Quantitative posterior sampling results for the two linear--Gaussian benchmark problems. Entries report mean $\pm$ standard deviation across trials. Lower values are better for all metrics. The best result within each benchmark and metric is bolded.}
\label{tab:linear_posterior_metrics}
\end{table}

\begin{table}[h]
\centering
\small
\resizebox{\textwidth}{!}{%
\begin{tabular}{llcccc}
\toprule
Problem & Method 
& {\makecell{Control\\Error $\downarrow$}} 
& {\makecell{NESS $\uparrow$}} 
& {\makecell{Reweighted\\Mean Error $\downarrow$}} 
& {\makecell{Percent Reduction in\\Mean Error $\uparrow$}} \\
\midrule
\multirow{5}{*}{\makecell[l]{Random\\linear\\sensing}}
& DPS 
& $0.926 \pm 0.489$ 
& $0.0002 \pm 0.0001$ 
& $11.1 \pm 2.3$ 
& $7.4 \pm 11.1$ \\

& $\Pi$GDM 
& $0.015 \pm 0.005$ 
& $0.596 \pm 0.154$ 
& $0.184 \pm 0.054$ 
& $\mathbf{82.8 \pm 29.2}$ \\

& DAPS 
& -- 
& -- 
& -- 
& -- \\

& TR ($\epsilon=0.01$) 
& $0.0046 \pm 0.0013$ 
& $0.811 \pm 0.038$ 
& $0.187 \pm 0.070$ 
& $11.0 \pm 48.9$ \\

& TR ($\epsilon=0.1$) 
& $\mathbf{0.0044 \pm 0.0012}$ 
& $\mathbf{0.812 \pm 0.040}$ 
& $\mathbf{0.183 \pm 0.067}$ 
& $13.1 \pm 47.9$ \\

\midrule

\multirow{5}{*}{Inpainting}
& DPS 
& $0.335 \pm 0.102$ 
& $0.0022 \pm 0.0018$ 
& $3.53 \pm 1.98$ 
& $-27.5 \pm 65.3$ \\

& $\Pi$GDM 
& $0.021 \pm 0.005$ 
& $0.535 \pm 0.157$ 
& $0.206 \pm 0.145$ 
& $\mathbf{80.9 \pm 26.3}$ \\

& DAPS 
& -- 
& -- 
& -- 
& -- \\

& TR ($\epsilon=0.01$) 
& $0.0055 \pm 0.0028$ 
& $0.804 \pm 0.072$ 
& $0.191 \pm 0.143$ 
& $56.8 \pm 38.0$ \\

& TR ($\epsilon=0.1$) 
& $\mathbf{0.0052 \pm 0.0013}$ 
& $\mathbf{0.807 \pm 0.070}$ 
& $\mathbf{0.162 \pm 0.125}$ 
& $42.9 \pm 39.6$ \\
\bottomrule
\end{tabular}
}
\caption{Control and importance sampling diagnostics for the two linear--Gaussian benchmark problems. Entries report mean $\pm$ standard deviation across trials. Lower values are better for control error and reweighted mean error; higher values are better for NESS and percent reduction in mean error. The best result within each benchmark and metric is bolded.}
\label{tab:linear_is_metrics}
\end{table}

Tables~\ref{tab:linear_posterior_metrics} and~\ref{tab:linear_is_metrics} report the results for the random linear sensing benchmark. The trust-region method outperforms DPS, $\Pi$GDM, and DAPS across all posterior accuracy metrics, reducing mean error by more than an order of magnitude relative to DAPS and $\Pi$GDM and by nearly two orders of magnitude relative to DPS. The diagnostics show that TR also attains the lowest control error and highest NESS, indicating more accurate approximation of the optimal path-space control and more stable importance weights. Importance reweighting improves posterior mean estimates for all methods with tractable weights, with the largest improvement observed for $\Pi$GDM.

The qualitative results in Figure~\ref{fig:random_linear_sensing_qualitative} are consistent with the quantitative metrics. The figure shows two-dimensional projections of the generated samples for a representative trial, together with contour lines of the projected ground-truth posterior density, exact posterior samples, and the corresponding prior distribution. In this random linear sensing benchmark, the posterior is available in closed form as a Gaussian mixture. Relative to DPS, $\Pi$GDM, and DAPS, the trust-region method yields samples that more closely follow the geometry of the target posterior, with improved recovery of the dominant modes and their anisotropic spread. By contrast, DPS exhibits mode collapse, $\Pi$GDM remains overdispersed, and DAPS is overly diffuse.

\begin{figure}
\centering

\begin{subfigure}[t]{0.24\textwidth}
    \centering
    \includegraphics[width=\linewidth]{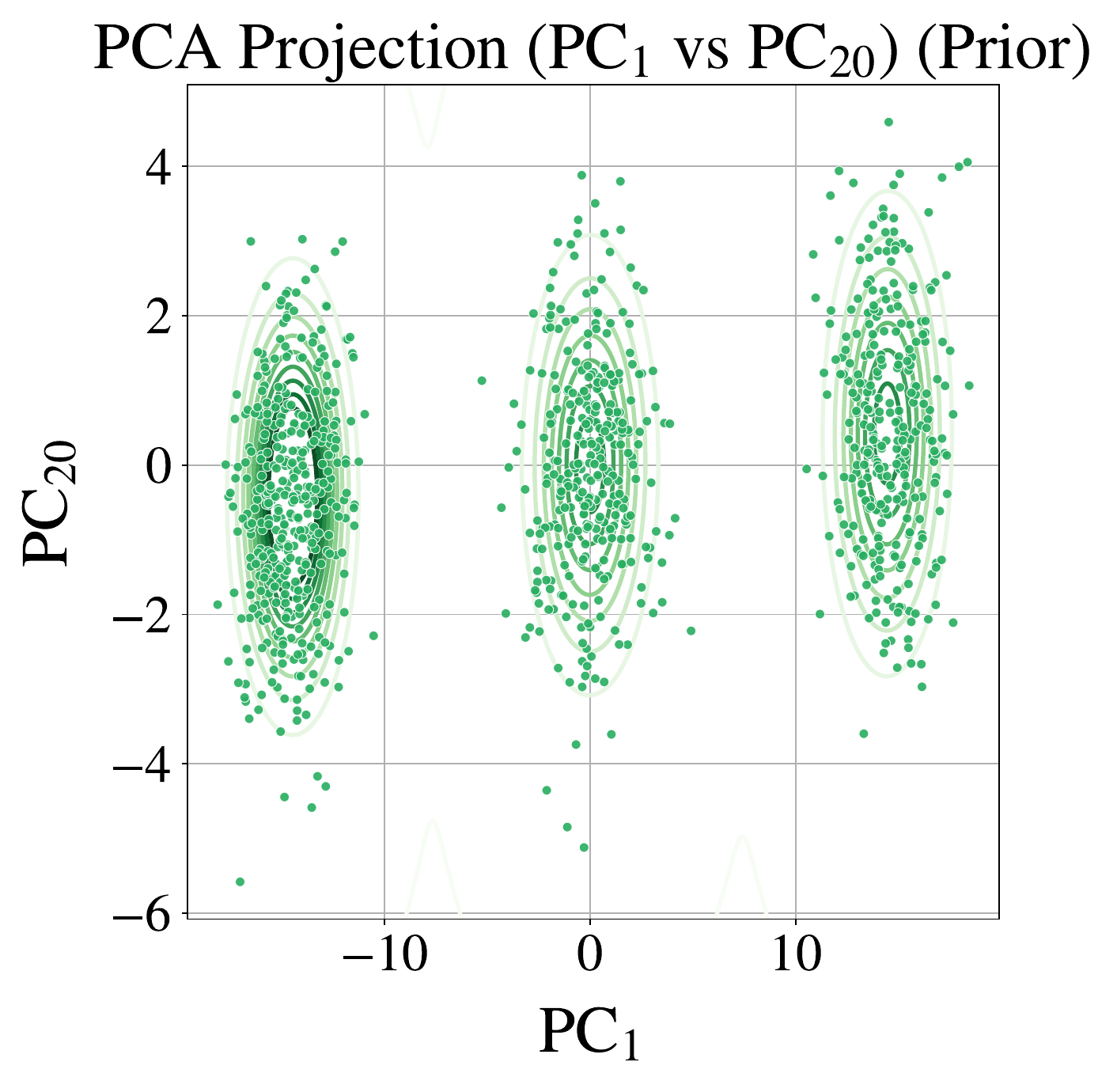}
    \caption{Prior}
\end{subfigure}
\hspace{1cm}
\begin{subfigure}[t]{0.24\textwidth}
    \centering
    \includegraphics[width=\linewidth]{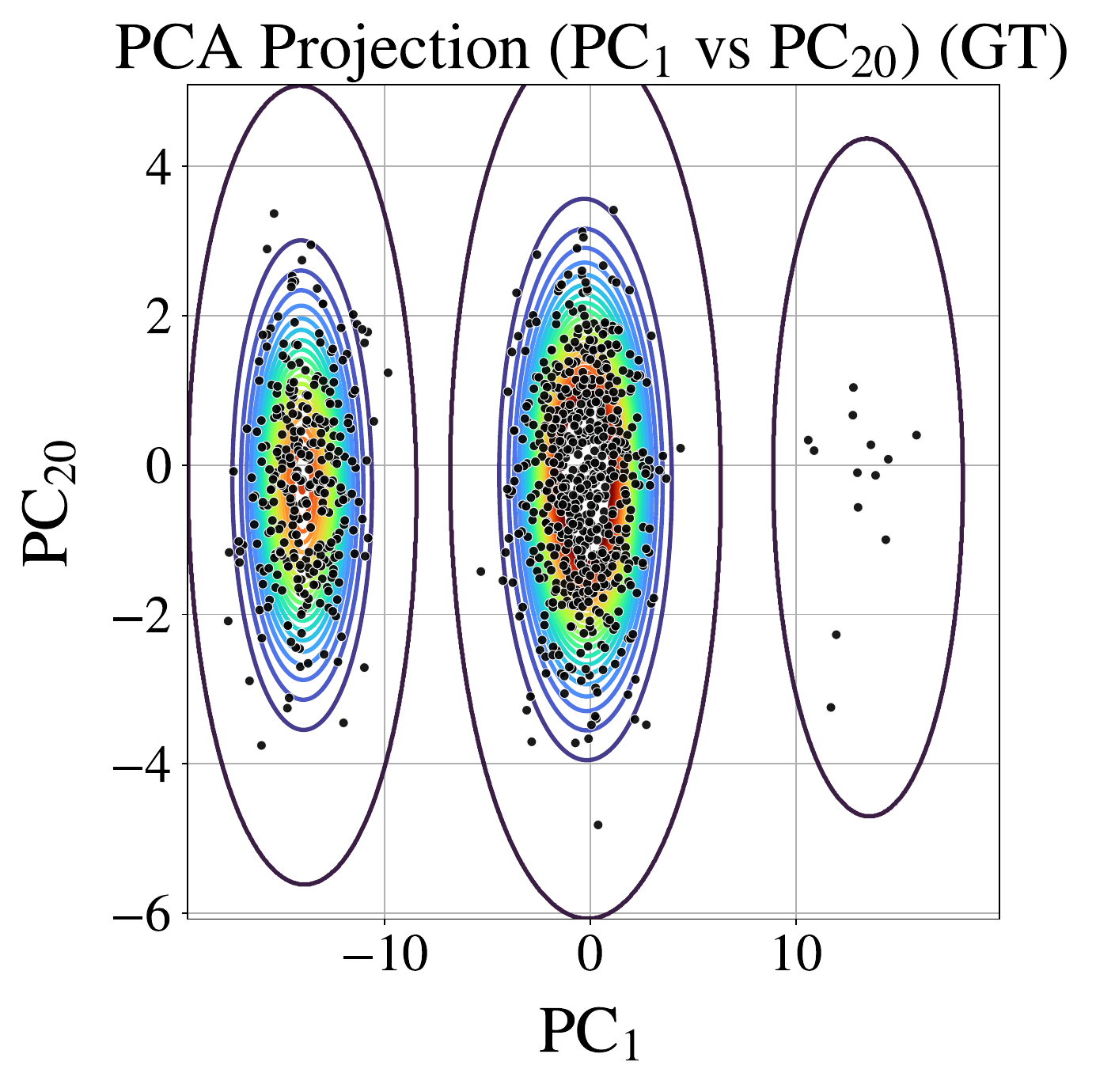}
    \caption{GT posterior}
\end{subfigure}

\vspace{0.1cm}

\begin{subfigure}[t]{0.24\textwidth}
    \centering
    \includegraphics[width=\linewidth]{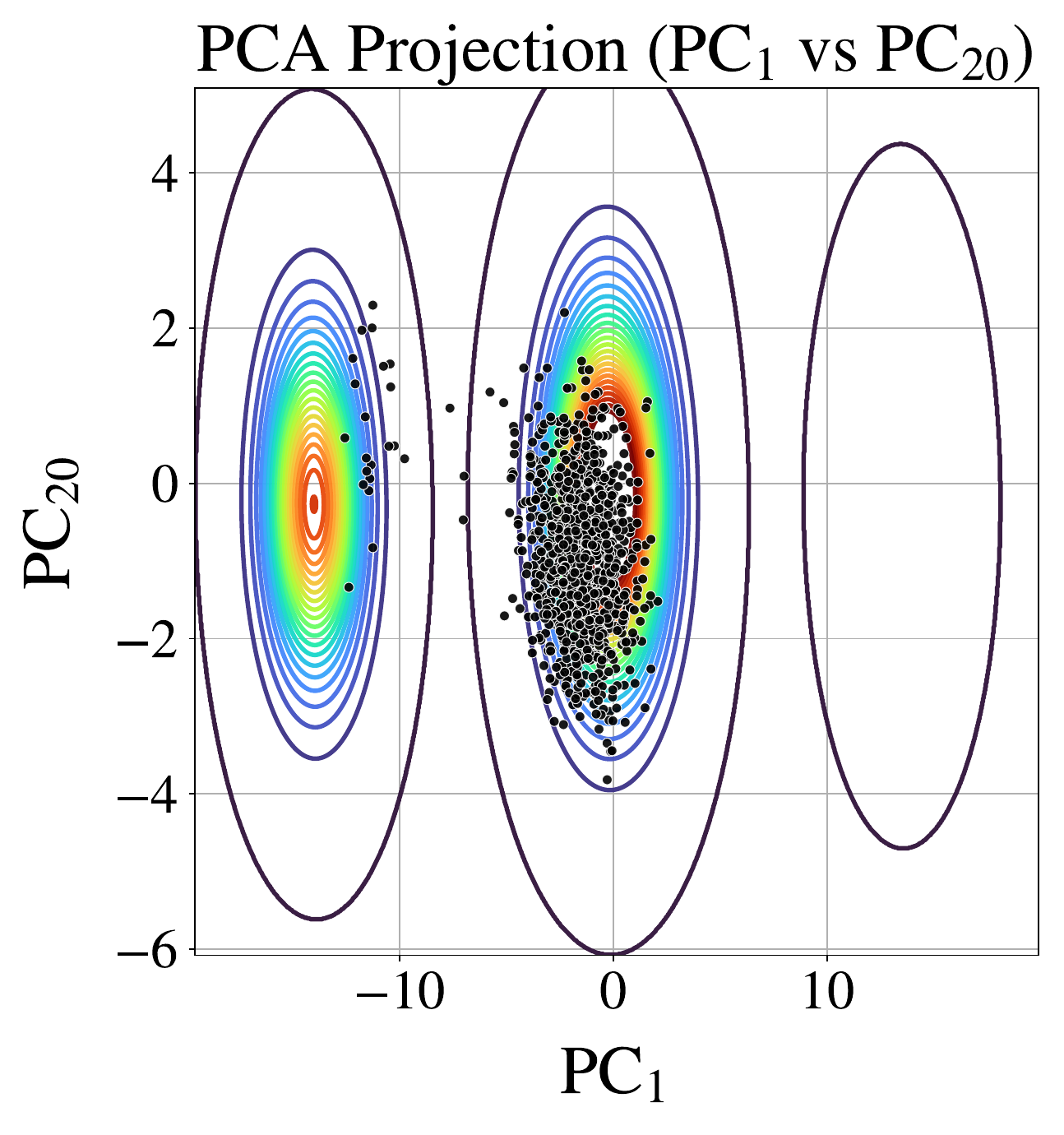}
    \caption{DPS}
\end{subfigure}
\hfill
\begin{subfigure}[t]{0.24\textwidth}
    \centering
    \includegraphics[width=\linewidth]{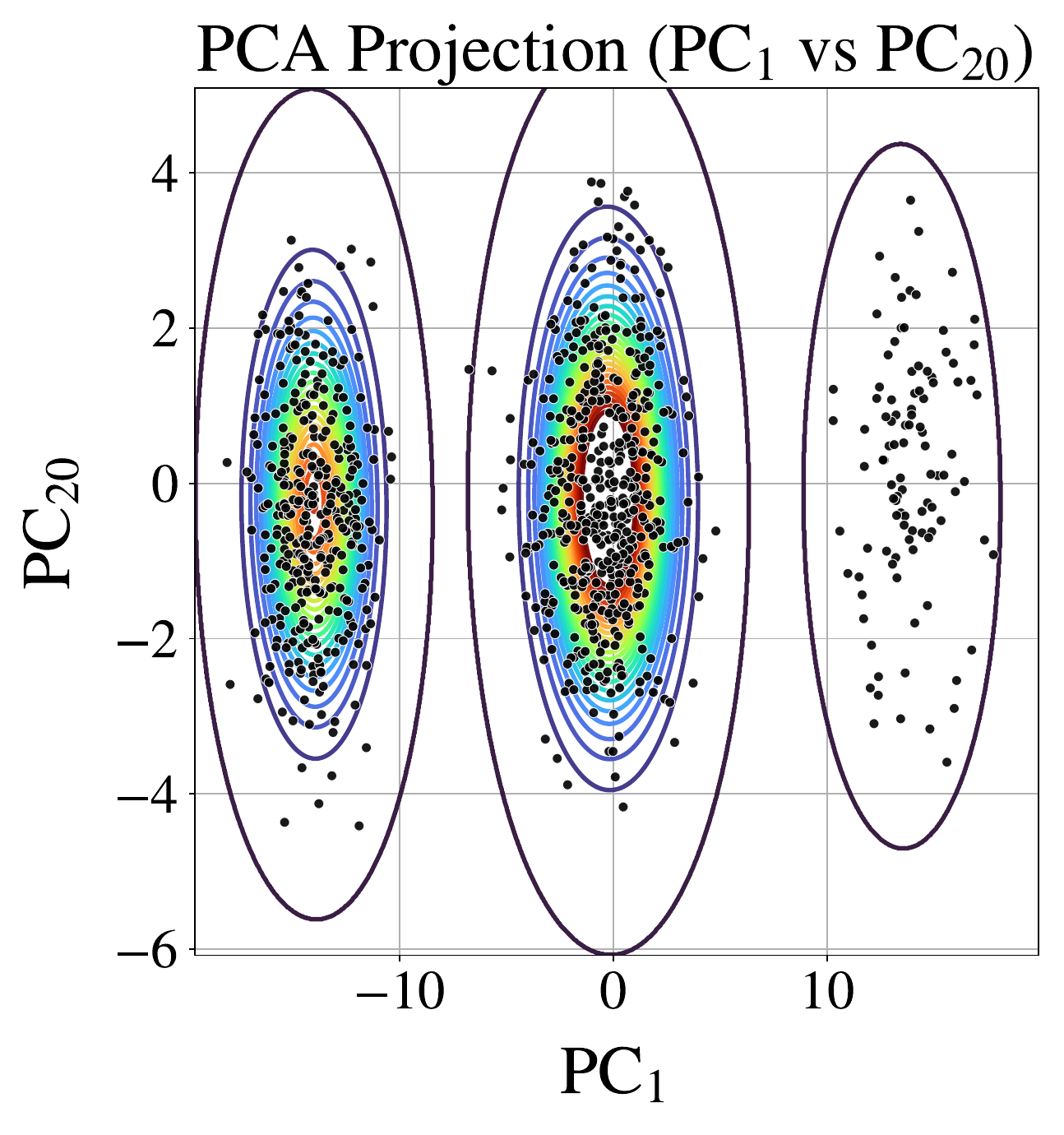}
    \caption{$\Pi$GDM}
\end{subfigure}
\hfill
\begin{subfigure}[t]{0.24\textwidth}
    \centering
    \includegraphics[width=\linewidth]{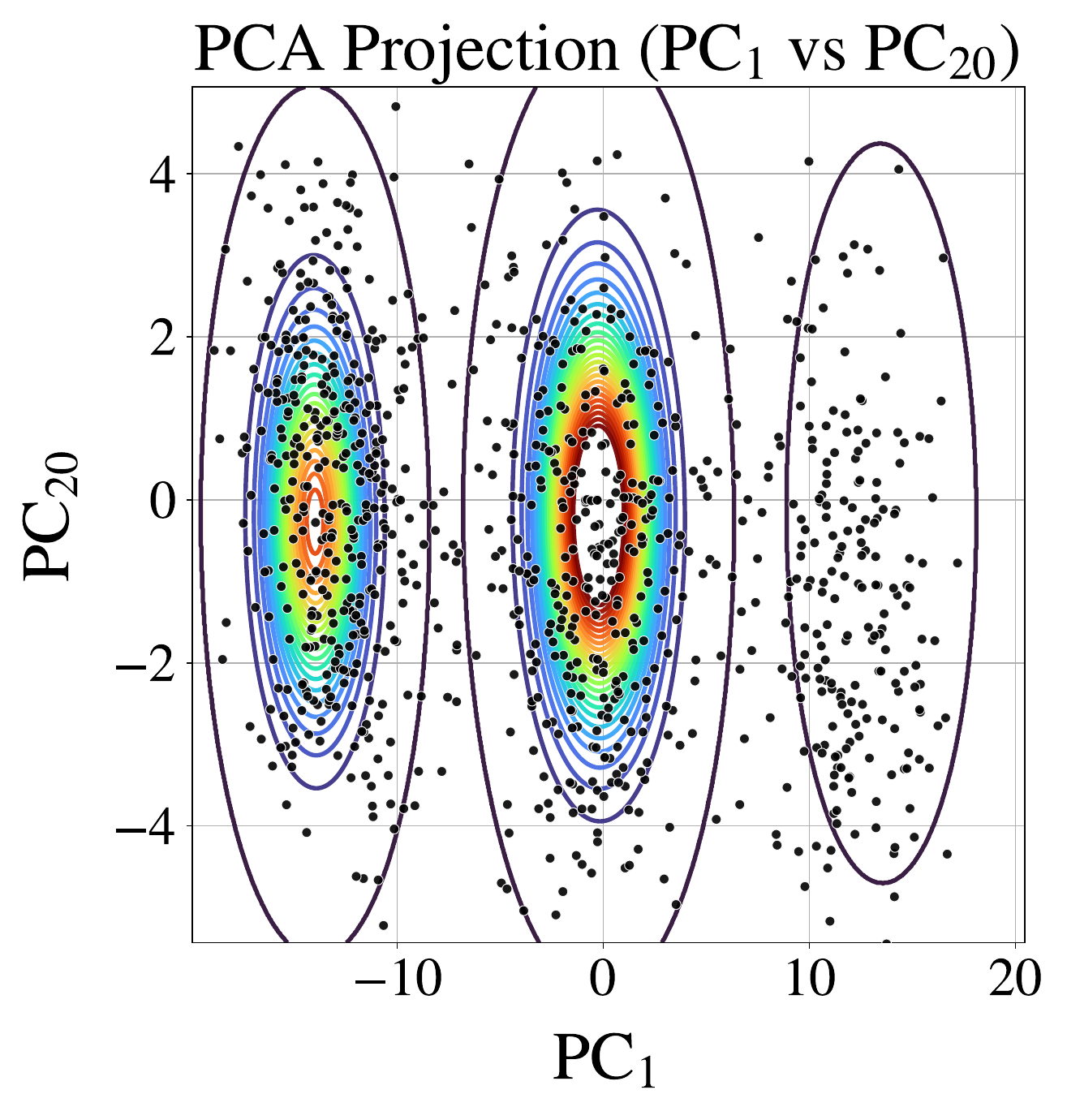}
    \caption{DAPS}
\end{subfigure}
\hfill
\begin{subfigure}[t]{0.24\textwidth}
    \centering
    \includegraphics[width=\linewidth]{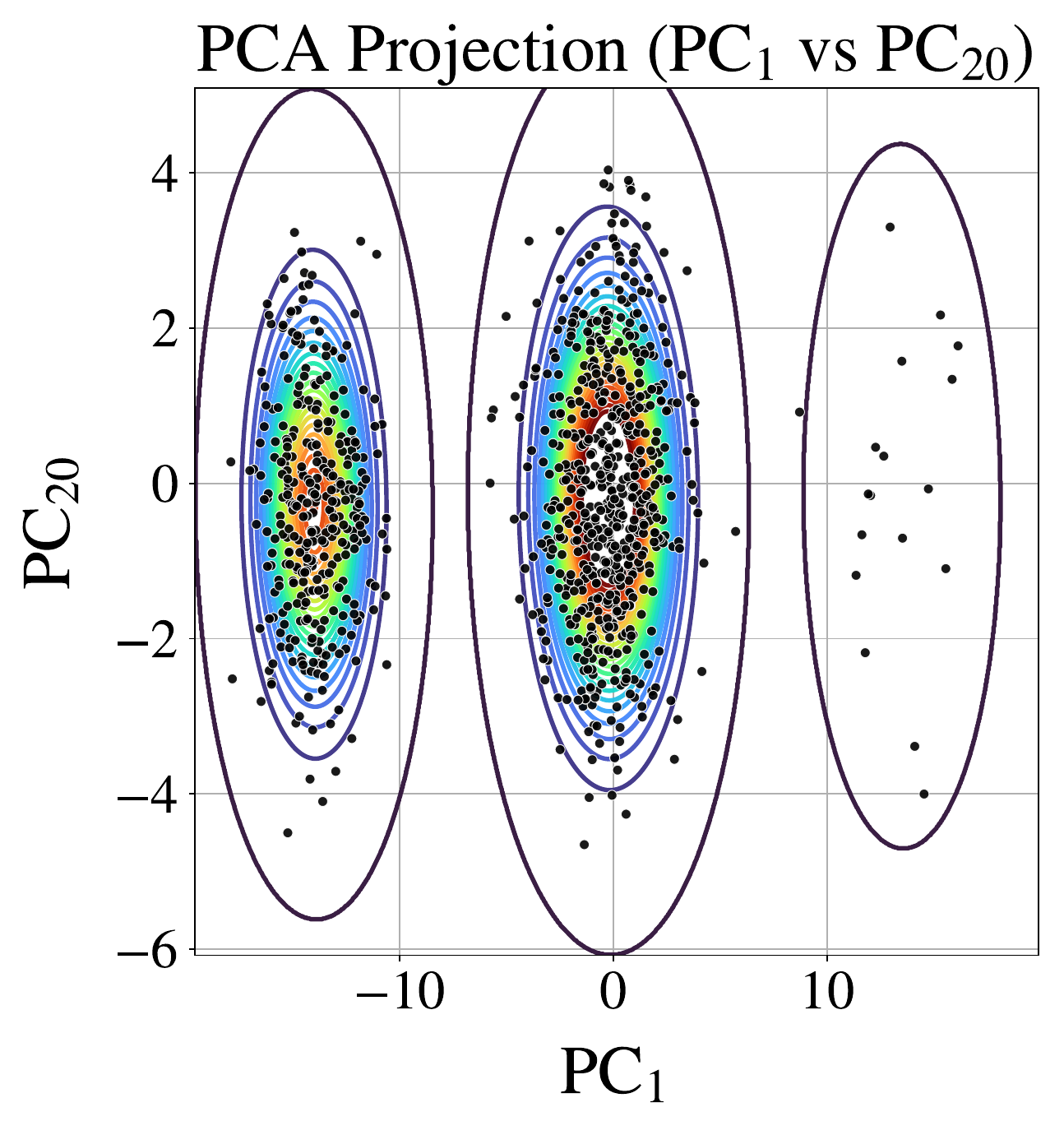}
    \caption{TR ($\epsilon = 0.1$)}
\end{subfigure}

\caption{
Qualitative comparison of posterior samples for the \emph{random linear sensing} benchmark. Top row: projections of the prior and ground-truth posterior, with contour lines showing the projected densities. Bottom row: projected samples from each method, shown against the same ground-truth posterior contours. All samples are projected onto the two-dimensional subspace spanned by the eigenvectors of the reference posterior covariance corresponding to its smallest and largest eigenvalues.
}
\label{fig:random_linear_sensing_qualitative}

\end{figure}

\subsection{Inpainting problem}
The corresponding results for the inpainting benchmark are shown in Tables~\ref{tab:linear_posterior_metrics} and \ref{tab:linear_is_metrics}. This problem tests posterior recovery in directions that are not directly constrained by the observations, requiring accurate modeling of the conditional structure induced by partial observation. Both trust-region configurations outperform DPS, DAPS, and $\Pi$GDM across all reported metrics.

The importance-sampling diagnostics in Table~\ref{tab:linear_is_metrics} show the same pattern: TR has the smallest control error and largest NESS, while DPS exhibits severe weight degeneracy. Reweighting substantially improves posterior mean estimates for $\Pi$GDM and TR, but not for DPS.

The qualitative projections in Figure~\ref{fig:inpainting_qualitative} show that TR best captures both the concentration and spread of the reference posterior, whereas competing methods either distort the posterior geometry or remain overly diffuse.

\begin{figure}
\centering

\begin{subfigure}[t]{0.24\textwidth}
    \centering
    \includegraphics[width=\linewidth]{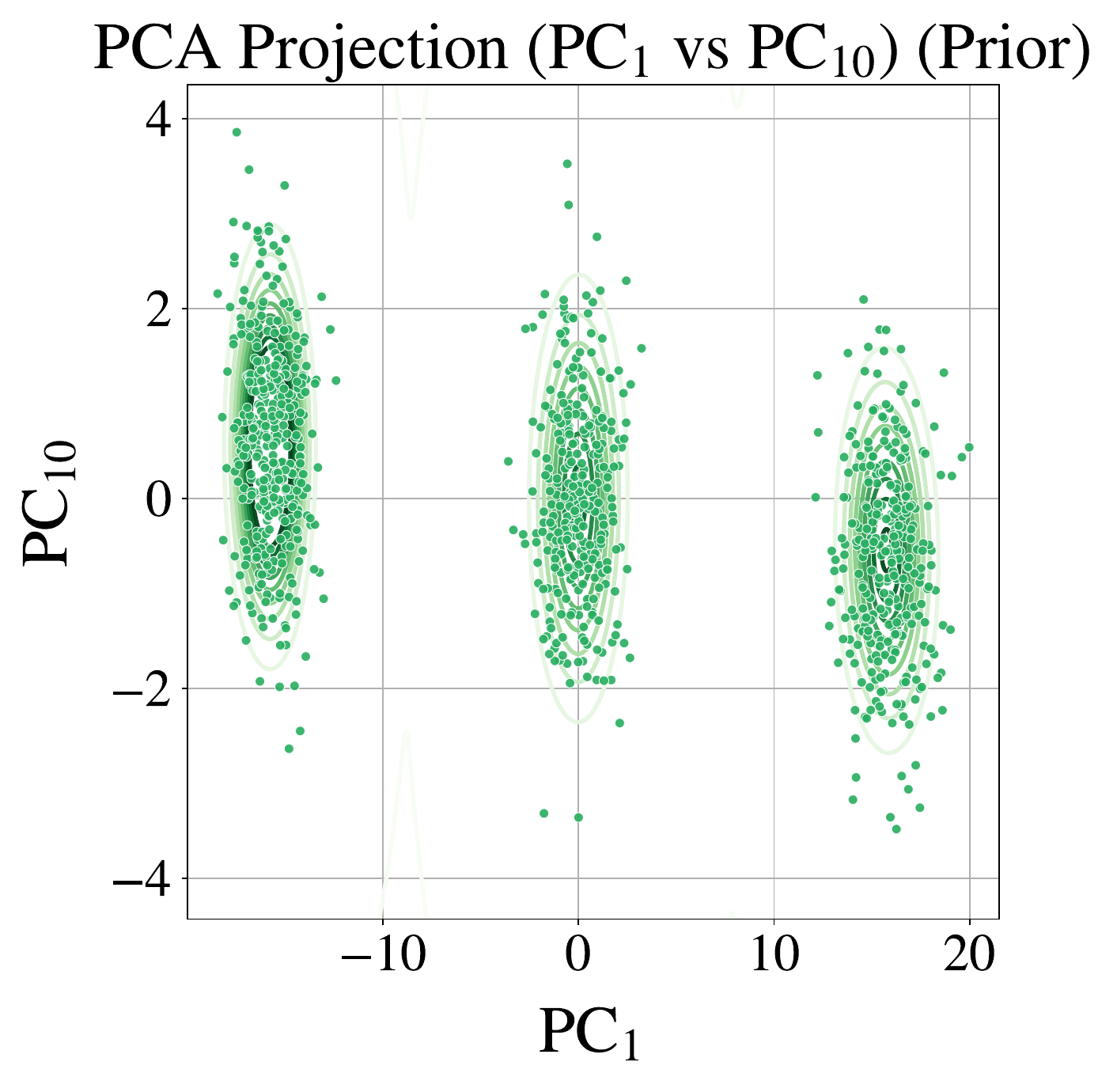}
    \caption{Prior}
\end{subfigure}
\hspace{1cm}
\begin{subfigure}[t]{0.24\textwidth}
    \centering
    \includegraphics[width=\linewidth]{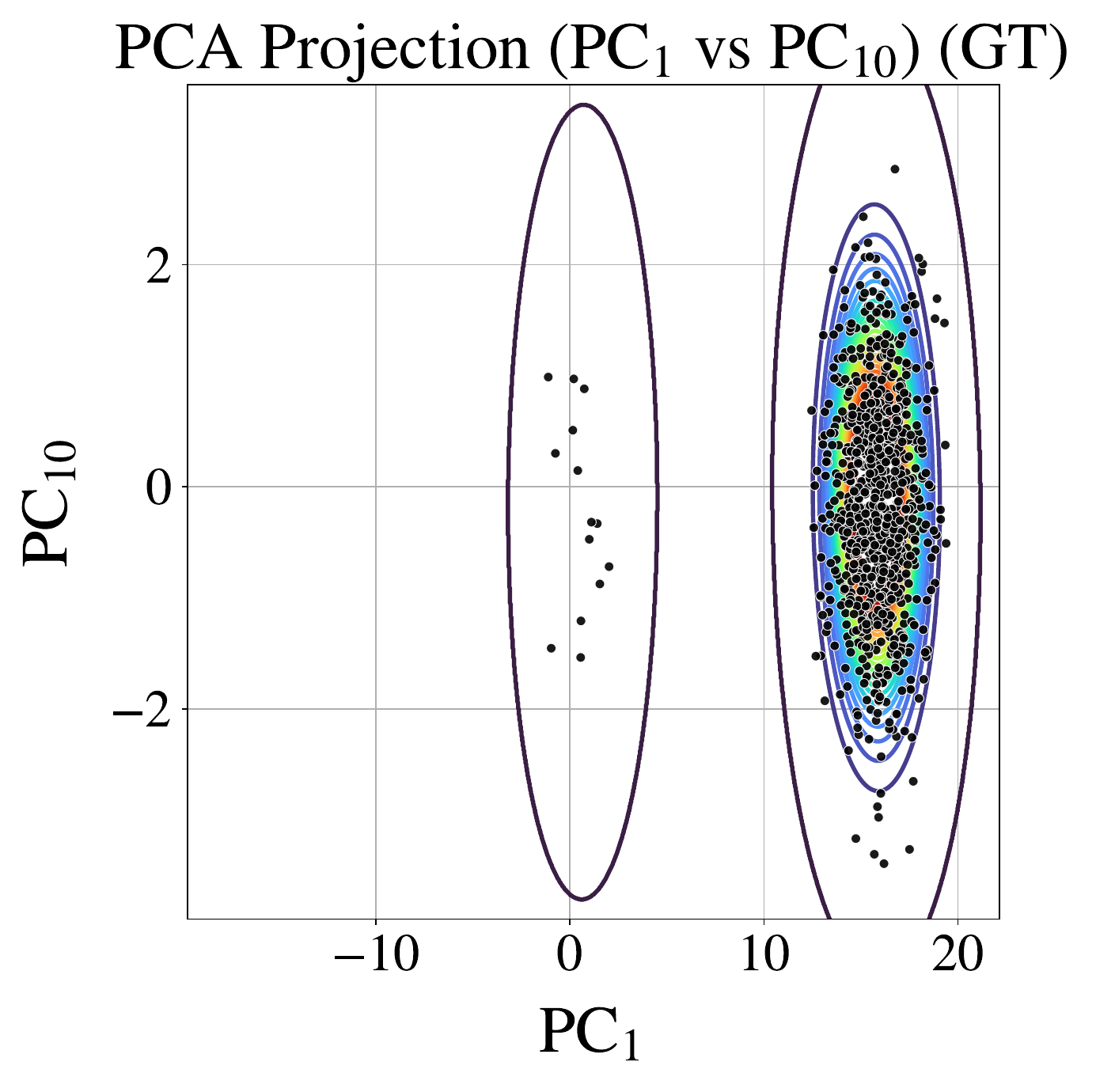}
    \caption{GT posterior}
\end{subfigure}

\vspace{0.1cm}

\begin{subfigure}[t]{0.24\textwidth}
    \centering
    \includegraphics[width=\linewidth]{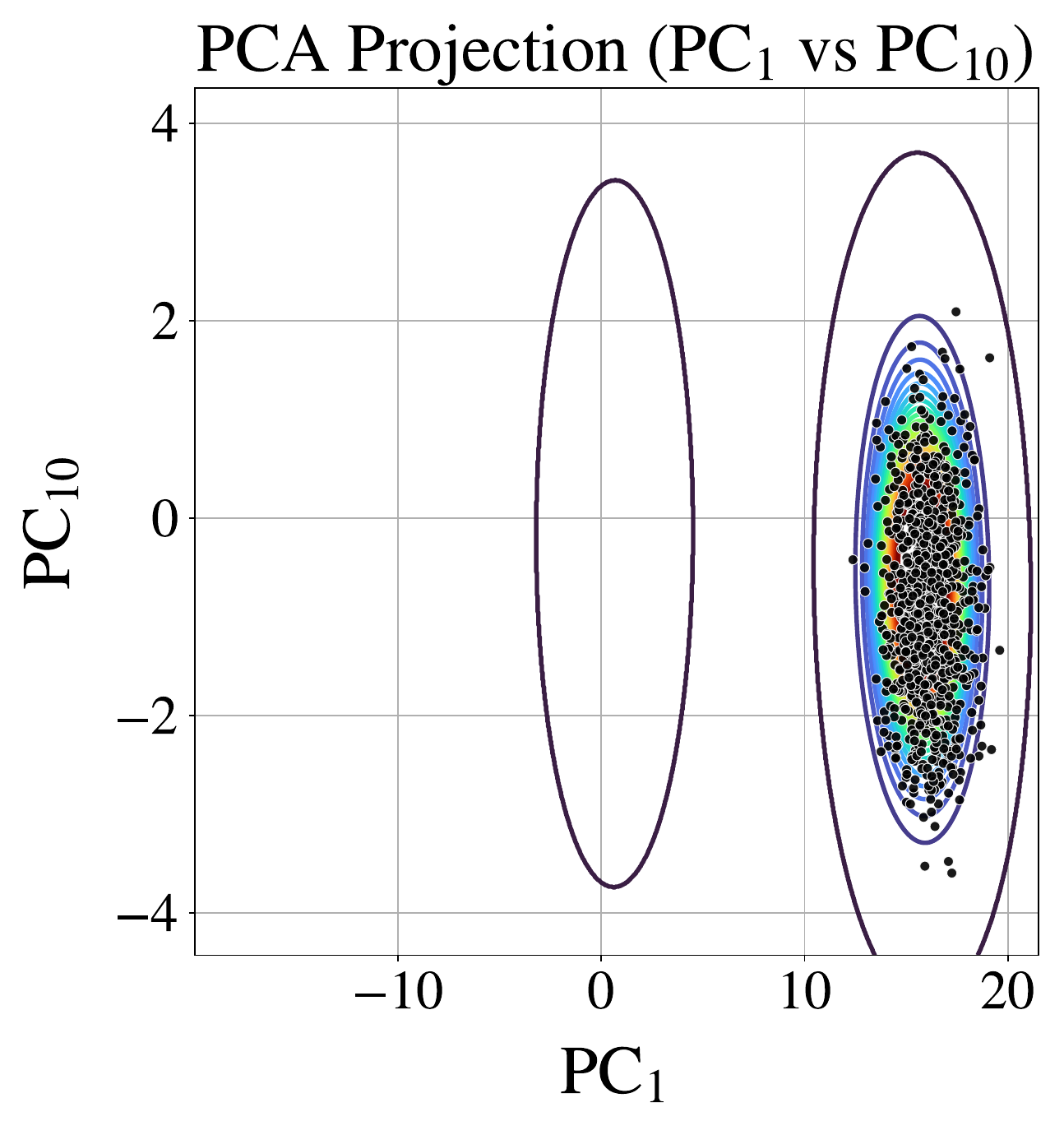}
    \caption{DPS}
\end{subfigure}
\hfill
\begin{subfigure}[t]{0.24\textwidth}
    \centering
    \includegraphics[width=\linewidth]{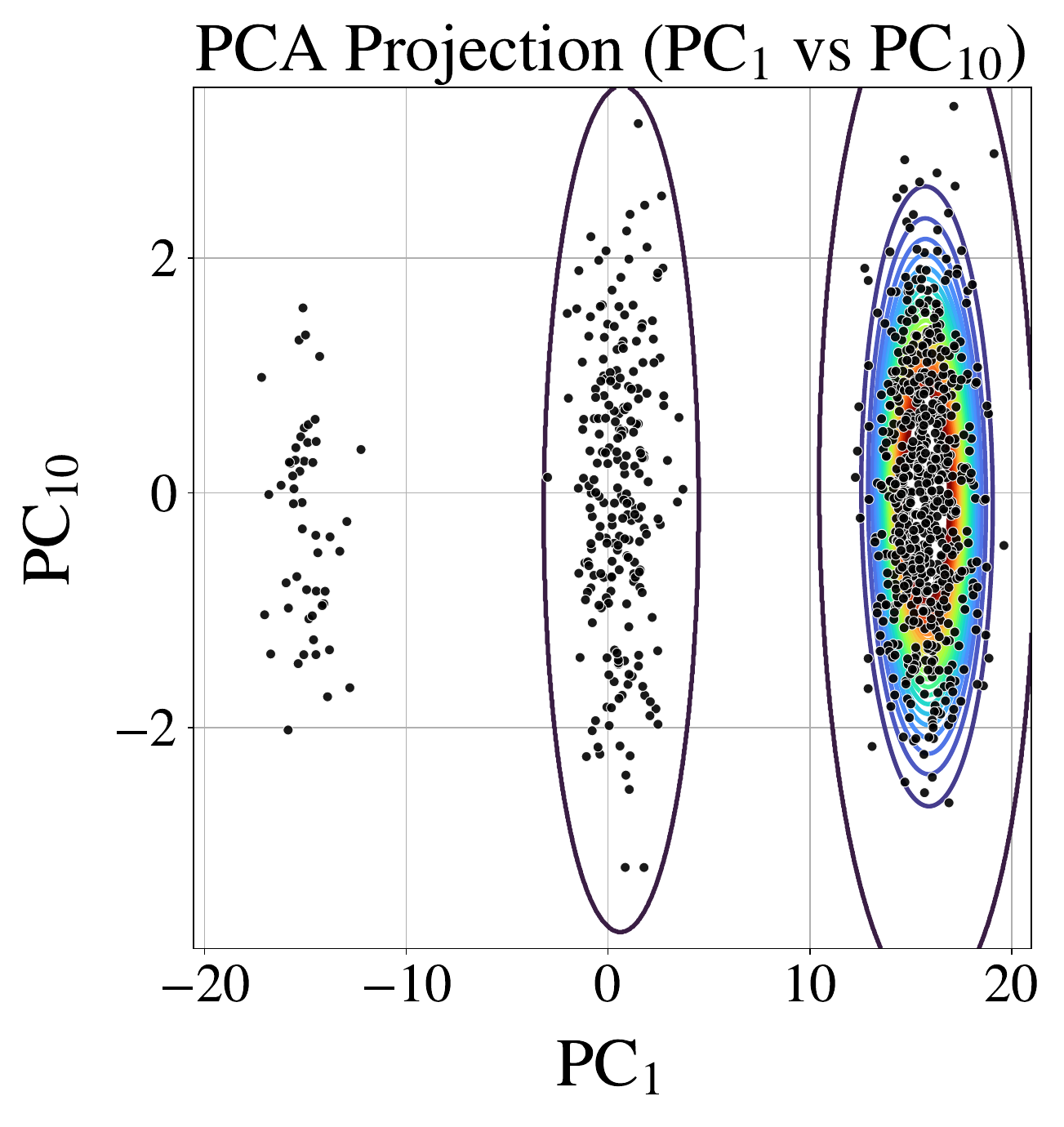}
    \caption{$\Pi$GDM}
\end{subfigure}
\hfill
\begin{subfigure}[t]{0.24\textwidth}
    \centering
    \includegraphics[width=\linewidth]{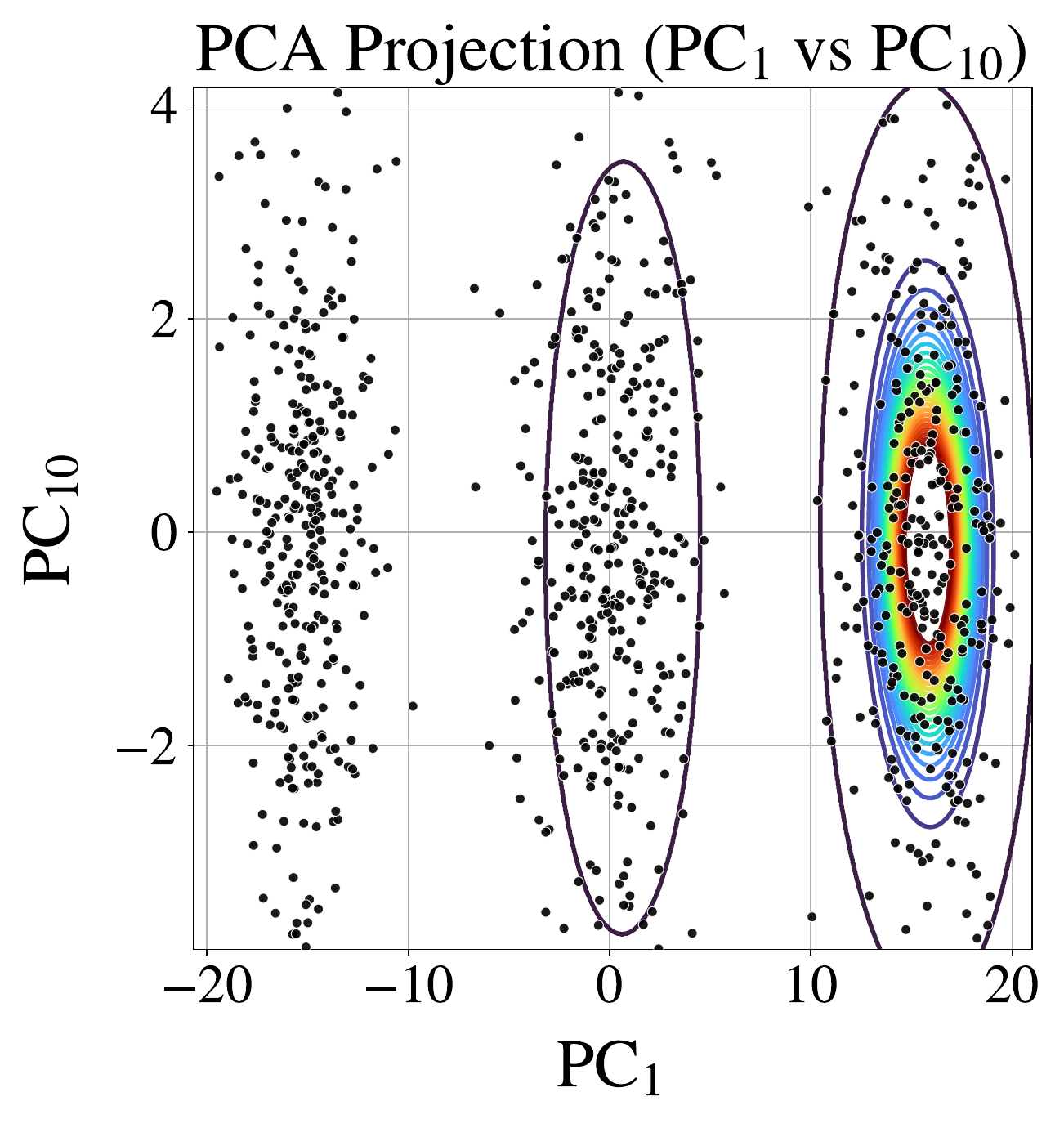}
    \caption{DAPS}
\end{subfigure}
\hfill
\begin{subfigure}[t]{0.24\textwidth}
    \centering
    \includegraphics[width=\linewidth]{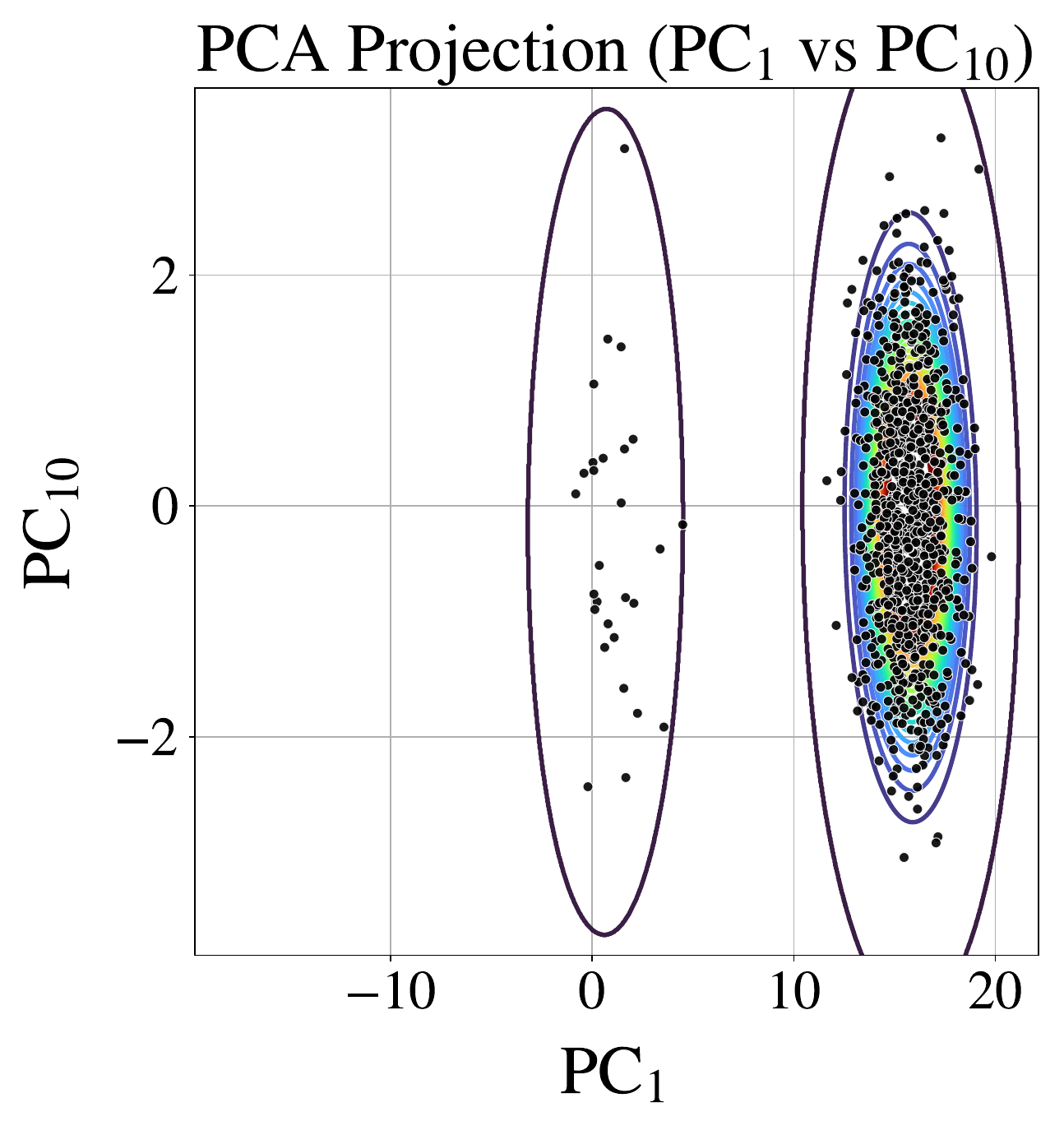}
    \caption{TR ($\epsilon = 0.1$)}
\end{subfigure}

\caption{Qualitative comparison for the \emph{inpainting} benchmark problem. Samples are shown in the two-dimensional subspace spanned by the eigenvectors of the reference posterior covariance associated with its smallest and largest eigenvalues.
}
\label{fig:inpainting_qualitative}

\end{figure}

\subsection{X-ray tomography problem}

Table~\ref{tab:nonlinear_posterior_metrics} reports results for the x-ray tomography benchmark, which is nonlinear and has no closed-form posterior or optimal control. $\Pi$GDM and control error diagnostics are therefore omitted. Both trust-region configurations outperform DPS and DAPS across all posterior accuracy metrics. In this benchmark, the more conservative trust-region radius $\epsilon=0.01$ gives the best overall performance. The importance-sampling diagnostics in Table~\ref{tab:nonlinear_is_metrics} show the same trend: TR with $\epsilon=0.01$ attains the largest NESS and the lowest reweighted mean error, indicating more stable path-space weights and more accurate importance-corrected posterior estimates.

\begin{table}[h]
\centering
\small
\resizebox{\textwidth}{!}{%
\begin{tabular}{llcccc}
\toprule
Problem & Method 
& {\makecell{Mean\\Error $\downarrow$}} 
& {\makecell{Cov\\Error $\downarrow$}} 
& {\makecell{MMD $\downarrow$}} 
& {\makecell{CMD $\downarrow$}} \\
\midrule
\multirow{4}{*}{\makecell[l]{X-ray\\tomography}}
& DPS 
& $9.03 \pm 6.66$ 
& $4.09 \pm 1.95$ 
& $0.453 \pm 0.323$ 
& $2.82 \pm 1.80$ \\

& DAPS 
& $0.402 \pm 0.119$ 
& $3.36 \pm 0.51$ 
& $0.167 \pm 0.031$ 
& $0.443 \pm 0.073$ \\

& TR ($\epsilon=0.01$) 
& $\mathbf{0.171 \pm 0.144}$ 
& $\mathbf{0.709 \pm 0.275}$ 
& $\mathbf{0.013 \pm 0.018}$ 
& $\mathbf{0.076 \pm 0.032}$ \\

& TR ($\epsilon=0.1$) 
& $0.173 \pm 0.145$ 
& $0.715 \pm 0.291$ 
& $0.014 \pm 0.019$ 
& $0.077 \pm 0.034$ \\

\midrule

\multirow{4}{*}{\makecell[l]{Phase\\retrieval}}
& DPS 
& $3.13 \pm 0.839$ 
& $3.88 \pm 0.741$ 
& $0.199 \pm 0.109$ 
& $0.572 \pm 0.041$ \\

& DAPS 
& $2.25 \pm 2.61$ 
& $5.73 \pm 1.39$ 
& $0.383 \pm 0.072$ 
& $0.453 \pm 0.218$ \\

& TR ($\epsilon=0.01$) 
& $2.513 \pm 7.030$ 
& $1.203 \pm 2.28$ 
& $0.159 \pm 0.482$ 
& $0.299 \pm 0.809$ \\

& TR ($\epsilon=0.1$) 
& $\mathbf{0.504 \pm 0.641}$ 
& $\mathbf{0.446 \pm 0.373}$ 
& $\mathbf{0.007 \pm 0.007}$ 
& $\mathbf{0.071 \pm 0.087}$ \\
\bottomrule
\end{tabular}
}
\caption{Quantitative posterior sampling results for the two nonlinear benchmark problems}
\label{tab:nonlinear_posterior_metrics}
\end{table}

\begin{table}[h]
\centering
\small
\resizebox{.85\textwidth}{!}{%
\begin{tabular}{llccc}
\toprule
Problem & Method 
& {\makecell{NESS $\uparrow$}} 
& {\makecell{Reweighted\\Mean Error $\downarrow$}} 
& {\makecell{Percent Reduction in\\Mean Error $\uparrow$}} \\
\midrule
\multirow{4}{*}{\makecell[l]{X-ray\\tomography}}
& DPS 
& $0.0006 \pm 0.0008$ 
& $2.84 \pm 1.25$ 
& $\mathbf{57.8 \pm 25.0}$ \\

& DAPS 
& -- 
& -- 
& -- \\

& TR ($\epsilon=0.01$) 
& $\mathbf{0.607 \pm 0.213}$ 
& $\mathbf{0.143 \pm 0.074}$ 
& $4.79 \pm 18.1$ \\

& TR ($\epsilon=0.1$) 
& $0.607 \pm 0.214$ 
& $0.146 \pm 0.079$ 
& $5.14 \pm 16.8$ \\

\midrule

\multirow{4}{*}{\makecell[l]{Phase\\retrieval}}
& DPS 
& $0.0001 \pm 0.0000$ 
& $7.11 \pm 5.04$ 
& $-106 \pm 91.7$ \\

& DAPS 
& -- 
& -- 
& -- \\

& TR ($\epsilon=0.01$) 
& $0.552 \pm 0.167$ 
& $2.51 \pm 7.03$ 
& $\mathbf{-7.08 \pm 68.1}$ \\

& TR ($\epsilon=0.1$) 
& $\mathbf{0.581 \pm 0.205}$ 
& $\mathbf{0.534 \pm 0.873}$ 
& $-47.6 \pm 181$ \\
\bottomrule
\end{tabular}
}
\caption{Importance sampling diagnostics for the nonlinear benchmark problems.}
\label{tab:nonlinear_is_metrics}
\end{table}

The qualitative results in Figure~\ref{fig:xray_tomography_qualitative} are consistent with the quantitative metrics. Since the posterior density is not available analytically in this nonlinear benchmark, the reference contours are obtained from a kernel density estimate of the MCMC samples. The trust-region samples are in the closest agreement with the reference posterior, whereas DPS exhibits a structural mismatch and DAPS remains comparatively diffuse.

\begin{figure}
\centering

\begin{subfigure}[t]{0.24\textwidth}
    \centering
    \includegraphics[width=\linewidth]{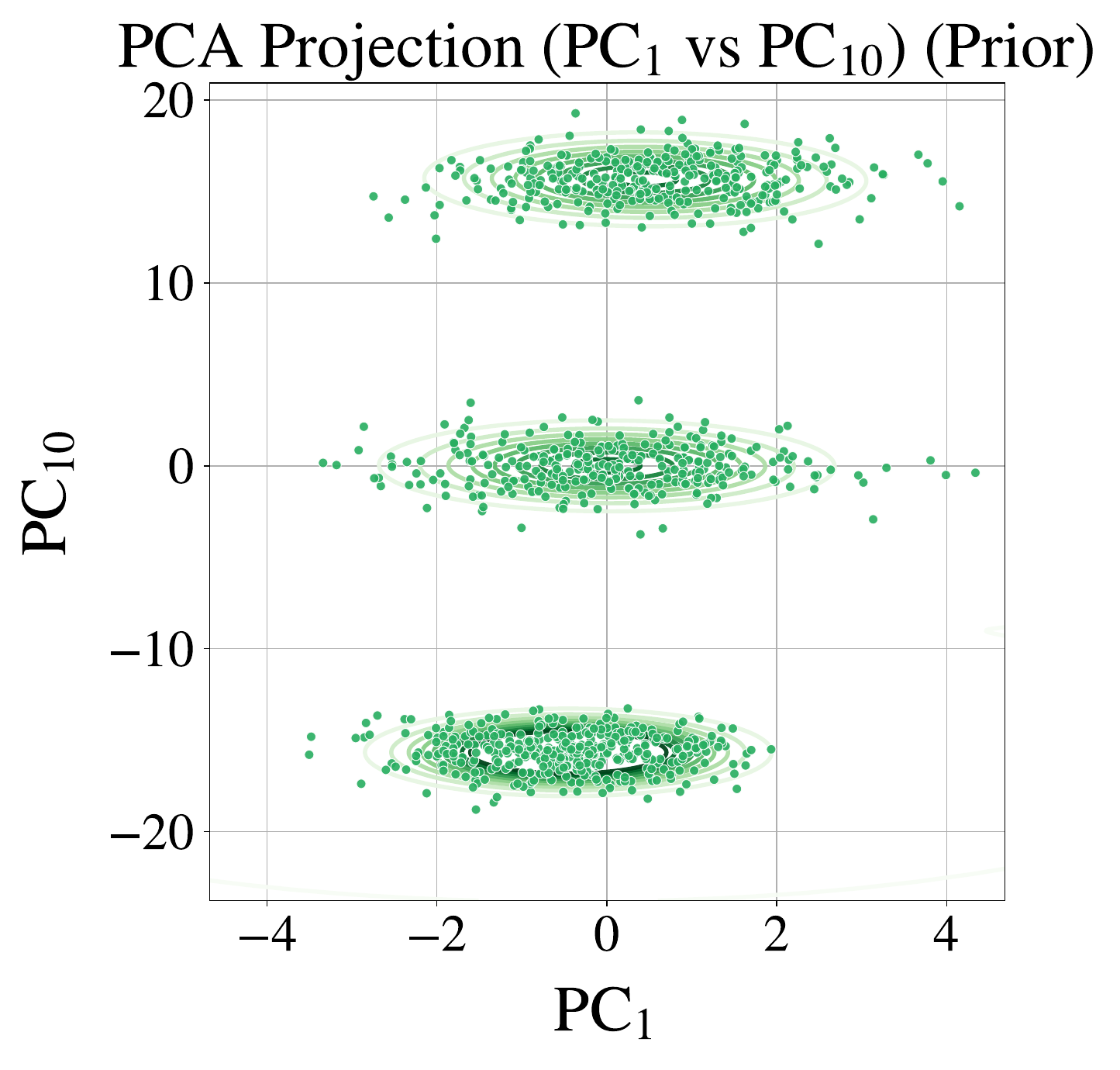}
    \caption{Prior}
\end{subfigure}
\hspace{1cm}
\begin{subfigure}[t]{0.24\textwidth}
    \centering
    \includegraphics[width=\linewidth]{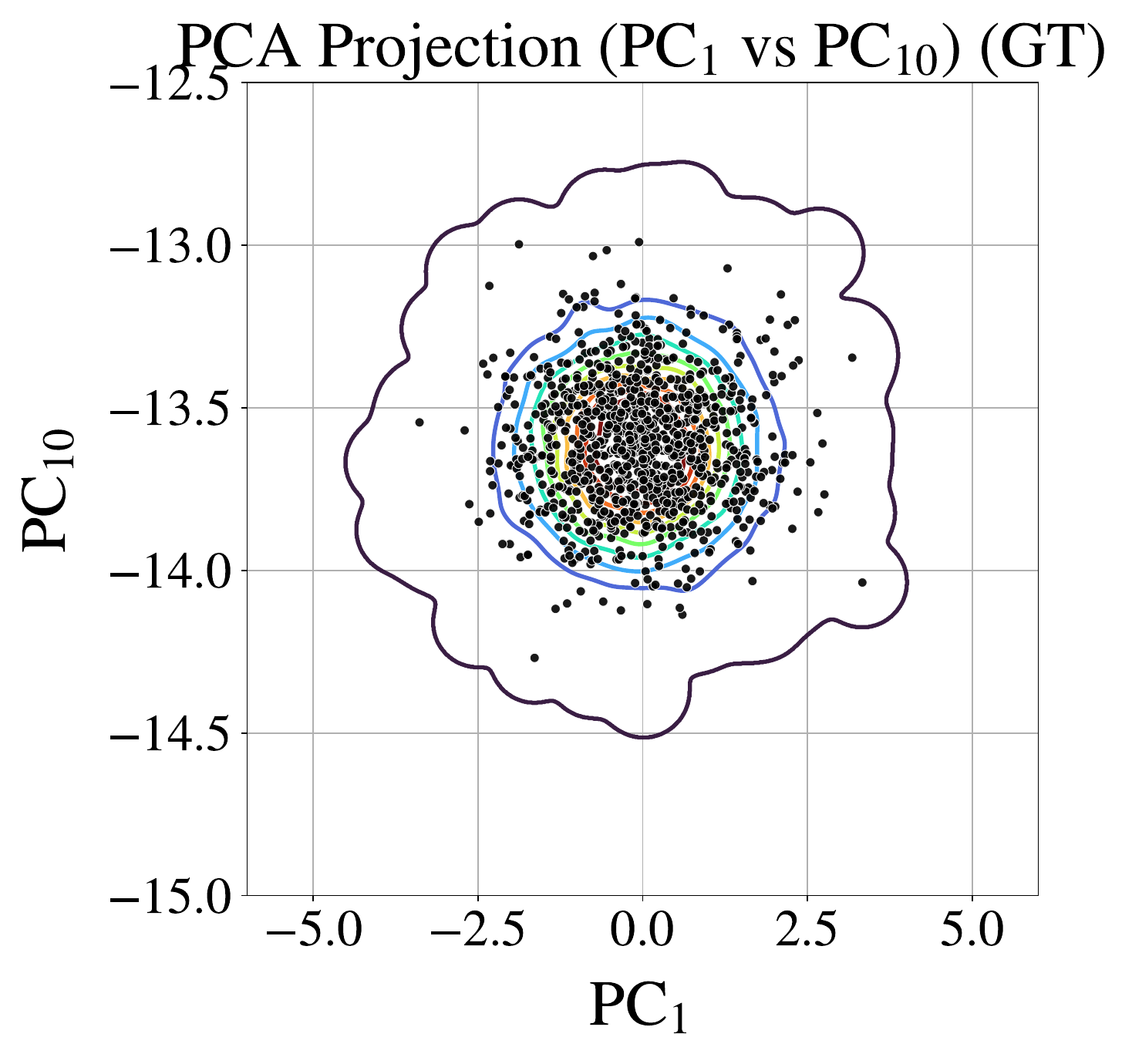}
    \caption{GT posterior}
\end{subfigure}

\vspace{0.1cm}

\begin{subfigure}[t]{0.24\textwidth}
    \centering
    \includegraphics[width=\linewidth]{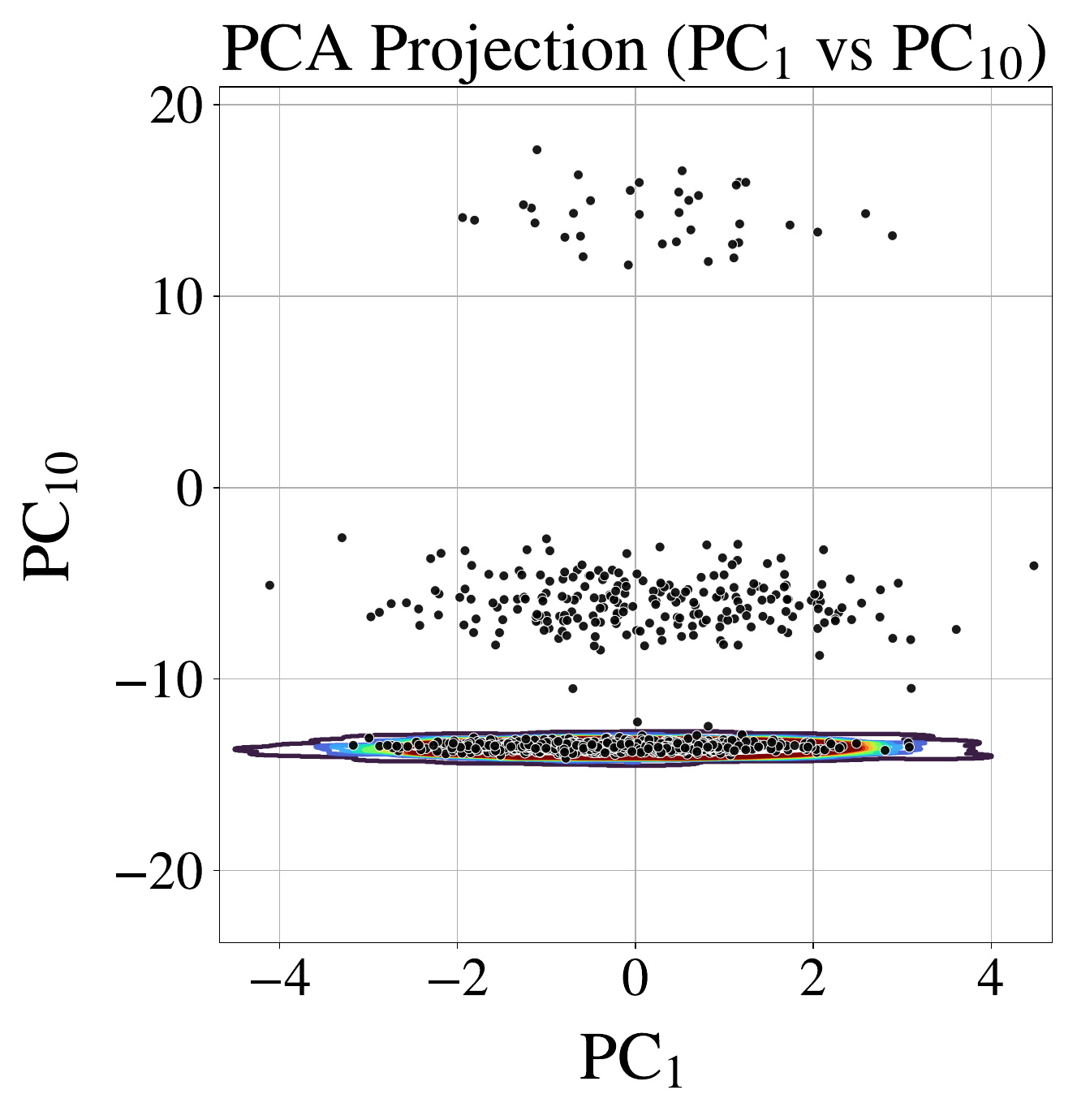}
    \caption{DPS}
\end{subfigure}
\hfill
\begin{subfigure}[t]{0.24\textwidth}
    \centering
    \includegraphics[width=\linewidth]{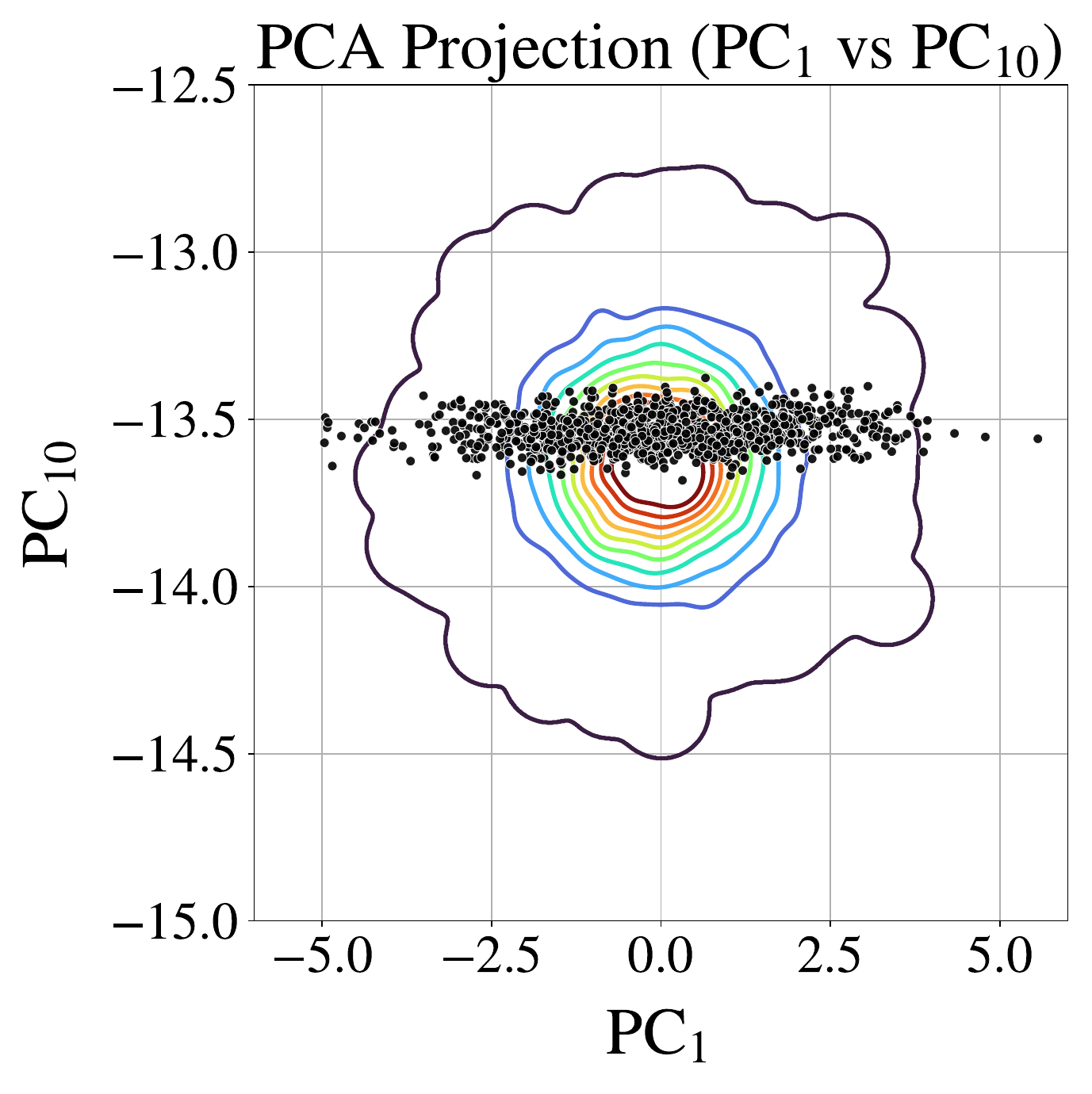}
    \caption{DAPS}
\end{subfigure}
\hfill
\begin{subfigure}[t]{0.24\textwidth}
    \centering
    \includegraphics[width=\linewidth]{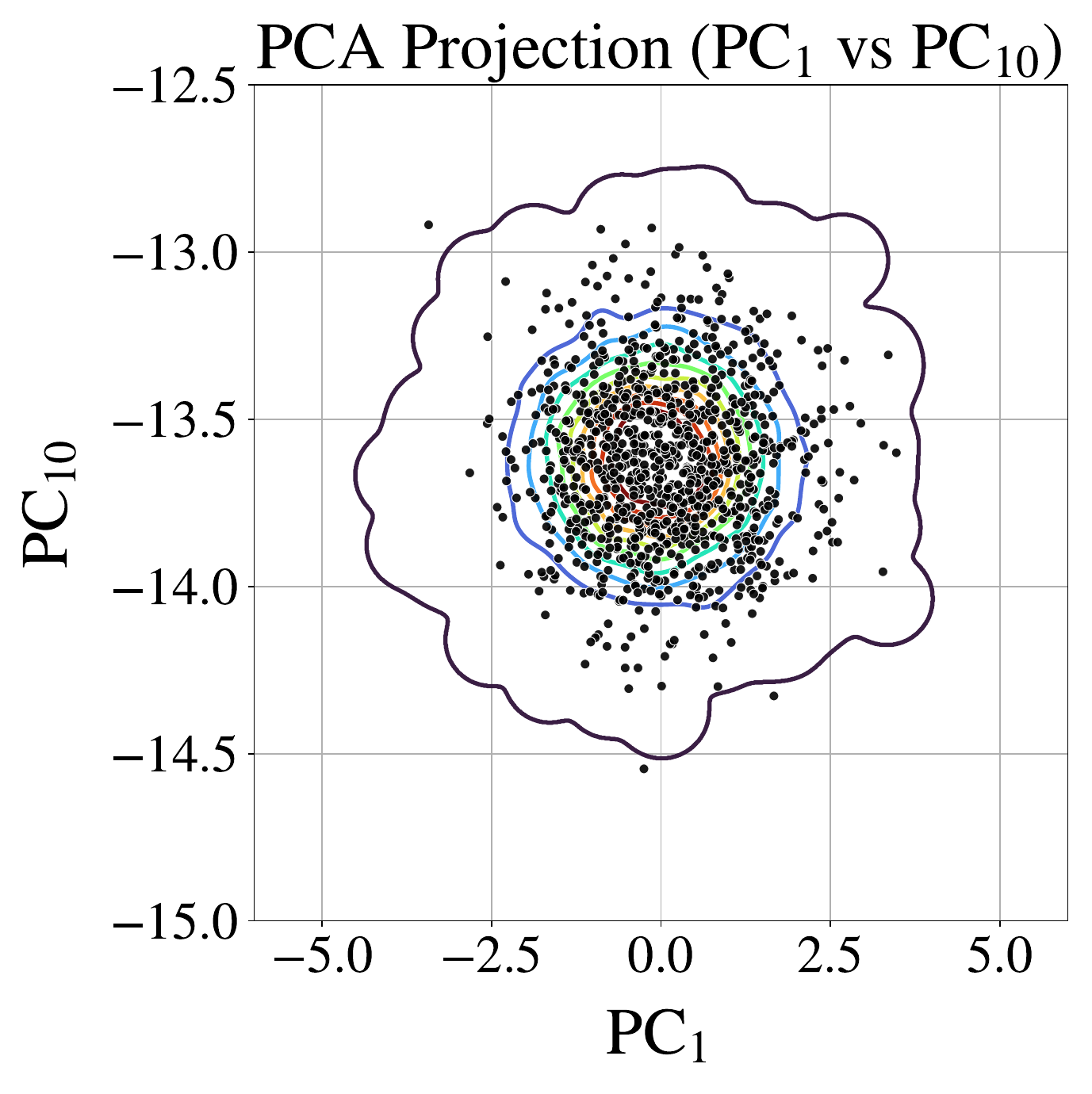}
    \caption{TR ($\epsilon = 0.01$)}
\end{subfigure}

\caption{Qualitative comparison for the \emph{x-ray tomography} benchmark problem. Reference contours are obtained from a kernel density estimate of the MCMC samples.}
\label{fig:xray_tomography_qualitative}

\end{figure}

\subsection{Phase retrieval problem}

Table~\ref{tab:nonlinear_posterior_metrics} reports results for the nonlinear and underdetermined phase retrieval benchmark. This is the most challenging problem considered and is more sensitive to the trust-region radius. The larger radius, $\epsilon=0.1$, substantially outperforms $\epsilon=0.01$ and achieves the best results across all posterior accuracy metrics, indicating that less conservative updates are beneficial in this setting. The importance-sampling diagnostics in Table~\ref{tab:nonlinear_is_metrics} show that TR also yields higher NESS than DPS, although reweighting is not beneficial for this difficult posterior. Figure~\ref{fig:phaseretrieval_qualitative} shows that TR with $\epsilon=0.1$ best matches the reference posterior projection, while DPS and DAPS deviate more strongly from the MCMC reference.

\begin{figure}
\centering

\begin{subfigure}[t]{0.24\textwidth}
    \centering
    \includegraphics[width=\linewidth]{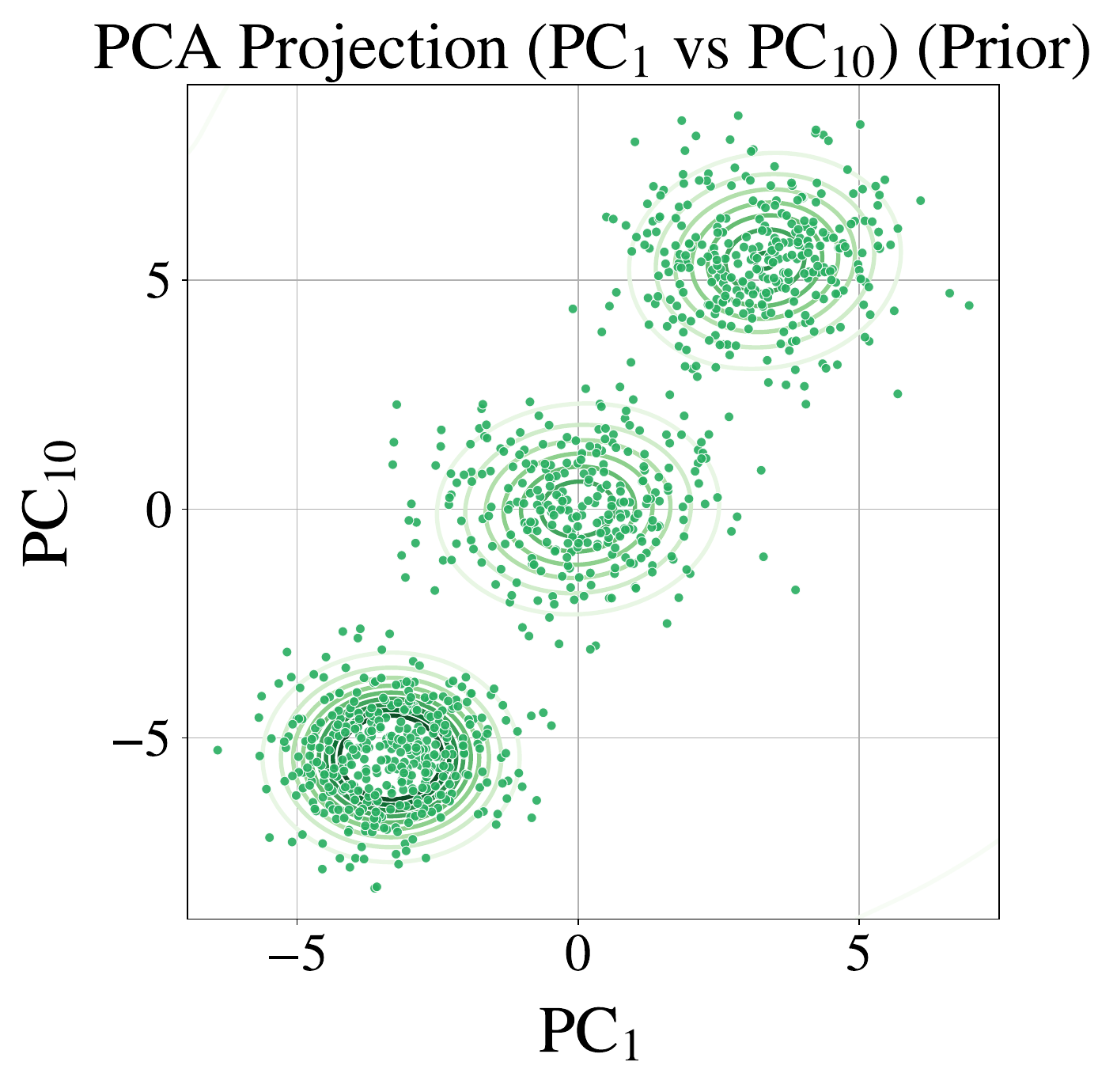}
    \caption{Prior}
\end{subfigure}
\hspace{1cm}
\begin{subfigure}[t]{0.24\textwidth}
    \centering
    \includegraphics[width=\linewidth]{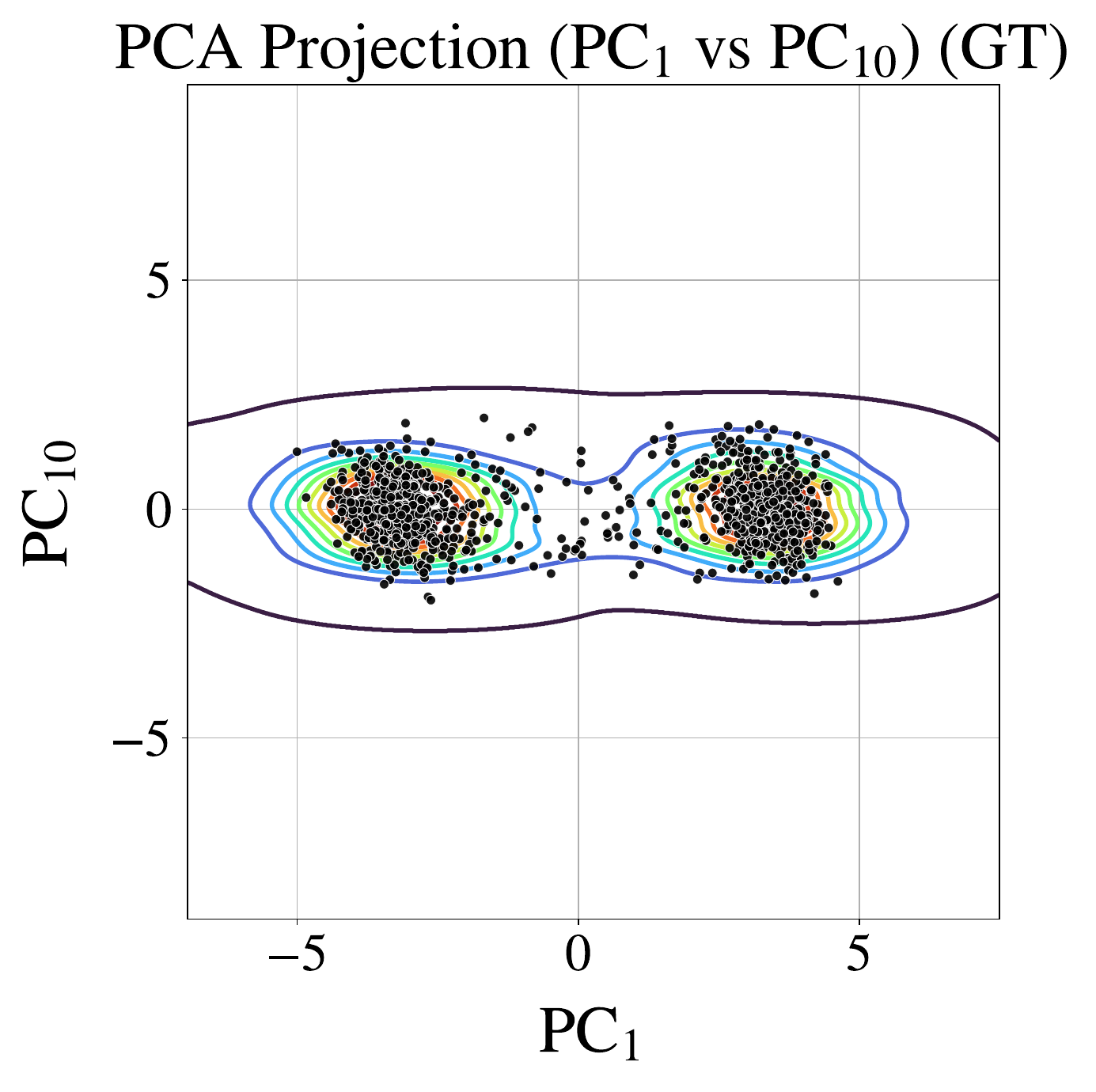}
    \caption{GT posterior}
\end{subfigure}

\vspace{0.1cm}
\begin{subfigure}[t]{0.24\textwidth}
    \centering
    \includegraphics[width=\linewidth]{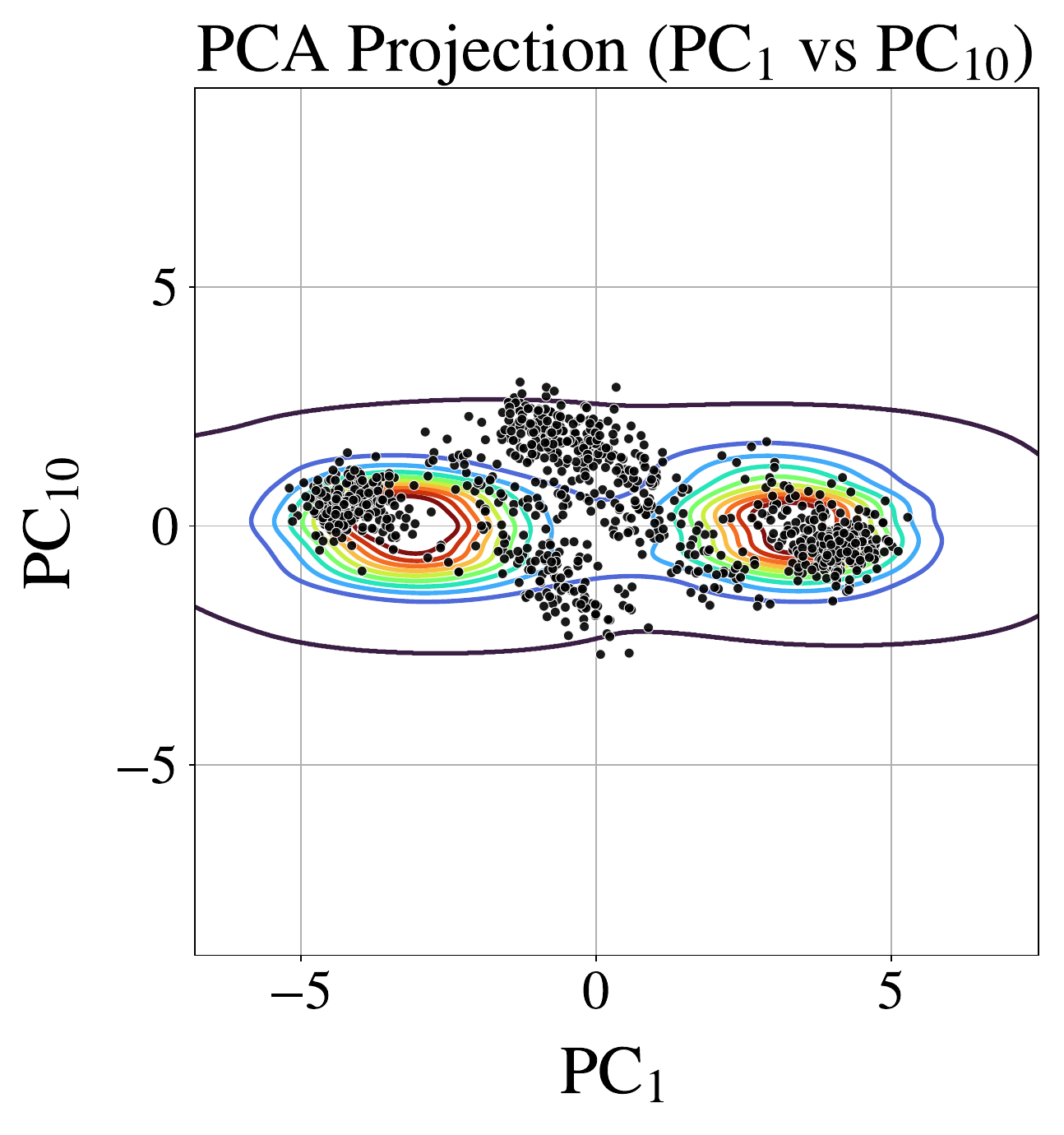}
    \caption{DPS}
\end{subfigure}
\hfill
\begin{subfigure}[t]{0.24\textwidth}
    \centering
    \includegraphics[width=\linewidth]{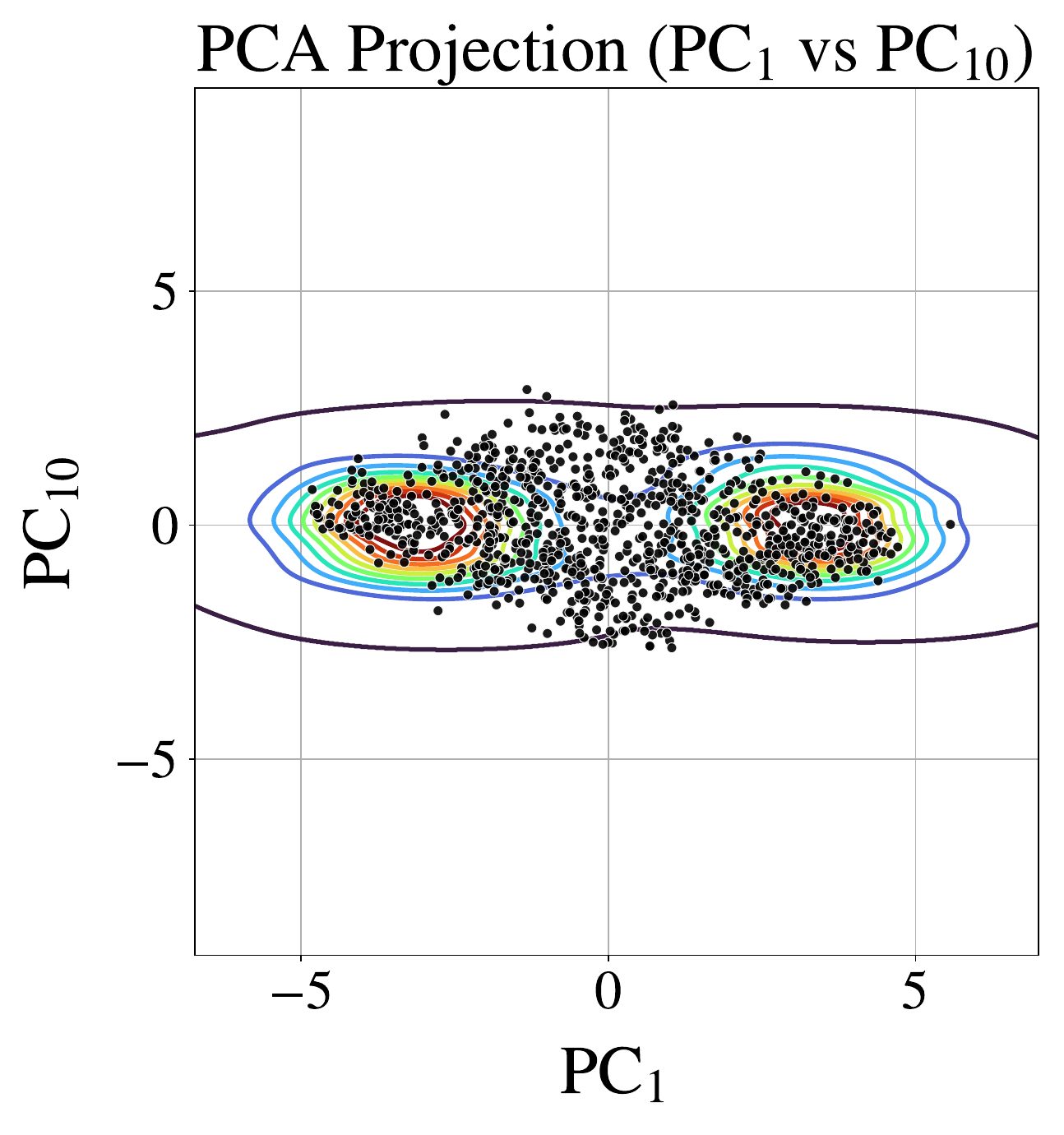}
    \caption{DAPS}
\end{subfigure}
\hfill
\begin{subfigure}[t]{0.24\textwidth}
    \centering
    \includegraphics[width=\linewidth]{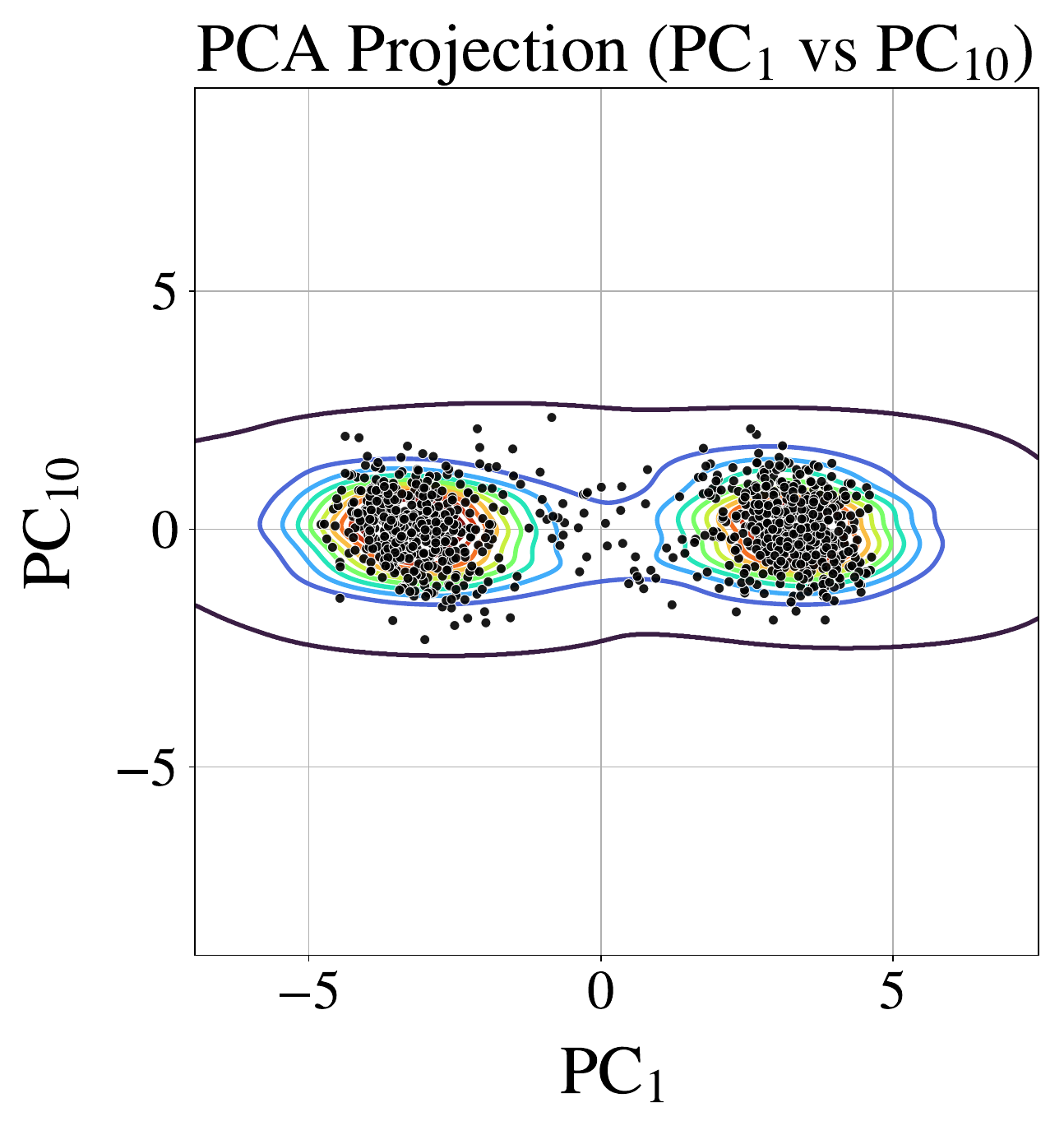}
    \caption{TR ($\epsilon = 0.1$)}
\end{subfigure}

\caption{Qualitative comparison for the \emph{phase retrieval} benchmark problem. Reference contours are obtained from a kernel density estimate of the MCMC samples.}
\label{fig:phaseretrieval_qualitative}

\end{figure}

\section{Conclusion}
\label{sec:discuss_conclude}

The path-space perspective has rich origins in Schr\"{o}dinger's pioneering work on the Schr\"{o}dinger Bridge Problem \cite{schrodinger1931umkehrung, schrodinger1932inversion} and Feynman's path-integral formalism of quantum mechanics \cite{feynman1948spacetime}. In recent years, it has garnered significant interest from practitioners in stochastic optimal control, generative modeling, and optimal transport, where advances such as the log-variance divergence \cite{nusken2021solving}, combined with neural network parameterizations, have led to state-of-the-art results in statistical physics and chemistry \cite{havens2025adjoint}.

Building on this foundation, we have developed a path-space formulation of diffusion-based posterior sampling. We introduced a time reparameterization technique that ensures well-posedness by removing the bias associated with the unknown initial value function, without requiring auxiliary training. Within this framework, posterior sampling is cast as a stochastic control problem, enabling the use of trust-region path-space optimization to learn approximate controls. Empirically, we demonstrated that this approach achieves state-of-the-art performance across a suite of rigorously characterized benchmark inverse problems.

Beyond algorithmic performance, the path-space perspective provides a unified framework for analyzing both learned and guidance-based samplers. In particular, it yields importance sampling weights as Radon--Nikodym derivatives between trajectory measures, enabling asymptotically unbiased estimation of posterior expectations. Our experiments further show that these weights can effectively correct errors in samples generated by existing diffusion-based posterior sampling methods.

Our numerical studies, based on the benchmark problems introduced in \cite{scopecrafts2025benchmarking}, enable rigorous evaluation of posterior sampling accuracy and uncertainty quantification. However, these problems involve latent dimensions of at most $20$, and therefore remain far smaller than the high-dimensional settings (e.g., image reconstruction) that motivate diffusion-based approaches. Developing high-dimensional benchmark problems that retain analytical tractability remains an important open challenge \cite{zach2025statistical}. Another important direction is to reduce the computational cost of the proposed approach by amortizing the learned posterior control across observations. Instead of learning a separate control for each measurement $\bsy$, one could train a conditional control $\bsu_\theta(\bsx,t,\bsy)$, or more generally $\bsu_\theta(\bsx,t,\bsy,\mathcal{F})$, over a distribution of observations and forward operators. Such an amortized control could provide a fast initialization, or even a reusable posterior sampler, for new inverse problem instances, while the path-space importance weights developed here could be used to diagnose and correct residual bias. 

A complementary direction is to combine the proposed posterior path space framework with recent approaches for prior training. For scientific inverse problems with equality-constrained or manifold-supported priors, manifold-aware perturbation methods \cite{keegan2026manifold} provide a promising strategy for constructing stable full-dimensional prior-generating diffusions. Such priors could then serve as the base processes transformed into posterior samplers by the path-space control framework developed in this work. Other directions for future work include investigating discretization error and developing tests for comparing the time reparameterization trick with other approaches for addressing the initial value function bias problem.

\section*{Acknowledgments}
The authors are grateful to Youssef Mroueh and Rapha\"el Pestourie for helpful discussions which helped motivate the questions explored in this paper.

\bibliographystyle{siamplain}
\bibliography{references}

\appendix 

\section{Background on Measures in Path Space}
\label{sec:appendix_pathmeasures}

In this appendix we provide a brief review of the mathematical theory relevant to our measure-theoretic path space formulation. Our presentation is based on {\O}ksendal's \textit{Stochastic Differential Equations} \cite{oksendal2003stochastic}. 

\paragraph{Path space} Throughout this work we consider path spaces given by sets of continuous functions, as is standard with diffusion processes. In particular, as described in Section \ref{sec:prob_form}, we consider path spaces of the form 
$$
\mathcal{C}(s) \triangleq \{ \bsx(t): [s, T] \to \mathbb{R}^D \mid \bsx(t) \text{ continuous} \},
$$
where here $s < T$. For ease of presentation, in this appendix we consider the setting after the time reparameterization described in Section \ref{sec:solution_approach}; we therefore take $s=-1$ and define $\mathcal{C} \triangleq \mathcal{C}(-1)$. Equipping $\mathcal{C}$ with the supremum norm induces a topology over the set. We define $\mathcal{B}$ as the corresponding Borel $\sigma$-algebra generated over $\mathcal{C}$, with $(\mathcal{C}, \mathcal{B})$ forming a measurable space.

\paragraph{Measures in path space} A stochastic process over the path space is a collection of random variables $\{\bsx_t\}$ indexed by $t \in [-1, T]$, with the law of $\bsx_t$ assumed to be absolutely continuous with respect to the Lebesgue measure in $\mathbb{R}^D$. A stochastic process induces a measure over the path space $(\mathcal{C}, \mathcal{B})$. Two measures $P$ and $Q$ over the space are equivalent if for every measurable set $B \in \mathcal{B}$, $P(B) = 0$ if and only if $Q(B) = 0$. Equivalence of measures implies the Radon-Nikodym derivative between $P$ and $Q$ is well defined; the notation $d P /dQ = f$ is used to denote the Radon-Nikodym derivative of $P$ with respect to $Q$, with $f$ the (unique almost everywhere) function satisfying 
$$
P(B) = \int_{B} f dQ. 
$$
In this context the Kullback–Leibler (KL) divergence between the two measures, denoted $\mathcal{D}_{\mathrm{KL}}$ in this work, is well defined and satisfies 
$$
\mathcal{D}_{\mathrm{KL}}(P  \, \Vert \, Q) = \mathbb{E}_P \left [\log  f \right].
$$

\paragraph{Brownian motion and martingales} We use $d\bsw_t$ to denote canonical Brownian motion in $\mathbb{R}^D$ with corresponding Wiener measure $\mathcal{W}$. The Brownian motion induces an (augmented) filtration, i.e., a family of $\sigma$-algebras $\{ \mathcal{M}_t\}_{t=-1}^T$, with $\mathcal{M}_T$ satisfying $\mathcal{M}_T = \mathcal{B}$ due to the equivalence of cylinder and Borel $\sigma$-algebras. A stochastic process $g_t(\bsw_t)$ is said to be $\mathcal{M}_t$-adapted if for each $t \in [-1, T]$ the function $\bsw \to g_t(\bsw_t)$ is $\mathcal{M}_t$-measurable. The process is said to be a martingale with respect to the filtration if it is $F_t$-adapted and satisfies $\mathbb{E} [|g_t(\bsw_t)|] < \infty$ and $\mathbb{E}[g_s(\bsw_s) | \mathcal{M}_t] = g_t$ for all $s \in (t, T]$ and all $t \in [-1, T)$. The martingale representation theorem states that every martingale with respect to the Brownian motion filtration can be written in terms of a stochastic process. This is formalized in the following theorem, which is used in the main text in the proof of Theorem \ref{theorem:sol_existence}. See, e.g., Theorem 4.3.4 in \cite{oksendal2003stochastic} for a proof. 

\begin{theorem}[Martingale Representation Theorem]
    Suppose $g_t \in \mathbb{R}$ is a scalar-valued $\mathcal{M}_t$ martingale and that $ \mathbb{E}[g_t(\bsw_t)^2] < \infty$ for $t \in [-1, T]$. Then there exists a unique stochastic process $\mathbf{f}_t(\bsw_t)$ such that 
    $$
    g_t(\bsw_t) = g_{-1} + \int_{-1}^t \langle \mathbf{f}_s(\bsw_s),  d \bsw_s \rangle
    $$
    almost surely for all $t \in [-1, T]$, where the integral in the above expression is to be interpreted in the It\^{o} sense (see Chapter 3 of \cite{oksendal2003stochastic}). 
\end{theorem}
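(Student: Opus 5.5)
The plan is the classical two-stage argument from Chapter 4 of \cite{oksendal2003stochastic}. First I prove an \emph{It\^{o} representation} for square-integrable terminal random variables. Then I specialize it to the martingale $g_t$ by conditioning on $\mathcal{M}_t$.

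Concretely, let $F \triangleq g_T$. By assumption $F$ is $\mathcal{M}_T$-measurable with $\mathbb{E}[F^2]<\infty$. I aim to show that there is an adapted $\mathbf{f}_s$ with $\mathbb{E}\int_{-1}^T \|\mathbf{f}_s\|_2^2\,ds<\infty$ and
\[
F = \mathbb{E}[F] + \int_{-1}^T \langle \mathbf{f}_s, d\bsw_s\rangle .
\]
Because the starting point is fixed at $t=-1$, $\mathcal{M}_{-1}$ is trivial, so $\mathbb{E}[F]=g_{-1}$ by the martingale property. Taking $\mathbb{E}[\,\cdot \mid \mathcal{M}_t]$ of both sides then gives the claimed identity for every $t$, since It\^{o} integrals of $L^2$ adapted integrands are martingales.

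The It\^{o} representation itself is proved in three steps.
\begin{enumerate}
\item \textbf{Exponential functionals.} For deterministic $\mathbf{h}\in L^2([-1,T];\mathbb{R}^D)$, consider
\[
\mathcal{E}^{\mathbf{h}} \triangleq \exp\Big(\int_{-1}^T \langle\mathbf{h}, d\bsw_s\rangle - \tfrac12\int_{-1}^T\|\mathbf{h}\|_2^2\,ds\Big).
\]
Applying It\^{o}'s lemma to the exponential process $Y_t$ (the same expression with upper limit $t$) gives $dY_t = Y_t\langle \mathbf{h}_t, d\bsw_t\rangle$. Hence $\mathcal{E}^{\mathbf{h}} = 1+\int_{-1}^T \langle Y_s\mathbf{h}_s, d\bsw_s\rangle$, with an $L^2$ integrand because $Y$ has Gaussian moments. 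So every such functional, and every finite linear combination of them, is representable.
\item \textbf{Density of exponentials.} I show that the linear span of $\{\mathcal{E}^{\mathbf{h}}\}$ is dense in $L^2(\mathcal{M}_T)$. Suppose $G\in L^2(\mathcal{M}_T)$ is orthogonal to every exponential. Taking step functions $\mathbf{h}$ gives $\mathbb{E}[G\exp(\sum_i \langle \lambda_i, \bsw_{t_i}\rangle)]=0$ for all $\lambda_i$. This expression extends analytically to complex $\lambda_i$, and a Fourier inversion argument shows that the signed measure $G\,d\mathcal{W}$ vanishes on $\sigma(\bsw_{t_1},\dots,\bsw_{t_n})$. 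Letting the time grid become dense and using $\mathcal{M}_T=\mathcal{B}$ (the cylinder $\sigma$-algebra), a monotone class argument yields $G=0$.
\item \textbf{Passage to the limit.} For general $F$, pick $F_n$ in the span with $F_n\to F$ in $L^2$, and write each $F_n = \mathbb{E}[F_n]+\int\langle\mathbf{f}^{(n)}, d\bsw\rangle$. By the It\^{o} isometry,
\[
\mathbb{E}\int\|\mathbf{f}^{(n)}-\mathbf{f}^{(m)}\|_2^2\,ds \le \mathbb{E}[(F_n-F_m)^2],
\]
so $\mathbf{f}^{(n)}$ is Cauchy in the Hilbert space of adapted $L^2$ processes. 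Its limit $\mathbf{f}$ represents $F$.
\end{enumerate}

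Uniqueness also follows from the isometry. If two integrands $\mathbf{f}$ and $\mathbf{f}'$ represent the same $g$, then $0=\mathbb{E}\big[(\int\langle\mathbf{f}-\mathbf{f}',d\bsw\rangle)^2\big]=\mathbb{E}\int\|\mathbf{f}-\mathbf{f}'\|_2^2\,ds$, so $\mathbf{f}=\mathbf{f}'$ a.e.

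The main obstacle is the density step (step 2 above). The It\^{o}-calculus parts are routine, but step 2 requires care with the complexification and Fourier/analyticity argument and with the identification $\mathcal{M}_T=\mathcal{B}$. There I would follow Lemmas 4.3.1--4.3.2 of \cite{oksendal2003stochastic} directly, first working with finitely many time points and then using the monotone class theorem.
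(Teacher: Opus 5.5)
Your proposal is correct and follows essentially the same route as the proof the paper cites, Theorem 4.3.4 of \cite{oksendal2003stochastic}: It\^{o} representation of $g_T$ via density of exponential functionals (Lemmas 4.3.1--4.3.2 there) and the It\^{o} isometry, then conditioning on $\mathcal{M}_t$. The differences are minor and valid: you finish the density step with Fourier uniqueness and a monotone class argument for the signed measure $G\,d\mathcal{W}$, in place of the martingale-convergence approximation of Lemma 4.3.1; and because $[-1,T]$ is bounded, you condition the single terminal representation of $g_T$ on $\mathcal{M}_t$ instead of patching representations at each $t$ via uniqueness.
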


\paragraph{It\^{o} processes}

In this work we are interested in measures over this space induced by diffusion-type SDEs (It\^{o} processes) of the form given in \eqref{eq:uncontrolled_reparam_dynamics}, i.e., 
\begin{equation}
d\bsx_{t} = \bsA(\bsx_{t}, t) \, dt + \bsB(t) \, d\bsw_{t}, \quad \bsx_{-1} = \mathbf{0}, \quad t \in [-1, T].
\label{eq:uncontrolled_reparam_dynamics_appendix}
\end{equation}
Under the assumptions given by Theorem 5.2.1 in \cite{oksendal2003stochastic} on $\bsA$ and $\bsB$, which we assume to hold in this work, the above SDE has a strong solution and this solution is adapted to the filtration $\mathcal{M}_t$, i.e., $\bsx_t$ is a measurable stochastic process. Further, $\bsx_t$ satisfies
$$
\mathbb{E} \left [ \int_{-1}^T \| \bsx_t \|_2^2 \, dt \right] < \infty.
$$ 

Given a $C^2$ (twice continuously differentiable) map $g: \mathbb{R}^D \times [-1, T] \to \mathbb{R}^P$, the dynamics of $\bsy_t = g(\bsx_t, t)$ are characterizable via It\^{o}'s formula. This formula is used in our proof of Theorem \ref{theorem:sol_existence} in the special case where $D = P = 1$, and is stated below  in this form. See Theorem 4.1.2 of \cite{oksendal2003stochastic} for a proof. 

\begin{theorem}[One-Dimensional It\^{o}'s Formula]
Consider the diffusion process
$$
dx_{t} = A(t) \, dt +  B(t) \, dw_{t}, 
$$
where $x_t \in \mathbb{R}$ and $dw_t$ is one dimensional Brownian motion, and let $g: \mathbb{R} \times  [-1, T] \to \mathbb{R}$ be a $C^2$ function. Then $y_t = g(t, x_t)$ satisfies 
$$
d y_t = \left ( \frac{\partial g}{\partial t} + A(t) \frac{\partial g}{\partial x} + \frac{B(t)^2}{2} \frac{\partial^2 g}{\partial x^2} \right) \; dt + B(t) \frac{\partial g}{\partial x} \ dw_t. 
$$
    
\end{theorem}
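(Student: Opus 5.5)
The plan is the classical Taylor-expansion argument over partitions, first under simplifying assumptions and then removing them by approximation. Since $x_t$ has continuous paths, I would first reduce to the case where $g$, $\partial g/\partial t$, $\partial g/\partial x$ and $\partial^2 g/\partial x^2$ are bounded and compactly supported. To do this, stop the process at $\tau_n = \inf\{t : |x_t| \geq n\}$ and replace $g$ by a $C^2$ function agreeing with it on $[-n,n]\times[-1,T]$. Proving the formula for the stopped process and letting $n\to\infty$ gives the general statement, because $\tau_n \uparrow T$ almost surely and all integrals in the formula are then pathwise well defined. I would also first take $A$ and $B$ to be elementary (step) adapted processes. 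General adapted, square-integrable $A,B$ are reached afterwards by approximating with elementary processes $A^{(k)}, B^{(k)}$. Then $\int |A^{(k)}-A|\,dt \to 0$ and $\int |B^{(k)}-B|^2\,dt\to 0$ in probability. Consequently, by the It\^{o} isometry, the corresponding processes $x^{(k)}_t \to x_t$ uniformly in probability along a subsequence, and every term of the formula converges by dominated convergence using boundedness of the derivatives of $g$.

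For the core step, fix a partition $-1=t_0<t_1<\dots<t_N=t$ with mesh $\delta\to 0$, refining the jump times of $A,B$. Write $g(t,x_t)-g(-1,x_{-1}) = \sum_j \Delta g_j$ and Taylor expand each increment to second order:
\[
\Delta g_j = \partial_t g\,\Delta t_j + \partial_x g\,\Delta x_j + \tfrac12 \partial_{tt} g\,(\Delta t_j)^2 + \partial_{tx} g\,\Delta t_j\Delta x_j + \tfrac12\partial_{xx} g\,(\Delta x_j)^2 + R_j,
\]
with derivatives evaluated at $(t_j,x_{t_j})$ and $R_j = o(|\Delta t_j|^2+|\Delta x_j|^2)$ uniformly. (If only $C^2$ jointly is assumed, I would use a second-order expansion with the mean-value form and absorb the modulus of continuity into $R_j$.) The first two sums converge to $\int \partial_t g\,ds$ and to $\int \partial_x g\,A\,ds + \int \partial_x g\,B\,dw_s$; the latter follows from the definition of the It\^{o} integral as an $L^2$ limit of left-point sums. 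The $(\Delta t_j)^2$ and $\Delta t_j\Delta x_j$ sums vanish, since $\sum|\Delta t_j||\Delta x_j| \le \delta^{1/2}\cdot O(1)$ in $L^1$, and $\sum R_j\to 0$ similarly.

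The main obstacle is the quadratic term $\tfrac12\sum_j \partial_{xx} g\,(\Delta x_j)^2$. On each interval $\Delta x_j = A_j\Delta t_j + B_j\Delta w_j$, and the cross term $A_jB_j\Delta t_j\Delta w_j$ and the $(\Delta t_j)^2$ term go to zero in $L^2$. What remains is showing
\[
\sum_j a_j B_j^2 (\Delta w_j)^2 \to \int_{-1}^t \partial_{xx} g\, B^2\, ds \quad \text{in } L^2,
\]
where $a_j=\partial_{xx}g(t_j,x_{t_j})$. I would expand $\mathbb{E}\bigl[\bigl(\sum_j a_jB_j^2((\Delta w_j)^2-\Delta t_j)\bigr)^2\bigr]$. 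The off-diagonal terms vanish by independence of $\Delta w_j$ from $\mathcal{M}_{t_j}$, giving $\mathbb{E}[(\Delta w_j)^2-\Delta t_j \mid \mathcal{M}_{t_j}]=0$. The diagonal terms are bounded by $2\sum_j \mathbb{E}[a_j^2B_j^4](\Delta t_j)^2 \le C\delta$. This shows $(dw)^2 = dt$ in the limit, and replacing the Riemann sum $\sum a_jB_j^2\Delta t_j$ by the integral is routine. Passing to a subsequence converging almost surely then yields the formula for each fixed $t$. Continuity of both sides in $t$ upgrades it to holding for all $t$ simultaneously.
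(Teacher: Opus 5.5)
Your proposal is correct and is essentially the classical argument of Theorem 4.1.2 in \cite{oksendal2003stochastic}, which the paper cites instead of giving its own proof: reduce to $g$ with bounded derivatives and elementary integrands, Taylor expand over a partition, and show $\sum_j a_j B_j^2 (\Delta w_j)^2 \to \int \partial_{xx} g\, B^2\, ds$ in $L^2$, since the off-diagonal terms vanish and the diagonal terms sum to $2\sum_j \mathbb{E}[a_j^2 B_j^4](\Delta t_j)^2$. The differences are cosmetic (you localize with stopping times rather than approximating $g$ uniformly on compacts), but you should take the elementary approximants of $A$ and $B$ to be bounded, which is always possible, so that moments such as $\mathbb{E}[a_j^2 B_j^4]$ are finite.
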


\paragraph{Girsanov's theorem} 

The process $\bsx_t$ in \eqref{eq:uncontrolled_reparam_dynamics_appendix} induces a measure $P$ over $(\mathcal{C}, \mathcal{B})$, referred to as the base process throughout this work. We are interested in constructing controlled versions of the base process---in particular, processes with additive Markov controls. Girsanov's theorem characterizes the change of measure between the base process and the controlled process. The following theorem gives Girsanov's theorem in the form used in this work; see Theorem 8.6.5 in \cite{oksendal2003stochastic} for a proof. 

\begin{theorem}[Girsanov's Theorem for SDEs]
Consider the base process
$$
d\bsx_{t} = \bsA(\bsx_{t}, t) \, dt + \bsB(t) \, d\bsw_{t}, \quad \bsx_{-1} = \mathbf{0}, \quad t \in [-1, T], 
$$
and let $\bsu(\bsx_t, t)$ be a control that satisfies the Novikov condition
$$\mathbb{E}_{P} \left[\mathrm{exp} \left ( \frac{1}{2} \int_{-1}^T \| \bsu(\bsx_t, t) \|_2^2 \; dt \right) \right] < \infty.$$
Define the measure $P^{\bsu}$ through
$$
\frac{d P^{\bsu}}{dP} = \mathrm{exp}\left(\int_{-1}^T \langle \bsu(\bsx_s, s), d\bsw_s \rangle - \frac{1}{2} \int_{-1}^T \| \bsu(\bsx_s, s) \|_2^2 \; dt \right).
$$
Then 
$$
d\bsw^{\bsu}_t = d\bsw_t - \bsu(\bsx_t^{\bsu}, t) \; dt 
$$
is a Brownian motion with respect to $P^{\bsu}$, and under $P^{\bsu}$ the process satisfies the controlled dynamics
$$
d\bsx_{t}^{\bsu} = \left[\bsA(\bsx_{t}^{\bsu}, t) + \bsB(t) \; \bsu(\bsx_t^{\bsu}, t) \right] \, dt + \bsB(t) \, d\bsw_{t}, \quad \bsx_{-1}^{\bsu} = \mathbf{0}, \quad t \in [-1, T].
$$
\end{theorem}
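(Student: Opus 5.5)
The plan is the classical route (as in Theorem 8.6.5 of \cite{oksendal2003stochastic}): first show that the proposed density defines a probability measure, then show that $\bsw^{\bsu}$ is a $P^{\bsu}$-Brownian motion via L\'evy's characterization, and finally substitute into the base SDE to read off the controlled dynamics. Throughout, work on the Brownian filtration $\{\mathcal{M}_t\}$. Define the exponential process
$$
M_t \triangleq \exp\left(\int_{-1}^t \langle \bsu(\bsx_s, s), d\bsw_s\rangle - \frac{1}{2}\int_{-1}^t \|\bsu(\bsx_s,s)\|_2^2\, ds\right), \quad t\in[-1,T].
$$

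\textbf{Step 1: $P^{\bsu}$ is a probability measure.} Apply the one-dimensional It\^{o} formula to $\exp(\cdot)$ composed with the scalar It\^{o} process in the exponent. This gives $dM_t = M_t\langle \bsu(\bsx_t,t), d\bsw_t\rangle$. Hence $M_t$ is a positive local martingale, and therefore a supermartingale, with $M_{-1}=1$ because the initial condition is deterministic. The Novikov condition upgrades this to a true martingale, so $\mathbb{E}_P[M_T]=1$ and $dP^{\bsu} \triangleq M_T\,dP$ is a probability measure equivalent to $P$. I expect this to be the main obstacle: Novikov's criterion is the genuinely nontrivial input. I would cite it rather than reprove it. If a self-contained route is wanted, I would try localization: use stopping times $\tau_n$ at which $\int\|\bsu\|^2$ reaches $n$, so each stopped process $M_{t\wedge\tau_n}$ is a martingale. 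Then pass to the limit using the exponential moment bound, via the standard trick of splitting $M$ with H\"older's inequality into a term of the form $\exp(\tfrac12\int\|\bsu\|^2)$ and a martingale of smaller exponent.

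\textbf{Step 2: $\bsw^{\bsu}$ is a $P^{\bsu}$-Brownian motion.} Set $\bsw^{\bsu}_t = \bsw_t - \int_{-1}^t \bsu(\bsx_s,s)\,ds$. By Bayes' rule for conditional expectations, $\mathbb{E}_{P^{\bsu}}[X\mid\mathcal{M}_s] = \mathbb{E}_P[M_T X\mid \mathcal{M}_s]/M_s$. It therefore suffices to show that $M_t\,w^{\bsu,i}_t$ and $M_t\,(w^{\bsu,i}_t w^{\bsu,j}_t - \delta_{ij}t)$ are $P$-local martingales for all components $i,j$. Both follow from the It\^{o} product rule. For the first,
$$
d(M_t w^{\bsu,i}_t) = M_t\,dw^{\bsu,i}_t + w^{\bsu,i}_t\,dM_t + d\langle M, w^{i}\rangle_t,
$$
and the cross-variation term $M_t u^i\,dt$ exactly cancels the drift $-M_t u^i\,dt$ contained in $dw^{\bsu,i}_t$. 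The quadratic computation is analogous. Since $\bsw^{\bsu}$ is continuous and has quadratic variation $t\,\mathbf{I}_D$ (a finite-variation drift does not change the quadratic variation), L\'evy's characterization identifies it as a standard Brownian motion under $P^{\bsu}$, after a routine localization.

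\textbf{Step 3: controlled dynamics.} Substitute $d\bsw_t = d\bsw^{\bsu}_t + \bsu(\bsx_t,t)\,dt$ into the base SDE. This gives
$$
d\bsx_t = [\bsA(\bsx_t,t) + \bsB(t)\bsu(\bsx_t,t)]\,dt + \bsB(t)\,d\bsw^{\bsu}_t, \qquad \bsx_{-1}=\mathbf{0}.
$$
So under $P^{\bsu}$ the path $\bsx$ is a weak solution of the controlled SDE driven by the $P^{\bsu}$-Brownian motion $\bsw^{\bsu}$. (The noise in the displayed conclusion should accordingly be read as $d\bsw^{\bsu}_t$, or in law.) Since $\bsu\in\mathcal{U}$ satisfies Lipschitz and growth conditions, the controlled SDE has pathwise uniqueness. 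By Yamada--Watanabe this implies uniqueness in law, so the law of $\bsx$ under $P^{\bsu}$ coincides with the path measure induced by the strong solution $\bsx^{\bsu}$ of \eqref{eq:control_reparam_dynamics}. This justifies writing $P^{\bsu}$ for both objects.
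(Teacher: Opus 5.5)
Your proposal is correct and follows essentially the same route as the paper, which does not prove this result itself but defers to Theorem 8.6.5 of \cite{oksendal2003stochastic}, whose proof goes through the same steps: Novikov's condition makes the exponential density a true martingale, Bayes' rule together with It\^{o}'s product rule and L\'evy's characterization shows that $\bsw^{\bsu}$ is a $P^{\bsu}$-Brownian motion, and substitution into the base SDE plus uniqueness in law identifies the resulting law with the controlled path measure. You are also right that the noise in the displayed controlled dynamics should be read as $d\bsw^{\bsu}_t$, and that the drift in the definition of $\bsw^{\bsu}$ should be evaluated along the base process as $\bsu(\bsx_t,t)$.
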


\section{Implementation of Analytic Flow Model}
\label{sec:appendix_flow}

In this appendix, we provide details regarding the construction of the toy problem used to demonstrate the time reparameterization trick in Figure \ref{fig:trt_example}.

\paragraph{Inverse problem} We consider a one-dimensional Bayesian inverse problem with prior 
$$
\pi_{\mathrm{pr}}(x) = \sum_{i=1}^{N_m} w_i \, N(m_i, \sigma_i^2), 
$$
where here we set $N_m = 2$, $w_1 = w_2 = .5$, $m_1 = -2$ and $m_2 = 2$, and $\sigma_1^2 = \sigma_2^2 = .3$. The likelihood was a denoising likelihood, i.e., $\pi_{\mathrm{like}}(y \mid x) = N(y; x, \tau^2)$, with $\tau^2 = 2.5$.

\paragraph{Base process} We consider a base process inspired by flow matching \cite{lipman2023flow}. In particular, we first construct a deterministic ODE that samples from the prior, i.e., we seek a velocity field such that the solution to \eqref{eq:flow_velocity} satisfies $p_0(x_0) = \mathcal{N}(x_0; 0, 1)$ and $p_T(x_T) = \sum_i^{N_m} w_i \, \mathcal{N}(x_T; m_i, \sigma_i^2)$, where $T = 1$. A velocity field satisfying this condition can be written in terms of the following decomposition (see Eq. 8 in \cite{lipman2023flow}): 
\begin{equation}
\label{flow_gm_formula}
v_t(x_t) = \frac{1}{\sum p_{i, t}(x_t)} \sum_{i=1}^{N_m} v_i(x_t) p_{i, t}(x_t),
\end{equation}
where $v_i(x_t)$ is a velocity field that drives $p_0$ to $\mathcal{N}(x_t; m_i, \sigma_i^2)$, and $p_{i, t}(x_t) = w_i p(x_t \mid x_T \sim \mathcal{N}(x_t; m_i, \sigma_i^2))$. There are an infinite number of velocity fields that map between given Gaussian distributions; here we consider 
\begin{equation}
\label{flow_gauss_veloc}
v_i(x_t) = \left [ a_i x_t + b_i \right], \quad a_i = \log (\sigma)/T, \quad b_i = a_i (\sigma_i - 1)^{-1} m_i.
\end{equation}
Under this choice we have that 
\begin{equation}
\label{flow_comp_marginals}
p(x_t \mid x_T \sim \mathcal{N}(x_t; m_i, \sigma_i^2))) = \mathcal{N}(x_t ; (\sigma_i^{t/T} - 1)(\sigma_i - 1)^{-1} m_i, \sigma_i^{2t/T}),
\end{equation}
from which we can construct the marginals 
\begin{equation}
\label{eq:flow_marginals}
p_t(x_t) = \sum_{i=1}^{N_m} w_i \, \mathcal{N}(x_t; (\sigma_i^{t/T} - 1)(\sigma_i - 1)^{-1} m_i, \sigma_i^{2t/T})
\end{equation}
A quick calculation shows that 
$$
p(x_0) = \sum_{i=1}^{N_m} w_i \, \mathcal{N}(x_0 ; 0, 1) = \mathcal{N}(x_0 ; 0, 1)
$$
and 
$$
p(x_T) =  \sum_{i=1}^{N_m} w_i \, \mathcal{N}(x_t ; m_i, \sigma_i^2)
$$
as desired. So plugging \eqref{flow_gauss_veloc} and \eqref{flow_comp_marginals} into \eqref{flow_gm_formula} gives an analytic expression for a velocity field that samples from the given mixture distribution. 

The above flow model is an ODE that cannot be used directly on our framework. However, as described in Section \ref{sec:background}, stochasticity can be injected into the above process using Nelson's identity. Here we set $g_s(t)$ in \eqref{eq:nelsons_identity} as $g_s(t) \equiv .5$, so that the final base process reads 
\begin{equation*}
dx_t = \left(v_t(x_t) + .125 \, \nabla_{x} \log p_t(x_t) \right) dt  + .5 \, dw_t,
\end{equation*}
with $v_t$ given as above and $\nabla_{x} \log p_t(x_t)$ computed using \eqref{eq:flow_marginals}.

\paragraph{Computation of optimal control} To compute the optimal control, we use the analytic expression for the optimal control given by \eqref{eq:analytical_optimal_control}. This formula is intractable in general due to the high variance of the conditional distribution which appears in the definition of the optimal control. However, it is computable in the simple one-dimensional setting considered here. In particular, we estimate the control using a Monte-Carlo estimator with $N_{MC} = 25 (T - t)/T$ samples used to compute the expectation in the optimal control expression
$$
u(x_t, t) = B(t)^T \, \nabla_{x} \log \mathbb{E}_P[d Q^* / d P \mid x_t],
$$
and automatic differentiation used to compute the derivative.

\section{Benchmark Specifications}
\label{app:benchmark_details}

This appendix collects the complete benchmark definitions used in Section~\ref{sec:benchmarks}. Across all experiments, the inversion parameter is $\bsx\in\mathbb{R}^D$ and the prior is a Gaussian mixture model (GMM) with $N_m=3$ components.

\subsection{Gaussian mixture prior parameterization}
\label{app:gmm_params}

We use the prior
\begin{equation}
\label{eq:app_gmm_prior}
\pi_{\mathrm{pr}}(\bsx)
=\sum_{i=1}^{N_m} w_i\,\mathcal{N}(\bsx;\boldsymbol{\mu}_i,\boldsymbol{\Sigma}_i),
\qquad
w_i\ge 0,\ \sum_{i=1}^{N_m} w_i=1,
\end{equation}
with $N_m=3$ in all benchmarks. For compactness, we define the linear spacing operator
\[
\mathrm{linspace}(a,b;D)
\triangleq \big(a + \tfrac{j-1}{D-1}(b-a)\big)_{j=1}^D\in\mathbb{R}^D.
\]
We also use $\mathbf{1}_D\in\mathbb{R}^D$ to denote the all-ones vector.

\subsection{Random linear sensing with heteroscedastic Gaussian noise}
\label{app:random_linear_details}

This benchmark corresponds to Section~\ref{prob:random-sensing}. We set $D=K=20$ and define
\begin{equation}
\bsy = \mathbf{H}\bsx + \boldsymbol{\eta},\qquad \boldsymbol{\eta}\sim \mathcal{N}(\boldsymbol{\eta}; \mathbf{0},\boldsymbol{\Gamma}).
\end{equation}
The sensing matrix $\mathbf{H}\in\mathbb{R}^{20\times 20}$ has i.i.d.\ entries $\mathbf{H}_{ij}\sim\mathcal{N}(0,1)$.
The noise covariance is diagonal and heteroscedastic:
\begin{equation}
\boldsymbol{\Gamma} = \mathrm{diag}(\sigma_1^2,\ldots,\sigma_{20}^2),
\qquad
\sigma_j^2 = 500 + \frac{j-1}{19}(1000-500),\quad j=1,\ldots,20.
\end{equation}

\paragraph{Prior}
We use $w_1=0.4$, $w_2=0.3$, $w_3=0.3$ and
\begin{align}
\boldsymbol{\mu}_1 &= \mathrm{linspace}(-1,-5;D), & \boldsymbol{\Sigma}_1 &= 2\mathbf{I}_D,\\
\boldsymbol{\mu}_2 &= \mathbf{0}, & \boldsymbol{\Sigma}_2 &= \mathrm{diag}(\mathrm{linspace}(2,3;D)),\\
\boldsymbol{\mu}_3 &= \mathrm{linspace}(1,5;D), & \boldsymbol{\Sigma}_3 &= \mathbf{Q}^\top \boldsymbol{\Lambda}\mathbf{Q},
\quad \boldsymbol{\Lambda}=\mathrm{diag}(\mathrm{linspace}(2,3;D))\,,
\end{align}
where $\mathbf{Q}$ is a random orthogonal matrix obtained via QR decomposition of a random matrix. This construction ensures $\boldsymbol{\Sigma}_3$ has the same eigenvalues as $\boldsymbol{\Sigma}_2$ but with randomly oriented eigenvectors. Under this configuration, the prior consists of three well-separated modes: two with opposing linear trends in their means (components 1 and 3) positioned in different quadrants of the parameter space, and one centered mode with anisotropic uncertainty (component 2). 

\paragraph{Prior SDE}  We implement the prior-sampling SDE as a variance-exploding (VE) SDE \cite{song2021scorebased}, which corresponds to $\mathbf{g}(\bsx_t, t) \equiv 0$ in \eqref{eq:forward_diffusion_sde}. We set the diffusion term $h(t) \equiv 1$ and the time horizon as $T = 1000$. An expression for $\nabla_{\bsx} \log p_t(\bsx_t)$ under this SDE and the Gaussian mixture prior used here is given in Appendix \ref{app:closed_form_scores}, which completes the specification of the prior SDE. During training and inference, we discretize the SDE using $N_{\mathrm{nt}} = 100$ timesteps, with the $i$th timestep given by $t_i = T - T (.01/T)^{i/N_{\mathrm{nt}}}$.

\subsection{Inpainting with isotropic Gaussian noise}
\label{app:inpainting_details}

This benchmark problem corresponds to Section~\ref{prob:inpainting}. We set $D=10$ and $K=8$ and define
\begin{equation}
\bsy = \mathbf{M}\bsx + \boldsymbol{\eta},\qquad \boldsymbol{\eta}\sim\mathcal{N}(\mathbf{0},\sigma^2\mathbf{I}_K),
\end{equation}
where $\mathbf{M}\in\{0,1\}^{K\times D}$ selects a subset of coordinates of $\bsx$.
Using one-based indexing, we observe the index set
\[
\mathcal{O}=\{1,2,3,5,6,7,8,10\},\qquad \mathcal{U}=\{4,9\},
\]
and $\mathbf{M}$ has rows equal to the standard basis vectors $\{\mathbf{e}_j^\top\}_{j\in\mathcal{O}}$.
We set $\sigma=5$, i.e.\ $\boldsymbol{\eta}\sim\mathcal{N}(\mathbf{0},25\mathbf{I}_8)$.

\paragraph{Prior}
We use $w_1=0.4$, $w_2=0.3$, $w_3=0.3$ and
\begin{align}
\boldsymbol{\mu}_1 &= -5\mathbf{1}_D, & \boldsymbol{\Sigma}_1 &= \mathbf{I}_D,\\
\boldsymbol{\mu}_2 &= \mathbf{0}, & \boldsymbol{\Sigma}_2 &= \mathrm{diag}(\mathrm{linspace}(1,2;D)),\\
\boldsymbol{\mu}_3 &= 5\mathbf{1}_D, & \boldsymbol{\Sigma}_3 &= \mathbf{Q}\boldsymbol{\Lambda}\mathbf{Q}^\top,
\quad \boldsymbol{\Lambda}=\mathrm{diag}(\mathrm{linspace}(1,2;D)).
\end{align}
In the nonlinear benchmarks below we reuse this same prior.

\paragraph{Prior SDE} The prior SDE used here is the same one as used in the random linear sensing benchmark problem, but with time-horizon $T = 500$ instead of $T = 1000$. The discretization scheme is unchanged. 

\subsection{X-ray tomography with Poisson likelihood}
\label{app:xray_details}

This benchmark corresponds to Section~\ref{prob:xray}. We set $D=10$, $K=15$, and define the
intensity map
\begin{equation}
f(\bsx) = I_0 \exp(-\mathbf{C}\bsx),
\end{equation}
where the exponential is applied componentwise, $I_0=1000$, and $\mathbf{C}\in\mathbb{R}^{15\times 10}$ has entries drawn i.i.d.\ from $\mathrm{Unif}[0.01,0.05]$. The likelihood is Poisson:
\begin{equation}
\pi_{\mathrm{like}}(\bsy\mid \bsx) = \mathrm{Poi}\big(f(\bsx)\big).
\end{equation}
The prior and prior SDE are the same as in inpainting GMM from Appendix~\ref{app:inpainting_details}.

\subsection{Phase retrieval with additive Gaussian noise}
\label{app:phase_details}

This benchmark corresponds to Section~\ref{prob:phase}. We set $D=10$, $K=5$, and define
\begin{equation}
\bsy = (\mathbf{N}\bsx)^{\odot 2} + \boldsymbol{\eta},
\qquad
\boldsymbol{\eta}\sim \mathcal{N}(\mathbf{0},\sigma^2\mathbf{I}_K),
\end{equation}
where $(\cdot)^{\odot 2}$ denotes elementwise squaring and $\mathbf{N}\in\mathbb{R}^{5\times 10}$ has i.i.d.\ entries $\mathbf{N}_{ij}\sim\mathcal{N}(0,1)$. We set $\sigma=25$. The prior and prior SDE are the same as in inpainting GMM from Appendix~\ref{app:inpainting_details}.

\section{Closed-Form Noisy Prior Marginals, Scores, and Denoising Distributions for GMM Priors}
\label{app:closed_form_scores}

A central feature of the benchmarks in Section~\ref{sec:num_studies} is that the prior admits closed-form evaluation of the noisy marginal density, its score, and the associated denoising distribution under Gaussian perturbation kernels.
This eliminates any need to pre-train a diffusion model and isolates algorithmic error in posterior sampling from approximation error in prior learning.

To connect with the notation of Sections~\ref{sec:background}--\ref{sec:prob_form}, recall that diffusion-type generative models are described in terms of their time-$t$ marginals $p_t$ and corresponding scores $\nabla_\bsx \log p_t(\bsx)$, with $\bsx_0 \sim \mathcal{N}(\mathbf{0}, \mathbf{I}_D)$ and terminal marginal $p_T = \pi_{\mathrm{pr}}$ for an exact prior-sampling base process. In the benchmark setting considered here, these intermediate marginals can be written explicitly by specifying a Gaussian perturbation kernel linking a terminal prior sample $\bsx_T \sim \pi_{\mathrm{pr}}$ to an intermediate state $\bsx_t$. This yields closed-form expressions for the marginal $p_t$, its score, and the denoising distribution $p_{T \mid t}(\bsx_T \mid \bsx_t)$.

\subsection{Gaussian perturbation kernel}
Let $\alpha_t\in\mathbb{R}$ and $\sigma_t>0$ define a time-indexed Gaussian perturbation kernel
\begin{equation}
\bsx_t = \alpha_t \bsx_T + \sigma_t \varepsilon,\qquad \varepsilon\sim \mathcal{N}(\mathbf{0}, \mathbf{I}_D).
\label{eq:app_gaussian_kernel}
\end{equation}
Equivalently,
\begin{equation}
p_{t\mid T}(\bsx_t\mid \bsx_T)=\mathcal{N}(\bsx_t;\alpha_t \bsx_T,\sigma_t^2 \mathbf{I}_D).
\label{eq:C2_qt}
\end{equation}
The expressions below hold for any choice of $(\alpha_t,\sigma_t)$ consistent with the noising schedule used by the base process. In the numerical experiments of Section~\ref{sec:num_studies}, the uncontrolled base process follows a variance-exploding (VE) SDE \cite{song2021scorebased}, so we use the specialization
\begin{equation}
\alpha_t \equiv 1,
\qquad
\sigma_t = \sigma(t).
\label{eq:app_ve_specialization}
\end{equation}

\subsection{Noisy marginal induced by a GMM prior}
Recall that the benchmark prior is the Gaussian mixture
\begin{equation*}
\pi_{\mathrm{pr}}(\bsx)
=
\sum_{i=1}^{N_m} w_i \, \mathcal{N}(\bsx;\boldsymbol{\mu}_i,\boldsymbol{\Sigma}_i),
\qquad
w_i \ge 0,
\qquad
\sum_{i=1}^{N_m} w_i = 1,
\end{equation*}
as defined in \eqref{eq:gmm_prior}. If $\bsx_T \sim \pi_{\mathrm{pr}}$ and $\bsx_t$ is given by \eqref{eq:app_gaussian_kernel}, then the time-$t$ marginal remains a Gaussian mixture:
\begin{equation}
p_t(\bsx)
=
\sum_{i=1}^{N_m} w_i \, \mathcal{N}(\bsx;\boldsymbol{\mu}_{t,i},\boldsymbol{\Sigma}_{t,i}),
\label{eq:app_noisy_marginal_prior}
\end{equation}
where
\begin{equation}
\boldsymbol{\mu}_{t,i} = \alpha_t \boldsymbol{\mu}_i,
\qquad
\boldsymbol{\Sigma}_{t,i} = \alpha_t^2 \boldsymbol{\Sigma}_i + \sigma_t^2 \mathbf{I}_D.
\label{eq:app_component_params_prior}
\end{equation}
Thus Gaussian convolution preserves the mixture structure, with the mixture weights remaining unchanged.

\subsection{Closed-form noisy prior score}
Define the posterior responsibility of mixture component $i$ at time $t$ by
\begin{equation}
r_{t,i}(\bsx)
\triangleq
\mathbb{P}(I=i \mid \bsx_t=\bsx)
=
\frac{
w_i \, \mathcal{N}(\bsx;\boldsymbol{\mu}_{t,i},\boldsymbol{\Sigma}_{t,i})
}{
\sum_{j=1}^{N_m} w_j \, \mathcal{N}(\bsx;\boldsymbol{\mu}_{t,j},\boldsymbol{\Sigma}_{t,j})
}.
\label{eq:app_mixture_responsibilities_prior}
\end{equation}
Differentiating \eqref{eq:app_noisy_marginal_prior} yields the score of the noisy marginal:
\begin{align}
\nabla_\bsx \log p_t(\bsx)
&=
\sum_{i=1}^{N_m}
r_{t,i}(\bsx)\,
\nabla_\bsx \log  \mathcal{N}(\bsx;\boldsymbol{\mu}_{t,i},\boldsymbol{\Sigma}_{t,i})
\nonumber\\
&=
\sum_{i=1}^{N_m}
r_{t,i}(\bsx)\,\boldsymbol{\Sigma}_{t,i}^{-1}(\boldsymbol{\mu}_{t,i}-\bsx).
\label{eq:app_score_prior}
\end{align}
Equation~\eqref{eq:app_score_prior} is the closed-form score used in the benchmark experiments in place of a learned score network.

\subsection{Closed-form denoising distribution $p(\bsx_T \mid \bsx_t)$}

We next characterize the denoising distribution of the terminal clean variable $\bsx_T$ given the noisy state $\bsx_t$. Conditioned on mixture component $I=i$, the pair $(\bsx_T,\bsx_t)$ forms a linear--Gaussian model:
\[
\bsx_T \mid (I=i) \sim \mathcal{N}(\boldsymbol{\mu}_i,\boldsymbol{\Sigma}_i),
\qquad
\bsx_t \mid \bsx_T \sim \mathcal{N}(\alpha_t \bsx_T,\sigma_t^2 \mathbf{I}_D).
\]
Therefore,
\begin{equation}
\bsx_T \mid (\bsx_t=\bsx,\; I=i)
\sim
\mathcal{N}(\mathbf{m}_{t,i}(\bsx),\mathbf{P}_{t,i}),
\label{eq:app_component_conditional_prior}
\end{equation}
with covariance
\begin{equation}
\mathbf{P}_{t,i}
=
\left(
\boldsymbol{\Sigma}_i^{-1} + \frac{\alpha_t^2}{\sigma_t^2} \mathbf{I}_D
\right)^{-1},
\label{eq:app_Pti}
\end{equation}
and mean
\begin{align}
\mathbf{m}_{t,i}(\bsx)
&=
\mathbf{P}_{t,i}
\left(
\boldsymbol{\Sigma}_i^{-1}\boldsymbol{\mu}_i + \frac{\alpha_t}{\sigma_t^2}\bsx
\right) \nonumber
\\
&=
\boldsymbol{\mu}_i + \alpha_t \boldsymbol{\Sigma}_i \boldsymbol{\Sigma}_{t,i}^{-1}(\bsx-\boldsymbol{\mu}_{t,i}).
\label{eq:app_mti}
\end{align}
Marginalizing over the latent component index gives the full denoising distribution
\begin{equation}
p(\bsx_T \mid \bsx_t=\bsx)
=
\sum_{i=1}^{N_m}
r_{t,i}(\bsx)\,
\mathcal{N}(\bsx_T;\mathbf{m}_{t,i}(\bsx),\mathbf{P}_{t,i}),
\label{eq:app_denoising_mixture}
\end{equation}
where the same responsibilities $r_{t,i}(\bsx)$ from \eqref{eq:app_mixture_responsibilities_prior} appear. In particular, the conditional mean estimator of $\bsx_T$ given $\bsx_t=\bsx$ is
\begin{equation}
\mathbb{E}[\bsx_T \mid \bsx_t=\bsx]
=
\sum_{i=1}^{N_m}
r_{t,i}(\bsx)\, \mathbf{m}_{t,i}(\bsx).
\label{eq:app_denoiser_mean}
\end{equation}

Finally, the score and conditional mean estimator are related by
\begin{equation}
\mathbb{E}[\bsx_T \mid \bsx_t=\bsx]
=
\frac{1}{\alpha_t}
\left(
\bsx + \sigma_t^2 \nabla_\bsx \log p_t(\bsx)
\right),
\label{eq:app_tweedie}
\end{equation}
which provides a useful consistency check on
\eqref{eq:app_score_prior} and \eqref{eq:app_denoiser_mean}.

\section{Closed-Form Reference Posteriors and Optimal Controls for Linear Benchmarks}
\label{app:linear_posteriors_controls}

This appendix collects the closed-form posterior formulas used in the linear benchmarks of Section~\ref{sec:benchmarks}, together with the corresponding representation of the optimal path-space control. These formulas provide the exact reference distributions used for sampling-based evaluation in Section~\ref{sec:evaluation_methods} and the exact control targets used to assess control learning in the linear problems.

\subsection{Linear--Gaussian conditioning of a Gaussian mixture prior}

Consider the linear observation model
\begin{equation}
\bsy = \mathbf{L}\bsx + \boldsymbol{\eta},
\qquad
\boldsymbol{\eta} \sim \mathcal{N}(\mathbf{0},\mathbf{R}),
\label{eq:app_linear_obs}
\end{equation}
where $\bsx \in \mathbb{R}^D$, $\bsy \in \mathbb{R}^K$, $\mathbf{L} \in \mathbb{R}^{K\times D}$, and $\mathbf{R} \in \mathbb{R}^{K\times K}$ is symmetric positive definite. Let the prior be the Gaussian mixture
\begin{equation}
\pi_{\mathrm{pr}}(\bsx)
=
\sum_{i=1}^{N_m} w_i\,\mathcal{N}(\bsx;\boldsymbol{\mu}_i,\boldsymbol{\Sigma}_i),
\qquad
w_i \ge 0,
\qquad
\sum_{i=1}^{N_m} w_i = 1,
\label{eq:app_linear_gmm_prior}
\end{equation}
as in \eqref{eq:app_gmm_prior}.

Introducing the latent mixture index $I \in \{1,\dots,N_m\}$, we may write
\[
\bsx \mid (I=i) \sim \mathcal{N}(\boldsymbol{\mu}_i,\boldsymbol{\Sigma}_i),
\qquad
\bsy \mid \bsx \sim \mathcal{N}(\mathbf{L}\bsx,\mathbf{R}).
\]
Conditioned on $I=i$, the model is linear--Gaussian, and standard Gaussian conditioning gives
\begin{equation}
\bsx \mid (\bsy,\; I=i) \sim \mathcal{N}(\boldsymbol{\mu}_{i\mid \bsy},\boldsymbol{\Sigma}_{i\mid \bsy}),
\label{eq:app_component_posterior}
\end{equation}
where
\begin{align}
\mathbf{S}_i
&\triangleq
\mathbf{L}\boldsymbol{\Sigma}_i\mathbf{L}^\top + \mathbf{R},
\label{eq:app_Si}
\\
\boldsymbol{\mu}_{i\mid \bsy}
&=
\boldsymbol{\mu}_i
+
\boldsymbol{\Sigma}_i \mathbf{L}^\top \mathbf{S}_i^{-1} \bigl(\bsy - \mathbf{L}\boldsymbol{\mu}_i\bigr),
\label{eq:app_component_mean}
\\
\boldsymbol{\Sigma}_{i\mid \bsy}
&=
\boldsymbol{\Sigma}_i
-
\boldsymbol{\Sigma}_i \mathbf{L}^\top \mathbf{S}_i^{-1}\mathbf{L}\boldsymbol{\Sigma}_i.
\label{eq:app_component_cov}
\end{align}
The posterior mixture weights are
\begin{equation}
\widetilde w_i(\bsy)
=
\frac{
w_i\,\mathcal{N}(\bsy;\mathbf{L}\boldsymbol{\mu}_i,\mathbf{S}_i)
}{
\sum_{j=1}^{N_m}
w_j\,\mathcal{N}(\bsy;\mathbf{L}\boldsymbol{\mu}_j,\mathbf{S}_j)
}.
\label{eq:app_component_weights}
\end{equation}
Hence the posterior distribution remains a Gaussian mixture:
\begin{equation}
\pi_{\mathrm{post}}(\bsx \mid \bsy)
=
\sum_{i=1}^{N_m} \widetilde w_i(\bsy)\, \mathcal{N}(\bsx;\boldsymbol{\mu}_{i\mid \bsy},\boldsymbol{\Sigma}_{i\mid \bsy}).
\label{eq:app_exact_posterior}
\end{equation}

\subsection{Specialization to the linear benchmarks in Appendix~\ref{app:benchmark_details}}

For the random linear sensing benchmark of Appendix~\ref{app:random_linear_details}, we identify
\[
\mathbf{L} = \mathbf{H},
\qquad
\mathbf{R} = \boldsymbol{\Gamma},
\]
with $\mathbf{H}$ and $\boldsymbol{\Gamma}$ defined in \eqref{eq:app_gmm_prior}--\eqref{eq:app_linear_obs} and Appendix~\ref{app:random_linear_details}. Therefore \eqref{eq:app_component_mean}--\eqref{eq:app_exact_posterior} give the exact posterior used to generate reference samples for that benchmark.

For the inpainting benchmark of Appendix~\ref{app:inpainting_details}, we identify
\[
\mathbf{L} = \mathbf{M},
\qquad
\mathbf{R} = \sigma^2 \mathbf{I}_K,
\]
with $\mathbf{M}$ and $\sigma$ defined in Appendix~\ref{app:inpainting_details}. The same formulas therefore give the exact posterior used to generate reference samples for the inpainting problem.

\subsection{Optimal path-space control for the linear benchmarks}

For the reparameterized base process \eqref{eq:uncontrolled_reparam_dynamics}, the optimal control solving the path-space posterior sampling problem is given by
\begin{equation}
\bsu^\star(\bsx_t,t)
= \bsB(t)^\top \nabla_{\bsx} \log g(\bsx_t,t),
\qquad
g(\bsx_t,t)
\triangleq
\mathbb{E}_{P}\!\left[dQ^\star /dP\,\middle|\,\bsx_t\right],
\label{eq:app_opt_control_general}
\end{equation}
as shown in \eqref{eq:analytical_optimal_control}. Since
\[
\frac{dQ^\star}{dP}(\bsx_{-1:T})
=
\frac{\pi_{\mathrm{like}}(\bsy \mid \bsx_T)}{Z},
\]
the normalizing constant $Z$ is irrelevant after taking the gradient of the logarithm, and we may equivalently write
\begin{equation}
g(\bsx_t,t)
\propto
\mathbb{E}_{P}\!\left[\pi_{\mathrm{like}}(\bsy \mid \bsx_T)\,\middle|\,\bsx_t\right].
\label{eq:app_g_likelihood}
\end{equation}

Under the Gaussian perturbation kernel and denoising formulas of
Appendix~\ref{app:closed_form_scores}, the conditional law of $\bsx_T$ given
$\bsx_t=\bsx$ is a Gaussian mixture,
\begin{equation}
p(\bsx_T \mid \bsx_t=\bsx)
=
\sum_{i=1}^{N_m}
r_{t,i}(\bsx)\,
\mathcal{N}\bigl(\bsx_T;\mathbf{m}_{t,i}(\bsx),\mathbf{P}_{t,i}\bigr),
\label{eq:app_xt_conditional}
\end{equation}
where $r_{t,i}(\bsx)$, $\mathbf{m}_{t,i}(\bsx)$, and $\mathbf{P}_{t,i}$ are given in Appendix~\ref{app:closed_form_scores}. Substituting \eqref{eq:app_xt_conditional} into \eqref{eq:app_g_likelihood} and evaluating the Gaussian integral componentwise yields
\begin{equation}
g(\bsx,t)
\propto
\sum_{i=1}^{N_m}
r_{t,i}(\bsx)\, \mathcal{N}\!\bigl( \bsy; \mathbf{L}\mathbf{m}_{t,i}(\bsx), \mathbf{L}\mathbf{P}_{t,i}\mathbf{L}^\top + \mathbf{R}
\bigr).
\label{eq:app_g_closed_form}
\end{equation}
Therefore the optimal control can be written explicitly as
\begin{equation}
\bsu^\star(\bsx,t)
=
\bsB(t)^\top \nabla_{\bsx} \log \left[
\sum_{i=1}^{N_m} r_{t,i}(\bsx)\, \mathcal{N}\!\bigl( \bsy; \mathbf{L}\mathbf{m}_{t,i}(\bsx), \mathbf{L}\mathbf{P}_{t,i}\mathbf{L}^\top + \mathbf{R} \bigr)
\right].
\label{eq:app_opt_control_linear}
\end{equation}
For the random linear sensing benchmark, this formula is used with
\[
\mathbf{L}=\mathbf{H},
\qquad
\mathbf{R}=\boldsymbol{\Gamma},
\]
whereas for the inpainting benchmark it is used with
\[
\mathbf{L}=\mathbf{M},
\qquad
\mathbf{R}=\sigma^2 \mathbf{I}_K.
\]
In both linear benchmarks, the exact posterior and the corresponding optimal path-space control are available as explicit finite Gaussian-mixture expressions. In practice, \eqref{eq:app_opt_control_linear} can be evaluated directly, with the gradient computed either analytically or by automatic differentiation.

\section{Implementation of Comparison Algorithms}
\label{app:comp_alg_imp}

In this appendix, we discuss the implementation of the DPS, DAPS, and $\Pi$GDM algorithms used in the numerical studies. 

DPS uses the control given by \eqref{eq:dps_control}, with $\mathbb{E}_P[\bsx_T \mid \bsx_t]$ computed analytically using Tweedie's formula \cite{efron2011tweedie}. In practical implementations, this control is typically scaled by a user-specified parameter $\xi > 0$. The scaling parameter $\xi$ is introduced to modulate the strength of the likelihood-based correction relative to the base diffusion dynamics, since the approximation underlying DPS does not in general determine the appropriate magnitude of the control. In our implementation, we tune $\xi$ to achieve stable and competitive performance.

The $\Pi$GDM algorithm only applies to linear inverse problems with additive Gaussian noise, as well as non-linear problems where the non-linear operator admits a pseudo-inverse. We therefore only implement the $\Pi$GDM algorithm for the random linear sensing and inpainting problems. For an inverse problem with forward operator $\mathbf{L}$ and noise covariance $\boldsymbol{R}$, the $\Pi$GDM control is 
$$
\bsu(\bsx_t, t) = \bsB(t)^T \nabla_{\bsx} \log \mathcal{N}(\bsx; \mathbf{L} \, \mathbb{E}_P[\bsx_T \mid \bsx_t], \boldsymbol{R} + \sigma_t^2 \mathbf{L}\mathbf{L}^T),
$$
where here $\sigma_t^2$ is the marginal variance of $p_{t \mid T}(\bsx_t \mid \bsx_T)$ and is given by the forward diffusion SDE in \eqref{eq:forward_diffusion_sde}; see Appendix \ref{app:closed_form_scores} for details. Note that this is the same as the DPS control for linear-Gaussian inverse problems, but with an additional additive covariance term given by $\sigma_t^2 \mathbf{L}\mathbf{L}^T$. There are no associated hyperparameters. 

In DAPS, the machinery of diffusion models is used to transform the original posterior sampling problem into a series of posterior sampling problems for an inverse problem with a Gaussian prior. In particular, given an iterate $\bsx_t$, a sub-problem is constructed using the ground-truth likelihood function $\pi_{\mathrm{like}}$ and a Gaussian prior with white noise covariance and mean given by an estimate of $\mathbb{E}_P[\bsx_T \mid \bsx_t]$. After obtaining a sample $\hat{\bsx}_T(\bsx_t)$ from this sub-problem (e.g., using Langevin dynamics), the subsequent iterate $\bsx_{t'}$, $t' > t$, is obtained by exploiting the knowledge of $p_{t' \mid T}(\bsx_{t'} \mid \hat{\bsx}_T(\bsx_t))$ available in the diffusion modeling context (see Appendix \ref{app:closed_form_scores}). 
 
We implement DAPS using the hyperparameter settings in the original paper \cite{zhang2025improving} as a starting point, with the number of Langevin steps tuned separately for each of the four experiments in our benchmark. For the two linear inverse problems, we adopt the setup used for the inpainting experiments in \cite{zhang2025improving}; for the two nonlinear problems, we adopt the setup used for the phase retrieval experiments. Across all experiments, we set the decay rate in the Langevin step-size scheduler to zero. These adjustments were made to obtain strong DAPS performance in our problem setting.

\section{Optimization Diagnostics}
\label{app:optimization_details}
This section reports diagnostic statistics for the trust-region outer loop used by the TR sampler. For each independent run, we record the realized number of outer iterations,
\[
I_{\mathrm{used}}
=
\begin{cases}
\min\{i:\lambda_i < 0.1\}, & \text{if early termination occurs},\\
I_{\max}, & \text{otherwise},
\end{cases}
\]
where $I_{\max}=300$. We report the mean and standard deviation of $I_{\mathrm{used}}$ across trials, together with the percentage of runs that terminate before reaching the iteration cap.

  \begin{table}[h]
  \centering
  \begin{tabular}{lccc}
  \toprule
  Problem & $\epsilon$ & $I_{\mathrm{used}}$ (mean $\pm$ std) & Early Terminated \\
  \midrule
  Random linear sensing & 0.1  & $221.0 \pm 85.2$  & $60\%$ $(6/10)$ \\
  Inpainting & 0.1  & $179.9 \pm 119.0$ & $60\%$ $(6/10)$ \\
  X-ray tomography & 0.01 & $300.0 \pm 0.0$   & $0\%$ $(0/10)$ \\
  Phase retrieval & 0.1  & $300.0 \pm 0.0$   & $0\%$ $(0/10)$ \\
  \bottomrule
  \end{tabular}
  \caption{Optimization statistics for the trust-region outer loop used by the TR sampler.}
  \label{tab:optimization_details}
  \end{table}

\section{Evaluation Metrics}
\label{app:evaluation_metrics}
This appendix provides additional details on the quantitative metrics summarized in Section~\ref{sec:evaluation_methods}.

\subsection{Posterior mean and covariance errors}
We assess agreement in low-order statistics by comparing the posterior mean and covariance estimated from the generated samples with those obtained from the reference samples. Let $\hat{\boldsymbol{\mu}}$ and $\boldsymbol{\mu}_{\mathrm{ref}}$ denote the posterior means, and let $\hat{\boldsymbol{\Sigma}}$ and $\boldsymbol{\Sigma}_{\mathrm{ref}}$ denote the corresponding posterior covariance matrices. The posterior mean error is measured using the Euclidean two-norm $
\|\hat{\boldsymbol{\mu}} - \boldsymbol{\mu}_{\mathrm{ref}}\|_2$.
To quantify discrepancies in second-order structure, we measure covariance error using the Fisher--Rao metric between covariance matrices $
d_{\mathrm{FR}}(\hat{\boldsymbol{\Sigma}}, \boldsymbol{\Sigma}_{\mathrm{ref}})
=
\left\|
\log\!\left(
\boldsymbol{\Sigma}_{\mathrm{ref}}^{-1/2}
\hat{\boldsymbol{\Sigma}}
\boldsymbol{\Sigma}_{\mathrm{ref}}^{-1/2}
\right)
\right\|_F,$
where $\|\cdot\|_F$ denotes the Frobenius norm.

\subsection{Maximum mean discrepancy (MMD)}
We also measure global distributional agreement using the maximum mean discrepancy (MMD) \cite{gretton2012kernel}, which compares probability measures through their kernel mean embeddings in a reproducing kernel Hilbert space (RKHS). For two distributions $\pi_1$ and $\pi_2$, the MMD is defined by $
\mathrm{MMD}(\pi_1, \pi_2)
=
\sup_{f \in \mathcal{F}}
\left(
\mathbb{E}_{\pi_1}[f(\bsx)]
-
\mathbb{E}_{\pi_2}[f(\bsx)]
\right),$
where $\mathcal{F}$ denotes the unit ball of the RKHS associated with a positive-definite kernel $k$. Following \cite{scopecrafts2025benchmarking}, we use a multi-scale Gaussian kernel of the form
$
k(\bsx_1, \bsx_2)
=
\sum_{i=1}^{N_b}
\exp\!\left(
-\frac{\|\bsx_1 - \bsx_2\|_2^2}{\sigma_i}
\right),
$
with $N_b = 5$ and bandwidths $\sigma_i
=
\bar{\sigma}\, 2^{\,i - \lceil N_b/2\rceil},$
where $\bar{\sigma}$ is set to the average squared Euclidean distance between reference samples. The MMD is computed using its standard empirical estimator based on the generated and reference samples.

\subsection{Central moment discrepancy (CMD)}
To assess agreement beyond second-order structure, we employ the central moment discrepancy (CMD) metric \cite{zellinger2017central}, which compares distributions through their low-order central moments. For two distributions $\pi_1$ and $\pi_2$, the CMD is defined by
$
\mathrm{CMD}(\pi_1, \pi_2)
=
\frac{1}{\alpha}
\left\|
\mathbb{E}_{\pi_1}[\bsx] - \mathbb{E}_{\pi_2}[\bsx]
\right\|_2
+
\sum_{k=2}^{\infty}
\frac{1}{\alpha^k}
\left\|
c_k(\pi_1) - c_k(\pi_2)
\right\|_2,$
where $c_k(\cdot)$ denotes the vector of $k$th-order central moments and $\alpha > 0$ is a decay parameter. In practice, we truncate the infinite series at order $K=5$, and estimate all moments empirically from the generated and reference samples. Following \cite{scopecrafts2025benchmarking}, we set $
\alpha = 4\hat{\eta}_{\max}, $
where $\hat{\eta}_{\max}$ is an empirical estimate of
$
\eta_{\max}
=
\mathbb{E}_{\bsy}
\left[
\|\boldsymbol{\eta}(\bsy)\|_{\infty}
\right],
$
and $\boldsymbol{\eta}(\bsy)\in\mathbb{R}^D$ denotes the componentwise posterior standard deviation corresponding to the measurement $\bsy$.

\subsection{Control approximation error}
For the linear benchmark problems described in Sections~\ref{prob:random-sensing} and~\ref{prob:inpainting}, the optimal path-space control $\bsu^\star(\bsx,t)$ is available in closed form; see Appendix~\ref{app:linear_posteriors_controls}. This allows us to directly compare the controls induced by different posterior sampling methods with the optimal feedback law. 
Motivated by the sampling error bound in Theorem~\ref{theorem:girsanov_bound}, we report a finite-sample control discrepancy between each method's induced control and $\bsu^\star$. The theorem bounds posterior sampling error by a path-integrated squared control mismatch averaged under the approximate controlled process. The diagnostic used here differs in two ways: it uses the unsquared $L^2$ norm and evaluates all methods on a common collection of reference trajectories generated under the optimal control. This makes the metric directly comparable across methods.
Let $\{\bsx_{t_n}^{(j),\star}\}_{j=1,\dots,M;\,n=0,\dots,N}$ denote reference trajectories generated under $\bsu^\star$, where $\{t_n\}_{n=0}^{N}$ is the temporal discretization used for simulation.

For a control $\bsu_{\mathrm{alg}}$ associated with a given sampling method, we define $
\mathcal{E}_{\mathrm{ctrl}}
=
\frac{1}{T\cdot M}
\sum_{j=1}^{M}
\sum_{n=0}^{N - 1}
\left\|
\bsu_{\mathrm{alg}}(\bsx_{t_n}^{(j)},t_n)
-
\bsu^\star(\bsx_{t_n}^{(j)},t_n) 
\right\|_2 \; \Delta t_n,
$
where $\Delta t_n \triangleq t_{n+1} - t_n$. This metric measures the mean pointwise discrepancy between the control associated with a given method and the optimal control over states visited by the optimal controlled process. It is reported only for the linear-Gaussian benchmark problems, where $\bsu^\star$ is available analytically.

\subsection{Sampling efficiency and importance-sampled posterior expectations}
For methods that admit a path-space change-of-measure representation, we assess statistical efficiency using the normalized effective sample size induced by the corresponding importance weights. Let $\{w^{(j)}\}_{j=1}^N$ denote the path-space importance weights defined in \eqref{eq:path_space_weight}, evaluated on a collection of $N$ sampled trajectories. We define the effective sample size by $
\mathrm{ESS}
=
\frac{\left(\sum_{j=1}^N w^{(j)}\right)^2}{\sum_{j=1}^N (w^{(j)})^2},
$
and the normalized effective sample size by
$
\mathrm{NESS}
=
\frac{\mathrm{ESS}}{N}.
$
This quantity measures the degree of weight degeneracy and therefore serves as a diagnostic of the practical efficiency of the path-space importance correction.

In addition to sampling efficiency, we report a self-normalized importance sampling estimate of the posterior mean. Given samples $\{\bsx_T^{(j), \bsu}\}_{j=1}^N$ generated under a possibly suboptimal control $\bsu$, we form $\hat{\boldsymbol{\mu}}_{\mathrm{IS}}
\triangleq
\left (\sum_{j=1}^N w^{(j)} \, \bsx_T^{(j), \bsu} \right) / 
    \left ( \sum_{j=1}^N w^{(j)} \right)$
where $w^{(j)}$ is the path-space importance weight associated with the $j$th sampled trajectory. 
Reporting the reweighted mean error $\|\hat{\boldsymbol{\mu}}_{\mathrm{IS}} - \boldsymbol{\mu}_{\mathrm{ref}}\|_2$ therefore quantifies the extent to which path-space importance weighting can correct bias in approximate posterior samplers. To make this correction explicit, we also report the percentage reduction in posterior mean error. For each independent trial $i$, we compute $
\mathrm{Red}_{\mathrm{mean}}^{(i)} \triangleq 100 \left( 1 - \frac{
\|\hat{\boldsymbol{\mu}}_{\mathrm{IS}}^{(i)}-\boldsymbol{\mu}_{\mathrm{ref}}^{(i)}\|_2
}{
\|\hat{\boldsymbol{\mu}}^{(i)}-\boldsymbol{\mu}_{\mathrm{ref}}^{(i)}\|_2
}
\right),$
where $
\hat{\boldsymbol{\mu}}^{(i)}
\triangleq
\frac{1}{N}\sum_{j=1}^N \bsx_{T,i}^{(j),\bsu} $
is the unweighted sample mean for trial $i$. We report the sample mean and standard deviation of $\mathrm{Red}_{\mathrm{mean}}^{(i)}$ across trials. Positive values indicate that reweighting reduces the posterior mean error, while negative values indicate that reweighting increases it.

Both NESS and the importance reweighting strategy require a sampler that induces an absolutely continuous change of measure with respect to the reference diffusion. Consequently, these quantities are reported only for DPS, $\Pi$GDM, and the proposed trust-region method, but not for DAPS, which does not define a single controlled diffusion path measure and therefore does not yield tractable Radon--Nikodym weights of the form in Theorem~\ref{theorem:reweighting}.

\section{Trust-Region Algorithm Ablation Study}
\label{app:ablation}

In this work, the trust region approach introduced in \cite{blessing2025trust} is used to solve the diffusion posterior sampling problem. In this appendix, we present the results of an ablation study designed to investigate the role of the trust region in stabilizing the optimization of the control. 

\begin{figure}
    \centering
    \includegraphics[width=0.48\linewidth]{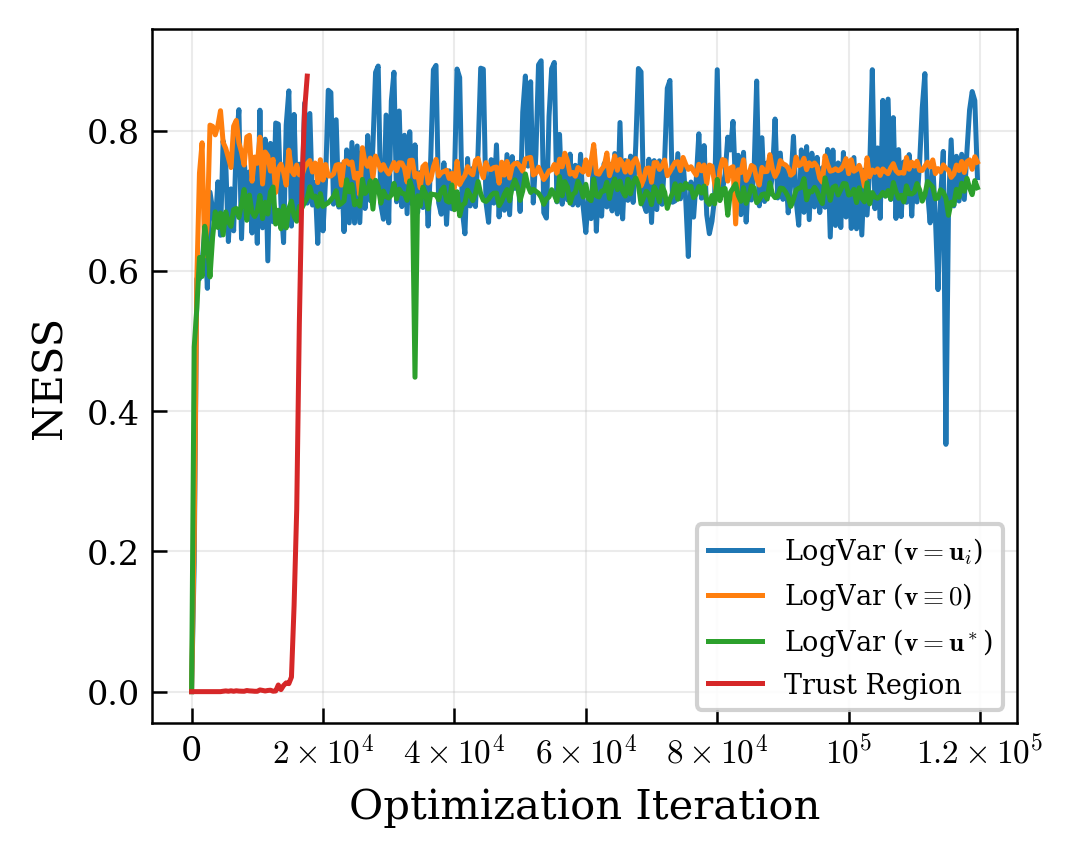}     \includegraphics[width=0.48\linewidth]{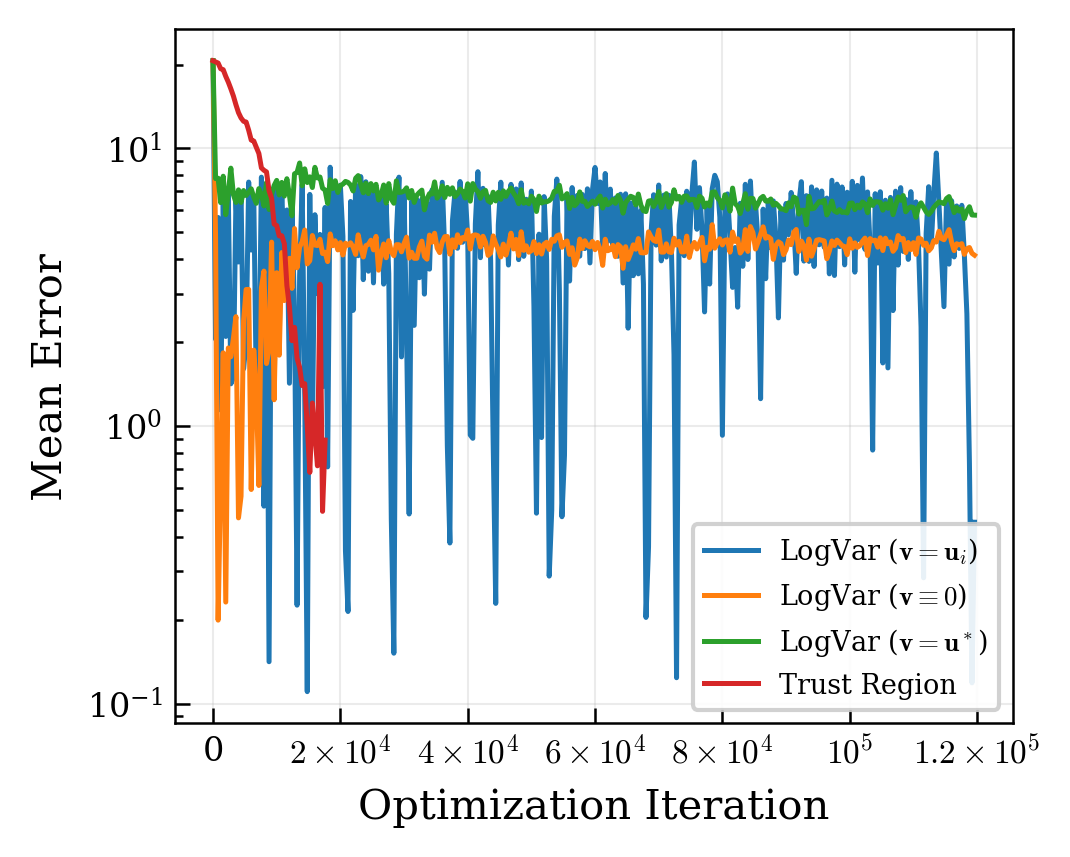} \\
    \vspace{-.1cm}
    \hspace{1cm} (a) \hspace{5.5cm} (b)
    \caption{(a) Normalized effective sample size (NESS) and (b) error in the posterior mean estimate as a function of the optimization iteration for the trust region approach, as well as several IDO variants that do not use trust regions. Note that the trust region approach terminates early due to the termination criteria discussed in Section \ref{sec:num_studies}. }
    \label{fig:ablation}
\end{figure}

The trust region approach with log-variance divergence solves the series of sub-problems given by \eqref{eq:logvar_diverge} and \eqref{eq:uiplus1_measure_exp}, with $\bsv = \bsu_i$ used as the off-policy control in the log-variance objective, and $\lambda_i$ given by solving the dual problem. In our ablations, we use the log-variance divergence to directly to solve the posterior sampling problem without trust regions; this is equivalent to \eqref{eq:logvar_diverge} and \eqref{eq:uiplus1_measure_exp} with  $\lambda_i = 0$ for all $i$. For the off-policy control $\bsv$, which generates the trajectories used for control optimization, we consider three different options: $\bsv = \bsu_i$, as in the trust region approach, $\bsv \equiv \mathbf{0}$, and $\bsv = \bsu^*$, where $\bsu^*$ is the optimal control for the problem and is assumed to be known analytically. We refer to these approaches as LogVar ($\bsv = \bsu_i$), LogVar ($\bsv \equiv \mathbf{0}$), and LogVar ($\bsv = \bsu^*$), respectively. 

To illustrate the effect of the trust region on optimization behavior, we conduct the ablation study on a representative posterior sampling trial of the inpainting problem. The implementations of the three ablation approaches used the same hyperparameters as used for the trust region approach. 

Figure \ref{fig:ablation} shows the normalized effective sample size (NESS) and posterior mean error metrics, as described in Section \ref{sec:num_studies}, as a function of the optimization iteration. As can be seen, the LogVar ($\bsv \equiv \mathbf{0}$), and LogVar ($\bsv = \bsu^*$) approaches both lead to sub-optimal performance when compared with the trust region approach. LogVar ($\bsv = \bsu_i$), which is an on-policy version of the log-variance objective, is able to provide competitive performance. However, the use of the trust region significantly improves the stability of the optimization process. We also note that the problem setting considered here is linear and relatively low-dimensional; in \cite{blessing2025trust}, the authors report that the trust region approach scales significantly better to high dimensional problem settings when compared with the LogVar ($\bsv = \bsu_i$).

\end{document}